\documentclass{article}

\pdfinclusioncopyfonts=1

\usepackage{ifpdf}
\usepackage{fullpage}

\hyphenpenalty=5000
\tolerance=1000

\usepackage[utf8]{inputenc} 
\usepackage[T1]{fontenc}    

\usepackage[square,sort,compress]{natbib}

\usepackage{url}            
\usepackage{booktabs}       
\usepackage{nicefrac}       
\usepackage{microtype}      
\usepackage{graphicx}
\usepackage{caption}
\usepackage{subcaption}
\usepackage{amsfonts,mathtools}
\usepackage{algorithmic}
\usepackage[]{algorithm}
\usepackage{array}
\usepackage{bm}
\usepackage{mathrsfs}
\usepackage{xcolor}
\usepackage{float}
\usepackage{epstopdf}
\usepackage{amssymb}
\usepackage{enumitem}
\usepackage{amsmath}
\usepackage{multirow}
\usepackage{hyperref}
\hypersetup{colorlinks=true,citecolor=blue,urlcolor=black,linkcolor=black,bookmarks=false,hypertexnames=false}

\usepackage[capitalize,noabbrev]{cleveref}

\newcommand{\indep}{\perp \!\!\! \perp}

\usepackage{amsthm}

\interdisplaylinepenalty=2500

\def \E {{\mathcal{E}}}

\makeatletter
\newcommand*\rel@kern[1]{\kern#1\dimexpr\macc@kerna}

\newcommand*\widebar[1]{%
  \kern 0.18em
  \hbox{%
    \vbox{%
      \hrule height 0.5pt 
      \kern0.35ex
      \hbox{%
        \kern-0.18em
        \ensuremath{#1}%
        \kern-0.18em
      }%
    }%
  }%
  \kern 0.18em
}

\theoremstyle{plain}
\newtheorem{theorem}{Theorem}[section]
\newtheorem{proposition}[theorem]{Proposition}
\newtheorem{lemma}[theorem]{Lemma}
\newtheorem{corollary}[theorem]{Corollary}
\theoremstyle{definition}

\newtheorem{assumption}[theorem]{Assumption}
\theoremstyle{remark}
\newtheorem{remark}[theorem]{Remark}

\newcommand{\trace}{\mathop{\mathrm{tr}}}
\newcommand{\norm}[1]{\ensuremath{\left\|#1\right\|}}	
\providecommand{\tr}[1]{\text{tr}\left(#1\right)}
\providecommand{\norm}[1]{\left \| #1 \right \|}
\providecommand{\vect}[1]{\mathrm{vec}\left(#1\right)}

\newcommand{\normI}[1]{{\left\vert\kern-0.25ex\left\vert\kern-0.25ex\left\vert #1 
    \right\vert\kern-0.25ex\right\vert\kern-0.25ex\right\vert}}

\begin{document}

\title{Adaptive Estimation of Graphical Models under Total Positivity}

	\author{ Jiaxi Ying 
        \thanks{\noindent Department of Mathematics, The Hong Kong University
of Science and Technology, Clear Water Bay, Hong Kong SAR, and HKUST Shenzhen-Hong Kong Collaborative Innovation Research Institute, Shenzhen, China; e-mail:
\href{mailto:jx.ying@connect.ust.hk}{\texttt{jx.ying@connect.ust.hk}}.}
	\and
	Jos\'{e} Vin\'{i}cius de M. Cardoso
	\thanks{\noindent Department of Electronic and Computer Engineering, The Hong Kong University of Science and Technology, Clear Water Bay, Hong Kong SAR; e-mail: \href{mailto:jvdmc@connect.ust.hk}{\texttt{jvdmc@connect.ust.hk}}.}
   \and
	Daniel P. Palomar
	\thanks{\noindent Department of Electronic and Computer Engineering, Department of Industrial Engineering and Data Analytics, The Hong Kong University of Science and Technology,
 Clear Water Bay, Hong Kong SAR; e-mail: \href{mailto:palomar@ust.hk}{\texttt{palomar@ust.hk}}.}
	}
\date{}

\maketitle

\begin{abstract}
We consider the problem of estimating (diagonally dominant) \textit{M}-matrices as precision matrices in Gaussian graphical models. These models exhibit intriguing properties, such as the existence of the maximum likelihood estimator with merely two observations for \textit{M}-matrices \citep{lauritzen2019maximum,slawski2015estimation} and even one observation for diagonally dominant \textit{M}-matrices \citep{truell2021maximum}. We propose an adaptive multiple-stage estimation method that refines the estimate by solving a weighted $\ell_1$-regularized problem at each stage. Furthermore, we develop a unified framework based on the gradient projection method to solve the regularized problem, incorporating distinct projections to handle the constraints of \textit{M}-matrices and diagonally dominant \textit{M}-matrices. A theoretical analysis of the estimation error is provided. Our method outperforms state-of-the-art methods in precision matrix estimation and graph edge identification, as evidenced by synthetic and financial time-series data sets.
\end{abstract}

\section{Introduction}\label{sec-1}
Total positivity, as a strong form of positive dependence, has found its applications in various fields such as factor analysis in psychometrics \citep{lauritzen2019maximum}, taxonomic reasoning \cite{lake2010discovering}, graph signal processing \citep{egilmez2017graph}, and financial markets \citep{Agrawal20}. For instance, stock markets exemplify total positivity, as stocks generally exhibit positive dependence due to the influence of market factors \citep{Agrawal20}. A distribution that satisfies total positivity is also known as multivariate totally positive of order two (MTP\textsubscript{2}). For a more in-depth understanding, we refer the reader to \citet{fallat2017total}. In the Gaussian context, the concept of MTP\textsubscript{2} becomes more straightforward: a Gaussian distribution is MTP\textsubscript{2} if and only if the precision matrix (\textit{i.e.}, inverse covariance matrix) is an \textit{M}-matrix \citep{karlin1983m}. In this paper, our focus lies on estimating (diagonally dominant) \textit{M}-matrices as precision matrices in Gaussian graphical models.

Estimation of precision matrices in general Gaussian graphical models have been extensively studied in the literature. A well-known method, known as graphical lasso \citep{banerjee2008model,d2008first}, is formulated as an $\ell_1$-regularized Gaussian maximum likelihood estimation. Numerous extensions of graphical lasso have been explored \citep{friedman2008sparse,ravikumar2011high,lam2009sparsistency,fan2014strong,loh2015regularized,loh2017support,sun2018graphical,hsieh2014quic}.  Other notable, though not exhaustive, works include graphical Dantzig \citep{yuan2010high}, CLIME \citep{cai2011constrained}, and TIGER \citep{liu2017tiger}. Recent research has illuminated intriguing properties of incorporating additional \textit{M}-matrix constraint, proving advantageous in the realms of high-dimensional statistics and signal processing on graphs.

Recent studies \citep{lauritzen2019maximum,slawski2015estimation,lauritzen2021total} show that the \textit{M}-matrix constraint significantly reduces the sample size required for the maximum likelihood estimator (MLE) to exist. Specifically, \citet{lauritzen2019maximum,slawski2015estimation} established that the MLE under the \textit{M}-matrix constraint exists if the sample size $n \geq 2$, regardless of the underlying dimension $p$, substantially reducing the $n \geq p$ requirement for general Gaussian distributions. \citet{soloff2020covariance} revealed that the \textit{M}-matrix constraint, serving as implicit regularization, is vital for achieving the minimax rate of $\sqrt{\frac{\log p}{n}}$; without this constraint, the rate cannot exceed $\sqrt{\frac{p}{n}}$. Remarkably, \citet{lauritzen2021total} proved that under the MTP\textsubscript{2} constraint, the MLE in binary distributions may exist with only $n = p$ observations, in contrast to the $2^p$ observations required in the absence of this constraint.

The diagonally dominant \textit{M}-matrix constraint is increasingly drawing interest in Gaussian models. In these distributions, diagonally dominant \textit{M}-matrices, as precision matrices, exhibit a property called log-$L^\natural$-concave (LLC) \citep{robeva2021maximum} or strong MTP\textsubscript{2} \citep{rottger2021total}. This property is essential for analyzing positive dependence in multivariate Pareto distributions. A recent study \citep{truell2021maximum} revealed that Gaussian models under LLC/strong MTP\textsubscript{2} act as a convex relaxation for Brownian motion tree models—a class of Gaussian models on trees—and showed that its MLE exists almost surely when $n = 1$. This finding was established by connecting Laplacian constrained Gaussian Markov random fields (LGMRF) \citep{ying2020nonconvex,cardoso2021graphical,cardoso2022learning,kumar2019unified} and leveraging results on MLE existence in LGMRF with a single observation \citep{ying2021minimax}.

The estimation of MTP\textsubscript{2} graphical models has become a growing area of interest in the field of signal processing on graphs \citep{shuman2013emerging,ortega2018graph,dong2019learning}, which focuses on handling data defined on irregular graph domains. Precision matrices in MTP\textsubscript{2} graphical models belong to the class of generalized Laplacian matrices \citep{biyikoglu2007laplacian}, each of which is a symmetric matrix with non-positive off-diagonal entries that are associated with a graph. The eigenvectors of a generalized Laplacian define a graph Fourier transform \citep{shuman2013emerging}, supported by the discrete nodal domain theorem \citep{BRIANDAVIES200151} that an eigenvector corresponding to a larger eigenvalue is associated with a higher frequency. We note that such a spectral property does not hold for general positive definite matrices.

One approach to estimating MTP\textsubscript{2} graphical models is MLE \citep{lauritzen2019maximum,slawski2015estimation}, which implicitly promotes sparsity using the \textit{M}-matrix constraint. However, it cannot adjust sparsity levels to specific values, a feature often desired in practice. The $\ell_1$-norm regularized MLE \citep{egilmez2017graph,pavez2016generalized,cai2021fast,deng2020fast} offers improved sparsity control and can be solved using techniques like block coordinate descent \citep{egilmez2017graph, pavez2016generalized}, proximal point algorithm \citep{deng2020fast}, and projected Newton-like methods \citep{cai2021fast}. Notably, \citet{pavez2018learning} found that graph components and some zero patterns can be determined through thresholded sample covariance matrices. Estimating diagonally dominant \textit{M}-matrices as precision matrices is explored in \citet{egilmez2017graph}. Additionally, \citet{wang2020learning} proposed a graph structure learning algorithm based on conditional independence testing, eliminating the need for adjustment of tuning parameters.

This paper focuses on estimating (diagonally dominant) \textit{M}-matrices as precision matrices in MTP\textsubscript{2} Gaussian graphical models. The main contributions of this paper are threefold:

\begin{itemize}
\itemsep0em 
\item We propose an adaptive multiple-stage estimation method that refines the estimate by solving a weighted $\ell_1$-regularized problem at each stage. Then we develop a unified framework based on the gradient projection method to solve the regularized problem, equipped with distinct projections to handle the constraints of \textit{M}-matrices and diagonally dominant \textit{M}-matrices.

\item We provide a thorough theoretical analysis of the estimation error for our method, which comprises two components: optimization error and statistical error. The optimization error decays at a linear rate, highlighting the progressive refinement of the estimate across iterative stages, whereas the statistical error captures the intrinsic statistical rate.

\item Experiments on synthetic and financial time-series data demonstrate that our method outperforms state-of-the-art methods, concurrently achieving lower precision matrix estimation error and higher graph edge selection accuracy. Additionally, we observe that incorporating \textit{M}-matrix constraint enhances the robustness regarding the selection of the regularization parameter.
\end{itemize}

Lower and upper case bold letters denote vectors and matrices, respectively. $[p]$ denotes the set $\{1, \ldots, p\}$. $\norm{\bm x}_{\max} = \max_{i} |x_i|$. $\mathbb{S}_{++}^p$ denotes the set of $p \times p$ positive definite matrices. $\lambda_{\max}( \bm X)$ and $\lambda_{\min}( \bm X)$ denote the maximum and minimum eigenvalues of $\bm X$, respectively. For matrix $\bm X$ and set $S$, $\bm X_S$ denotes a vector with dimension $\left| S\right|$, containing the entries of $\bm X$ indexed by $S$.

\section{Preliminaries and Problem Formulation}\label{sec-2}

We first introduce preliminaries about MTP\textsubscript{2} Gaussian graphical models, then present the problem formulation.

\subsection{MTP\textsubscript{2} Gaussian Graphical Models}\label{sec-Preliminaries}

Let $\mathcal{G}=(\mathcal{V},\mathcal{E})$ be an undirected graph with the set of nodes $\mathcal{V}$ and the set of edges $\mathcal{E}$. Associating a random vector $\bm x \sim \mathcal{N}(\bm 0, \bm \Sigma^\star)$ with graph $\mathcal{G}$ forms a Gaussian graphical model that satisfies the following properties:
\begin{equation*}
\begin{aligned}
	&\Theta^\star_{ij} \neq 0	\iff \{i,j\} \in \E \;\forall \; i\neq j, \\
	&\Theta^\star_{ij}=0 \iff x_i \indep x_j \; \vert \;  \bm x_{ [p]\backslash\{ i, j \}},
	\end{aligned}
\end{equation*}
where $x_i \indep x_j \; \vert \;  \bm x_{ [p]\backslash\{ i, j \}}$ indicates that $x_i$ is conditionally independent of $x_j$ given the other random variables. Consequently, graph $\mathcal{G}$ characterizes the sparsity pattern of the precision matrix $\bm \Theta^\star$, where missing edges represent conditional independence between random variables. Estimating the precision matrix is a crucial task in Gaussian graphical models.

We concentrate on estimating precision matrices for random variables following an MTP\textsubscript{2} Gaussian distribution. A multivariate Gaussian distribution with a positive definite covariance matrix $\bm \Sigma^\star$ is MTP\textsubscript{2} if and only if its precision matrix $\bm \Theta^\star$ is a symmetric \textit{M}-matrix \citep{karlin1983m}, \textit{i.e.}, $\Theta^\star_{ij} \leq 0$ for all $i \neq j$. This is equivalent to stating that all partial correlations are nonnegative, where the partial correlation between any two variables $x_i$ and $x_j$ conditioned on all other variables equals $- \Theta^\star_{ij} / \sqrt{ \Theta^\star_{ii} \Theta^\star_{jj}}$.

\subsection{Problem Formulation}\label{sec-PR}
 
Consider a zero-mean random vector $\bm x = \left( x_1, \ldots, x_p \right)$ following a Gaussian distribution $\bm x \sim \mathcal{N}(\bm 0, \bm \Sigma^\star)$. Our goal is to estimate the precision matrix $\bm \Theta^\star := (\bm \Sigma^\star)^{-1}$, which is a (diagonally dominant) \textit{M}-matrix, given $n$ independent and identically distributed observations $\bm x^{(1)}, \ldots, \bm x^{(n)} \in \mathbb{R}^p$. 
 
To estimate the precision matrix under MTP\textsubscript{2} Gaussian distributions, we can solve the penalized Gaussian maximum likelihood estimation problem as follows:
\begin{equation}\label{MLE}
\underset{\bm \Theta \in \mathcal{S}}{\mathsf{minimize}} \,  - \log \det (\bm \Theta) + \trace \big(\bm \Theta \widehat{\bm \Sigma}\big) + \sum_{i \neq j} h_{\lambda} \left( \left| \Theta_{ij} \right| \right),
\end{equation}
where $h_{\lambda}$ is a sparsity penalty function, such as the $\ell_1$-norm considered in \citet{egilmez2017graph}, $\widehat{\bm \Sigma} = \frac{1}{n} \sum_{i=1}^n \bm x^{(i)} (\bm x^{(i)})^\top$ is the sample covariance matrix, and $\mathcal{S}$ is a feasible set. We consider two cases for $\mathcal{S}$:
\begin{equation}\label{set-M}
\mathcal{M}^p = \left\lbrace  \bm \Theta \in \mathbb{S}_{++}^p \, | \, \Theta_{ij} = \Theta_{ji} \leq 0, \forall i \neq j \right\rbrace,
\end{equation}
and
\begin{equation}\label{set-MD}
\mathcal{M}_{D}^p   =  \left\lbrace  \bm \Theta \in \mathbb{S}_{++}^p | \Theta_{ij} = \Theta_{ji} \leq 0, \forall i \neq j, \bm \Theta \bm 1 \geq \bm 0  \right\rbrace, 
\end{equation}
where $\mathcal{M}^p$ represents the set of all symmetric positive definite \textit{M}-matrices, and $\mathcal{M}_{D}^p$ adds constraint $\bm \Theta \bm 1 \geq \bm 0$ compared to $\mathcal{M}^p$, leading $\bm \Theta$ to be a diagonally dominant \textit{M}-matrix. We propose an adaptive method for estimating \textit{M}-matrices and diagonally dominant \textit{M}-matrices.

\section{Proposed Method}\label{sec-3}
In this section, we first propose an adaptive multiple-stage estimation method, then develop a unified framework based on projected gradient descent for solving each stage. We incorporate distinct projections for estimating \textit{M}-matrices and diagonally dominant \textit{M}-matrices as precision matrices.

\subsection{Adaptive Estimation}\label{sec-PE}

 Our method comprises multiple stages, where each stage refines the previous stage's estimate by solving a weighted $\ell_1$-regularized problem. Specifically, at the $k$-th stage, we obtain the estimate $\widehat{\bm \Theta}^{(k)}$ by solving the following problem:
\begin{equation}\label{sec-stage}
\underset{\bm \Theta \in \mathcal{S}}{\mathsf{minimize}} \,  - \log \det (\bm \Theta) + \trace \big(\bm \Theta \widehat{\bm \Sigma} \big) + \sum_{i \neq j}  \lambda_{ij}^{(k)}  \left| \Theta_{ij} \right|,
\end{equation} 
where each $\lambda_{ij}^{(k)} = p_\lambda ( | \widehat{\Theta}^{(k-1)}_{ij} |)$ with $p_\lambda$ the weight-updating function, and $\widehat{\bm \Theta}^{(k-1)}$ is the estimate obtained at the $(k-1)$-th stage.

When the estimate $\widehat{\bm \Theta}^{(k-1)}$ from the previous stage exhibits a large coefficient for its $(i,j)$ entry, it is reasonable to assign a small $\lambda_{ij}^{(k)}$ in Problem~\eqref{sec-stage}. Therefore, $p_\lambda$ should be monotonically non-increasing. In the initial stage, we set each $\lambda_{ij}^{(1)} = \lambda$, reducing Problem~\eqref{sec-stage} to the well-studied $\ell_1$-regularized estimation \citep{egilmez2017graph,ying2021fast}. Our method refines the $\ell_1$-norm estimate in subsequent stages, offering an improvement over the standard $\ell_1$-norm approach.

By choosing $p_\lambda$ as the derivative of the sparsity penalty function $h_\lambda$ in Problem~\eqref{MLE}, the sequence $\big\lbrace \widehat{\bm \Theta}^{(k)} \big\rbrace_{k\geq 1}$ converges to a stationary point of Problem~\eqref{MLE} with a nonconvex penalty. In this context, our method aligns with the local linear approximation method \citep{zou2008one} and the broader framework of multi-stage convex relaxation \citep{zhang2010analysis}. Nonconvex approaches may encounter issues with sensitivity to the regularization parameter selection, as illustrated in Figure~\ref{fig:sensitivity-line}. However, we demonstrate that incorporating the \textit{M}-matrix constraint considerably enhances the robustness regarding the selection of regularization parameter.

\subsection{Gradient Projection Method}\label{sec:sec3.2}

For each $\bm \Theta \in \mathcal{S}$, where $\mathcal{S} = \mathcal{M}^p$ or $\mathcal{M}_D^p$, all off-diagonal entries are nonpositive, leading to the following representation of Problem \eqref{sec-stage}:
\begin{equation}\label{eq:problem-S}
\underset{\bm \Theta \in \mathcal{S}}{\mathsf{minimize}} \,  - \log \det (\bm \Theta) + \trace \big(\bm \Theta \widehat{\bm \Sigma} \big) - \sum_{i \neq j}  \lambda_{ij}^{(k)} \Theta_{ij}.
\end{equation}
We design a gradient projection method to solve Problem~\eqref{eq:problem-S}. Initially, we perform a gradient descent step, $\bm \Theta_{t} - \eta_t \nabla f ( {\bm \Theta}_{t} )$, with $\nabla f$ representing the gradient of the objective function $f$. As this step may exceed the constraint set $\mathcal{S}$, we subsequently project it back onto $\mathcal{S}$. However, such projection onto $\mathcal{S}$ does not exist due to its nonclosedness, resulting from the nonclosedness of $\mathbb{S}_{++}^p$. Instead, we express the set $\mathcal{S}$ as the intersection of a closed set $\mathcal{S}_d$ and the open set $\mathbb{S}_{++}^p$:
\begin{equation}\label{eq:Sd}
\mathcal{S} = \mathcal{S}_d \cap \mathbb{S}_{++}^p.
\end{equation} 
We handle the constraints of $\mathcal{S}_d$ and $\mathbb{S}_{++}^p$ separately. The closed set $\mathcal{S}_d$ constraint is manageable through projection, leading to the following projected gradient descent:
\begin{equation}\label{pgd}
\bm \Theta_{t+1}  = \mathcal{P}_{\mathcal{S}_d} \big( \bm \Theta_{t} - \eta_t \nabla f ( {\bm \Theta}_{t} ) \big),
\end{equation}
where $\mathcal{P}_{\mathcal{S}_d}$ denotes the projection onto the set $\mathcal{S}_d$ with respect to the Frobenius norm.

The constraint of $\mathbb{S}_{++}$ cannot be managed by projection due to its nonclosedness. One efficient approach to enforce positive definiteness is using a line search procedure \citep{hsieh2014quic}. Specifically, at the $t$-th iteration, we try the step size $\eta_t \in \sigma\left\lbrace \beta^0, \beta^1, \beta^2, \ldots \right\rbrace$, with $\beta \in (0, 1)$ and $\sigma>0$, until we find the smallest $m \in \mathbb{N}$ such that the iterate $\bm \Theta_{t+1}$ in \eqref{pgd}, with $\eta_t = \sigma\beta^m$, ensures $\bm \Theta_{t+1} \in \mathbb{S}_{++}^p$ and satisfies:
\begin{equation}\label{eq:line-search}
f ( \bm \Theta_{t+1} ) \leq f ( \bm \Theta_{t} )  - \alpha \eta_t \| G_{\frac{1}{\eta_t}} (\bm \Theta_t) \|^2_{\mathrm{F}},
\end{equation}
where $\alpha \in (0,1)$, and $G_{\frac{1}{\eta_t}} (\bm \Theta_t)= \frac{1}{\eta_t} ( \bm \Theta_t - \bm \Theta_{t+1} )$. We present our algorithm in Algorithm~\ref{algo-1}, where $\bm \Lambda^{(k)}$ contains all weights $\lambda_{ij}^{(k)}$, and diagonal entries are set to zero.

\begin{theorem}\label{converge-pgd}
The sequence $\big\lbrace \bm \Theta_{t} \big\rbrace_{t \geq 0}$ established by Algorithm~\ref{algo-1} converges to the optimal solution of Problem~\eqref{eq:problem-S}.
\end{theorem}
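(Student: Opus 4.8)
The plan is to treat Problem~\eqref{eq:problem-S} as the minimization of a strictly convex, continuously differentiable function over a convex feasible set, and to show that the Armijo-type projected gradient scheme drives the associated gradient mapping to zero. Write $f(\bm\Theta) = -\log\det(\bm\Theta) + \trace(\bm\Theta\widehat{\bm\Sigma}) - \sum_{i\neq j}\lambda_{ij}^{(k)}\Theta_{ij}$, so that on the open domain $\mathbb{S}_{++}^p$ one has $\nabla f(\bm\Theta) = \widehat{\bm\Sigma} - \bm\Lambda^{(k)} - \bm\Theta^{-1}$. Because $-\log\det$ is strictly convex and the remaining terms are linear, $f$ is strictly convex, so the minimizer over the convex set $\mathcal{S}$---whose existence will follow from the level-boundedness established below---is unique; call it $\bm\Theta^\star$. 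Since $\mathcal{S}_d$ is closed and convex, the projection $\mathcal{P}_{\mathcal{S}_d}$ is well-defined and nonexpansive, and, as $\bm\Theta^\star$ lies in the interior of $\mathbb{S}_{++}^p$ (the positive-definiteness constraint being inactive), optimality is characterized by the fixed-point condition $G_{1/\eta}(\bm\Theta^\star)=\bm 0$ for any $\eta>0$.

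The analytical crux is to confine all iterates to a compact subset of $\mathbb{S}_{++}^p$. First I would establish that the initial sublevel set $L_0 = \{\bm\Theta\in\mathcal{S}\colon f(\bm\Theta)\leq f(\bm\Theta_0)\}$ is bounded and has its closure inside $\mathbb{S}_{++}^p$: the barrier term $-\log\det(\bm\Theta)\to+\infty$ as $\lambda_{\min}(\bm\Theta)\to 0$ bounds the eigenvalues away from zero, while the nonnegative terms $\trace(\bm\Theta\widehat{\bm\Sigma})$ and $\sum_{i\neq j}\lambda_{ij}^{(k)}|\Theta_{ij}|$ together with the \textit{M}-matrix (resp.\ diagonal-dominance) structure bound the eigenvalues from above. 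Consequently there is a compact set $K\subset\mathbb{S}_{++}^p$ with $L_0\subset K$, and since $\bm\Theta\mapsto\bm\Theta^{-1}$ is smooth on $K$, the gradient $\nabla f$ is $L$-Lipschitz on $K$ for some finite $L$; compactness also furnishes the minimizer $\bm\Theta^\star$ and a finite lower bound for $f$ on $L_0$.

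Next I would verify the line search is well-posed. For $\bm\Theta_t\in L_0$, the map $\eta\mapsto\bm\Theta_{t+1}(\eta)=\mathcal{P}_{\mathcal{S}_d}(\bm\Theta_t-\eta\nabla f(\bm\Theta_t))$ is continuous with $\bm\Theta_{t+1}(\eta)\to\bm\Theta_t\in\mathbb{S}_{++}^p$ as $\eta\to 0$, so for small $\eta$ the iterate stays positive definite; and the descent lemma on $K$ guarantees the Armijo inequality~\eqref{eq:line-search} for all $\eta\leq 1/L$. Hence backtracking terminates after finitely many trials, and the accepted step sizes obey $\eta_t\geq\eta_{\min}:=\sigma\beta\min\{1,1/L\}>0$. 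The sufficient-decrease condition then forces $f(\bm\Theta_{t+1})\leq f(\bm\Theta_t)$, so by induction every iterate remains in $L_0\subset K$. Telescoping~\eqref{eq:line-search} and using that $f$ is bounded below on $L_0$ yields $\sum_t\eta_t\|G_{1/\eta_t}(\bm\Theta_t)\|_{\mathrm{F}}^2<\infty$, whence $\|G_{1/\eta_t}(\bm\Theta_t)\|_{\mathrm{F}}\to 0$ because $\eta_t\geq\eta_{\min}$.

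Finally, compactness of $L_0$ furnishes a convergent subsequence $\bm\Theta_{t_\ell}\to\bar{\bm\Theta}\in\mathcal{S}$; continuity of the gradient mapping on $K$ together with $\|G_{1/\eta_{t_\ell}}(\bm\Theta_{t_\ell})\|_{\mathrm{F}}\to 0$ gives $G_{1/\bar\eta}(\bar{\bm\Theta})=\bm 0$, so $\bar{\bm\Theta}$ satisfies the first-order optimality condition and, by convexity, equals the unique minimizer $\bm\Theta^\star$. Since every limit point of the bounded sequence coincides with $\bm\Theta^\star$, the entire sequence converges to $\bm\Theta^\star$. I expect the main obstacle to be the upper-boundedness part of the level-set argument when $\widehat{\bm\Sigma}$ is rank-deficient (the regime $n<p$ emphasized in the introduction): there the trace term alone is not coercive, and one must exploit the \textit{M}-matrix / diagonal-dominance constraints together with the $\ell_1$ penalty---dovetailing with the cited MLE-existence results---to rule out escape to infinity.
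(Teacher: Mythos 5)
Your overall architecture coincides with the paper's proof: restrict attention to a sublevel set of $f$, obtain a Lipschitz constant for $\nabla f$ there from the Hessian $\bm \Theta^{-1} \otimes \bm \Theta^{-1}$, show the backtracking is well posed (positive definiteness is preserved for small steps because the current iterate is interior to $\mathbb{S}_{++}^p$, and the descent lemma yields the Armijo condition \eqref{eq:line-search} once $\eta$ is small enough), deduce a positive lower bound on the accepted step sizes, and conclude via monotone decrease, vanishing of the gradient mapping, and strict convexity. Your tail argument (telescoping, subsequence extraction, identification of every limit point with the unique minimizer) is in fact spelled out more carefully than the paper's terse ending. One small slip: the descent lemma gives the Armijo inequality for $\eta \leq 2(1-\alpha)/L$, so your claim that $\eta \leq 1/L$ suffices is valid only when $\alpha \leq 1/2$.

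The genuine gap is exactly the step you flag yourself: compactness of the sublevel set $L_0$, in particular the upper eigenvalue bound. This cannot be established from the structure of $f$ in general. When $\widehat{\bm \Sigma}$ is rank deficient (e.g.\ $n=1$ with $\mathcal{S} = \mathcal{M}^p$), the trace term is not coercive along directions annihilated by $\widehat{\bm \Sigma}$, and the weighted $\ell_1$ penalty cannot rescue coercivity because Assumption~\ref{assumption 1} forces $\lambda_{ij}^{(k)} = p_\lambda\big( \big| \widehat{\Theta}^{(k-1)}_{ij} \big| \big) = 0$ on large entries; indeed the minimizer of \eqref{eq:problem-S} can then fail to exist, so no structural argument can yield bounded level sets. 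The paper circumvents this with two moves you did not make: (i) existence of the minimizer is \emph{assumed} throughout, justified by the almost-sure existence results for $n \geq 2$ under $\mathcal{M}^p$ and $n \geq 1$ under $\mathcal{M}_D^p$ cited immediately after the theorem statement; and (ii) the eigenvalue lower bound $\bm \Theta \succeq m \bm I$ on the level set---the only ingredient actually needed for the Lipschitz constant $L = m^{-2}$---is imported from \citep[Lemma~3.5]{cai2021fast}, with no upper bound required. Your proof can be repaired along the same lines: adopt the existence assumption, note that a convex function whose set of minimizers is nonempty and bounded (here a singleton, by strict convexity) has all of its sublevel sets bounded, and then your compactness, lower-boundedness, and limit-point steps go through unchanged.
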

The proof is provided in Appendix~\ref{sec:proof-pgd}. It is important to note that the existence of the minimizer for Problem~\eqref{eq:problem-S} is assumed throughout this paper, which can be guaranteed almost surely if the sample size $n \geq 2$ for the case of $\mathcal{S} = \mathcal{M}^p$ \citep{lauritzen2019maximum,slawski2015estimation} and $n \geq 1$ for the case of $\mathcal{S} = \mathcal{M}^p_D$ \citep{truell2021maximum}.

Although our method needs to solve a sequence of log-determinant programs \eqref{eq:problem-S}, numerical results indicate a rapid decrease in the number of iterations required for solving \eqref{eq:problem-S} as $k$ increases (see Figure~\ref{fig:Convergence_illustration}). This accelerated convergence can be attributed to the use of the estimate from the previous stage as an initial point, thereby providing a warm start for faster convergence.

We now present notable extensions of the proposed gradient projection algorithm. First, the algorithm can be readily extended to solve Problem~\eqref{MLE} with a nonconvex penalty. Second, the algorithm can be further adapted to solve the following problem for the case of multivariate \textit{t}-distribution: 
\begin{equation*}
\underset{\bm \Theta \in \mathcal{S}}{\mathsf{minimize}}   - \log \det (\bm \Theta) + \frac{p + \nu}{n}   \sum_{i=1}^n \log  \big( 1 + \frac{1}{{\nu}}(\bm x^{(i)})^\top    \bm \Theta \bm x^{(i)} \big),
\end{equation*}
where $\nu$ denotes the number of degrees of freedom, and each $\bm x^{(i)}$ is an observation. This formulation can be employed to learn positive partial correlation graphs. Notably, elliptical MTP\textsubscript{2} distributions are highly restrictive, and the total positivity property is not applicable to \textit{t}-distributions \citep{rossell2021dependence}. It is worth mentioning that, unlike block coordinate descent \citep{egilmez2017graph} and projected Newton-like methods \citep{cai2021fast}, the proposed algorithm can handle these extensions, resulting in a more flexible and versatile approach.

\begin{algorithm}[hb]
   \caption{Solve Problem~\eqref{eq:problem-S}}\label{algo-1} 
\begin{algorithmic}[1] 
   \STATE {\bfseries Input:} Sample covariance matrix $\widehat{\bm \Sigma}, \bm \Lambda^{(k)}, \sigma, \alpha, \beta$.
    \vspace{0.1cm}  
    \FOR{$t = 0, 1, 2, \ldots$} 
    \STATE Compute $\nabla f (\bm \Theta_t) = - \bm \Theta_t^{-1} + \widehat{\bm \Sigma} - \bm \Lambda^{(k)}$;
    \STATE $m \leftarrow 0$;
    \REPEAT
    \STATE Update $\bm \Theta_{t + 1} =  \mathcal{P}_{\mathcal{S}_d} \left(\bm \Theta_t - \sigma \beta^m \nabla f(\bm \Theta_t) \right)$;
    \STATE $m\leftarrow m+1$;
    \UNTIL{$\bm \Theta_{t+1} \in \mathbb{S}_{++}^p$ and}
    \STATE {$ \qquad \ f ( \bm \Theta_{t+1} ) \leq f ( \bm \Theta_{t} )  - \alpha \sigma \beta^m \| G_{\frac{1}{\eta_t}} (\bm \Theta_t) \|^2_{\mathrm{F}}$};   
    \ENDFOR   
	\STATE {\bfseries Output:} $\widehat{\bm \Theta}^{(k)}$.
\end{algorithmic}
\end{algorithm}

\subsection{Computation of Projections}\label{sec:sec3.3}

We present the computation of projection $\mathcal{P}_{\mathcal{S}_d}$ in \eqref{pgd} for both cases of estimating \textit{M}-matrices and diagonally dominant \textit{M}-matrices, \textit{i.e.}, $\mathcal{S} = \mathcal{M}^p$ and $\mathcal{M}^p_D$ in Problem~\eqref{eq:problem-S}.

\subsubsection{Estimation of M-matrices}\label{sec:sec3.21}
For the case of estimating an \textit{M}-matrix as the precision matrix, the constraint set of Problem~\eqref{eq:problem-S} is set as $\mathcal{S} = \mathcal{M}^p$, defined in \eqref{set-M}. The closed set $\mathcal{S}_d$ in \eqref{eq:Sd} then becomes
\begin{equation}
\mathcal{S}_d = \left\lbrace  \bm \Theta \in \mathbb{R}^{p \times p} \, | \, \Theta_{ij} = \Theta_{ji} \leq 0, \ \forall \, i \neq j \right\rbrace,
\end{equation}
which can be written as the intersection of two sets:
\begin{equation}
\mathcal{S}_d = \mathcal{S}_A \cap \mathcal{S}_B,
\end{equation}
where $\mathcal{S}_A:= \left\lbrace  \bm \Theta \in \mathbb{R}^{p \times p} \, | \, \Theta_{ij} = \Theta_{ji}, \forall i \neq j \right\rbrace$ and $\mathcal{S}_B:= \left\lbrace  \bm \Theta \in \mathbb{R}^{p \times p} \, | \, \Theta_{ij} \leq 0, \forall i \neq j \right\rbrace$. If $\bm \Theta_{t}$ is symmetric, then the update $\mathcal{P}_{\mathcal{S}_B} \big( \bm \Theta_{t} - \eta_t \nabla f ( {\bm \Theta}_{t} ) \big)$ preserves symmetry. Thus, we only need to project $ \bm \Theta_{t} - \eta_t \nabla f ( {\bm \Theta}_{t} )$ onto $\mathcal{S}_B$. As a result, the iterate \eqref{pgd} can be simplified to
\begin{equation}\label{pgd-new-new}
\bm \Theta_{t+1}  = \mathcal{P}_{\mathcal{S}_B} \big( \bm \Theta_{t} - \eta_t \nabla f ( {\bm \Theta}_{t} ) \big),
\end{equation}
where the projection $\mathcal{P}_{\mathcal{S}_B}$ can be computed as follows:
\begin{equation} \label{prj-sb}
\left[ \mathcal{P}_{\mathcal{S}_B} (\bm X) \right]_{ij} =\left\{
\begin{array}{lcl}
\min \left ( X_{ij}, 0 \right)  &  & \mathrm{if} \, i \neq j,\\
 X_{ij}  & & \mathrm{if}  \, i = j.
\end{array}
\right.
\end{equation}
By initializing with a symmetric $\bm \Theta_0$, every point in the sequence $\{\bm \Theta_t \}_{t\geq 0}$ maintains symmetry.

\subsubsection{Estimation of Diagonally Dominant M-matrices}\label{sec:sec3.22}
For the case of estimating a diagonally dominant \textit{M}-matrix as the precision matrix, we set $\mathcal{S} = \mathcal{M}_D^p$ in \eqref{eq:problem-S}, with $\mathcal{M}_D^p$ defined in \eqref{set-MD}. The closed set $\mathcal{S}_d$ in \eqref{eq:Sd} then becomes
\begin{equation}\label{SD-MD}
 \mathcal{S}_d  =   \lbrace  \bm \Theta \in \mathbb{R}^{p\times p} \, | \, \Theta_{ij} = \Theta_{ji} \leq 0, \ \forall \, i \neq j, \ \bm \Theta \bm 1 \geq \bm 0 \rbrace, 
\end{equation}
which can be written as the intersection of two sets:
\begin{equation}
\mathcal{S}_d = \mathcal{S}_A \cap \mathcal{S}_C,
\end{equation}
where $\mathcal{S}_C:= \left\lbrace  \bm \Theta \in \mathbb{R}^{p \times p} \, | \, \bm \Theta \bm 1 \geq \bm 0, \Theta_{ij} \leq 0, \forall i \neq j \right\rbrace$. 
The iterate \eqref{pgd} is then written as:
\begin{equation}\label{pgd-dm}
\begin{gathered}
\bm \Theta_{t+1}  = \mathcal{P}_{\mathcal{S}_A \cap \mathcal{S}_C} \big( \bm \Theta_{t} - \eta_t \nabla f ( {\bm \Theta}_{t} ) \big).
\end{gathered}
\end{equation}
We note that, in contrast to \eqref{pgd-new-new}, the iterate \eqref{pgd-dm} cannot be simplified, as $\mathcal{P}_{\mathcal{S}_C} \big( \bm \Theta_{t} - \eta_t \nabla f ( {\bm \Theta}_{t} ) \big)$ does not guarantee symmetry.

We now present how to compute the projection $\mathcal{P}_{\mathcal{S}_A \cap \mathcal{S}_C}$ in \eqref{pgd-dm}, which can be written as the minimizer of the following problem:
\begin{equation}\label{eq:pro-sac}
\begin{gathered}
\mathcal{P}_{\mathcal{S}_A \cap \mathcal{S}_C} (\bm Y) := \underset{\bm X \in \mathcal{S}_A \cap \mathcal{S}_C}{\mathsf{arg \, min}} \, \norm{\bm X - \bm Y}_{\mathrm{F}}^2.
\end{gathered}
\end{equation}
Problem \eqref{eq:pro-sac} is a projection problem of a given point $\bm Y$ onto the intersection of the sets $\mathcal{S}_A$ and $\mathcal{S}_C$. To solve Problem~\eqref{eq:pro-sac}, we design an algorithm based on Dykstra's projection \citep{boyle1986method}, which aims to find the nearest point projection onto the intersection of closed convex sets. The algorithm is summarized in Algorithm \ref{algo-proj}, with $\bm Q_k$ denoting the increment associated with the set $\mathcal{S}_C$. We do not need to introduce the increment associated with $\mathcal{S}_A$ for convergence, since $\mathcal{S}_A$ is a subspace.
\begin{theorem}\label{Dykstra}
The sequences $\left\lbrace \bm A_k \right\rbrace$ and $\left\lbrace \bm C_k \right\rbrace$ generated by Algorithm \ref{algo-proj} converge to the minimizer of Problem~\eqref{eq:pro-sac}.
\end{theorem}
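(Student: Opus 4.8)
The plan is to reduce the convergence of Algorithm~\ref{algo-proj} to the classical Boyle--Dykstra theorem \citep{boyle1986method} by showing that the omission of the increment associated with the subspace $\mathcal{S}_A$ leaves every iterate unchanged. First I would record the structural facts that make the reduction possible: the ambient space $\mathbb{R}^{p\times p}$ with the Frobenius inner product is a finite-dimensional Hilbert space; $\mathcal{S}_A$ is a linear subspace (the symmetric matrices), so its metric projection is the \emph{linear} map $\mathcal{P}_{\mathcal{S}_A}(\bm X)=\tfrac12(\bm X+\bm X^\top)$ whose orthogonal complement $\mathcal{S}_A^{\perp}$ is the space of antisymmetric matrices; $\mathcal{S}_C$ is a nonempty, closed, convex set (a finite intersection of half-spaces), hence admits a unique metric projection; and $\mathcal{S}_A\cap\mathcal{S}_C\neq\varnothing$ (for instance $\bm I$ lies in it), so that $\mathcal{P}_{\mathcal{S}_A\cap\mathcal{S}_C}(\bm Y)$ is well defined. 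These are exactly the hypotheses required by the Boyle--Dykstra theorem for two closed convex sets.

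Next I would write out the full two-set Dykstra recursion carrying \emph{both} increments: an increment $\bm P_k$ for $\mathcal{S}_A$ together with the increment $\bm Q_k$ for $\mathcal{S}_C$ that already appears in Algorithm~\ref{algo-proj}. The Boyle--Dykstra theorem asserts that, for this full recursion, $\bm A_k \to \mathcal{P}_{\mathcal{S}_A\cap\mathcal{S}_C}(\bm Y)$ and $\bm C_k \to \mathcal{P}_{\mathcal{S}_A\cap\mathcal{S}_C}(\bm Y)$, both subsequences sharing the same limit because $\|\bm A_k-\bm C_k\|_{\mathrm{F}}\to 0$. The core of the argument is then the claim that this full recursion generates exactly the iterates of Algorithm~\ref{algo-proj}. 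By construction $\bm P_k = (\mathrm{Id}-\mathcal{P}_{\mathcal{S}_A})(\bm C_{k-1}+\bm P_{k-1})$ lies in $\mathcal{S}_A^{\perp}$. Feeding this back and using that $\mathcal{P}_{\mathcal{S}_A}$ is linear and annihilates $\mathcal{S}_A^{\perp}$ gives $\bm A_{k} = \mathcal{P}_{\mathcal{S}_A}(\bm C_{k-1}+\bm P_{k-1}) = \mathcal{P}_{\mathcal{S}_A}(\bm C_{k-1}) + \mathcal{P}_{\mathcal{S}_A}(\bm P_{k-1}) = \mathcal{P}_{\mathcal{S}_A}(\bm C_{k-1})$, so the increment $\bm P_k$ never influences $\bm A_k$. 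I would make this precise by an induction on $k$ (with $\bm P_0=\bm 0$) showing that the triples $(\bm A_k,\bm C_k,\bm Q_k)$ produced with $\bm P_k$ retained coincide with those produced by Algorithm~\ref{algo-proj}, which simply sets $\bm A_k=\mathcal{P}_{\mathcal{S}_A}(\bm C_{k-1})$ and keeps only the increment $\bm Q_k$ for the non-subspace set $\mathcal{S}_C$.

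Having established this identity of iterates, the conclusion is immediate: the sequences $\{\bm A_k\}$ and $\{\bm C_k\}$ of Algorithm~\ref{algo-proj} are identical to those of a genuine two-set Dykstra scheme, so Boyle--Dykstra yields their convergence to $\mathcal{P}_{\mathcal{S}_A\cap\mathcal{S}_C}(\bm Y)$, the minimizer of Problem~\eqref{eq:pro-sac}. I expect the main obstacle to be the clean verification that dropping $\bm P_k$ is \emph{exactly}, not merely asymptotically, equivalent to the full scheme: this requires getting the initialization and the order of the two projections right so that the identity $\mathcal{P}_{\mathcal{S}_A}(\bm P_{k-1})=\bm 0$ propagates through the induction from the very first step. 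The remaining ingredients---checking the Boyle--Dykstra hypotheses and the linearity and annihilation properties of $\mathcal{P}_{\mathcal{S}_A}$---are routine.
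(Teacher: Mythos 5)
Your proposal is correct and follows exactly the route the paper takes: the paper proves Theorem~\ref{Dykstra} by appealing to the Boyle--Dykstra convergence theorem of \citet{boyle1986method}, noting only that the increment for $\mathcal{S}_A$ may be dropped because $\mathcal{S}_A$ is a subspace. Your induction showing that $\bm P_k \in \mathcal{S}_A^{\perp}$ and hence $\mathcal{P}_{\mathcal{S}_A}(\bm C_{k-1}+\bm P_{k-1}) = \mathcal{P}_{\mathcal{S}_A}(\bm C_{k-1})$ simply makes explicit the detail that the paper leaves implicit, so the two arguments coincide.
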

The convergence established in Theorem~\ref{Dykstra} is based on the convergence results of Dykstra's projection algorithm, as presented in \citet{boyle1986method}.

\begin{algorithm}
	\caption{Compute $\mathcal{P}_{\mathcal{S}_A \cap \mathcal{S}_C} (\bm Y)$} \label{algo-proj} 
		\begin{algorithmic}[1]
		\STATE {\bfseries Input:} $\bm C_0 =  \bm Y$, $\bm Q_0 = \bm 0$, $\epsilon$;
		\STATE { $k = 1$};
		\REPEAT 
		\STATE $\bm A_{k} = \mathcal{P}_{\mathcal{S}_A} \big(\bm C_{k-1} \big)$;
       \STATE $\bm C_{k} = \mathcal{P}_{\mathcal{S}_C} \big( \bm A_{k} + \bm Q_{k-1} \big)$;
       \STATE $\bm Q_{k} = \bm A_{k}  - \bm C_{k} + \bm Q_{k-1} $;
       \STATE {$k = k+1$};
		\UNTIL{$\norm{\bm A_k - \bm C_k}_{\mathrm{F}} < \epsilon$}
		\STATE {\bfseries Output:} $\mathcal{P}_{\mathcal{S}_A \cap \mathcal{S}_C} (\bm Y) = \bm A_k$. 
	\end{algorithmic}
\end{algorithm}

In what follows, we present the computation of $\mathcal{P}_{\mathcal{S}_A}$ and $\mathcal{P}_{\mathcal{S}_C}$ from Algorithm \ref{algo-proj}. The computation of $\mathcal{P}_{\mathcal{S}_A}$ is straightforward:
\begin{equation*}
\mathcal{P}_{\mathcal{S}_A} (\bm Y) = \big(\bm Y + \bm Y^\top \big)/2.
\end{equation*}
Projection $\mathcal{P}_{\mathcal{S}_C}$ is defined as follows:
\begin{equation}\label{eq:pro-sc}
\mathcal{P}_{ \mathcal{S}_C} (\bm Y) := \underset{\bm X \in \mathcal{S}_C}{\mathsf{arg \, min}} \, \frac{1}{2}\norm{\bm X - \bm Y}_{\mathrm{F}}^2.
\end{equation}
To compute $\mathcal{P}_{\mathcal{S}_C}$, we solve Problem \eqref{eq:pro-sc} row by row. For the $r$-th row, we address the following problem:
\begin{equation}\label{P_B_row}
\underset{\bm x \in \mathbb{R}^p}{\mathsf{minimize}}  \    \frac{1}{2}\norm{ \bm x - \bm y }^2,  \qquad\mathsf{subject~to} \ \bm x^\top \bm 1 \geq 0, \, \bm x_{\setminus r} \leq \bm 0,
\end{equation} 
where $\bm y \in \mathbb{R}^p$ contains all entries of the $r$-th row of $\bm Y$, and $\bm x_{\setminus r} \in \mathbb{R}^{p-1}$ includes all entries of $\bm x$ except the $r$-th one. 

Proposition \ref{solution-PB} below, proven in Appendix~\ref{appendix-pb}, provides the optimal solution of Problem \eqref{P_B_row}, which is a variant of the projection onto the simplex \citep{condat2016fast}.

\begin{proposition}\label{solution-PB}
Let $\hat{\bm x}$ be the optimal solution of Problem \eqref{P_B_row}, which can be obtained as follows:

\begin{itemize}
\item If $y_r \geq -\sum_{i \in [p] \setminus r} \min (y_i, \, 0)$, then $\hat{x}_r = y_r$, and $\hat{x}_i = \min (y_i, \, 0)$ for $i \neq r$.

\item If $y_r < -\sum_{i \in [p] \setminus r} \min (y_i, \, 0)$, then $\hat{x}_r = y_r + \rho$, and $\hat{x}_i = \min (y_i + \rho, \, 0)$ for $i \neq r$, where $\rho$ satisfies $\rho + \sum_{i \in [p] \setminus r} \min (y_i + \rho, \, 0) + y_r =0$.
\end{itemize}
\end{proposition}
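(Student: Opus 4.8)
The plan is to treat Problem~\eqref{P_B_row} as a strictly convex quadratic program, so that its KKT conditions are both necessary and sufficient and pin down the unique minimizer. Feasibility and strict feasibility are easy to check (take $x_i = -1$ for $i \neq r$ and $x_r$ large), so Slater's condition holds. I would introduce a multiplier $\rho \geq 0$ for the halfspace constraint $\bm x^\top \bm 1 \geq 0$ (rewritten as $-\bm x^\top \bm 1 \leq 0$) and multipliers $\mu_i \geq 0$ for each sign constraint $x_i \leq 0$, $i \in [p]\setminus r$, and form the Lagrangian $\frac{1}{2}\norm{\bm x - \bm y}^2 - \rho\, \bm x^\top \bm 1 + \sum_{i \neq r} \mu_i x_i$.

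Next I would read off the stationarity conditions. Differentiating with respect to $x_r$ gives $\hat{x}_r = y_r + \rho$ immediately, since the $r$-th coordinate carries no sign multiplier, while differentiating with respect to $x_i$ for $i \neq r$ gives $\hat{x}_i = y_i + \rho - \mu_i$. Invoking complementary slackness $\mu_i \hat{x}_i = 0$ together with $\mu_i \geq 0$ and $\hat{x}_i \leq 0$, I would argue coordinate-wise: if $y_i + \rho \geq 0$ then feasibility forces $\hat{x}_i = 0$ (with $\mu_i = y_i + \rho \geq 0$), and otherwise $\mu_i = 0$ yields $\hat{x}_i = y_i + \rho < 0$. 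Both branches collapse into the compact form $\hat{x}_i = \min(y_i + \rho, \, 0)$, which matches the claimed expression once $\rho$ is determined.

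Finally I would fix $\rho$ using complementary slackness on the halfspace constraint. Define $g(\rho) = \rho + y_r + \sum_{i \in [p]\setminus r} \min(y_i + \rho, \, 0)$, which is precisely $\bm x^\top \bm 1$ evaluated at the candidate solution. If $\rho = 0$ is already feasible, i.e.\ $g(0) = y_r + \sum_{i \neq r} \min(y_i, \, 0) \geq 0$, then $\rho = 0$ satisfies all KKT conditions and the solution is $\hat{x}_r = y_r$, $\hat{x}_i = \min(y_i, \, 0)$, recovering the first case. Otherwise $g(0) < 0$, so $\rho = 0$ is infeasible and complementary slackness forces $\bm x^\top \bm 1 = 0$, i.e.\ $g(\rho) = 0$, recovering the second case.

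The step I would treat most carefully — and the only real obstacle — is establishing existence and uniqueness of the required $\rho > 0$ in the second case and confirming that the threshold separates the two cases cleanly. Here I would observe that each $\min(y_i + \rho, \, 0)$ is continuous and nondecreasing in $\rho$, while the leading term $\rho$ is strictly increasing, so $g$ is continuous and strictly increasing with $g(\rho) \to +\infty$ as $\rho \to +\infty$. Combined with $g(0) < 0$, this yields a unique root $\rho > 0$, and the same monotonicity shows that $g(0) \geq 0$ versus $g(0) < 0$ is exactly the dichotomy $y_r \geq -\sum_{i \neq r}\min(y_i, \, 0)$ versus $y_r < -\sum_{i \neq r}\min(y_i, \, 0)$ stated in the proposition. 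Strict convexity of the objective guarantees the KKT point so obtained is the global minimizer, completing the argument.
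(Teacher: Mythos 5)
Your proposal is correct and follows essentially the same route as the paper's proof: form the Lagrangian, write the KKT system, derive $\hat{x}_r = y_r + \rho$ and $\hat{x}_i = \min(y_i + \rho, 0)$ by coordinatewise complementary slackness, and split into the two cases according to whether the multiplier on $\bm x^\top \bm 1 \geq 0$ is zero or positive. The only differences are cosmetic improvements on your part — you verify Slater's condition explicitly and fold the existence/uniqueness of the root of $g(\rho)=0$ into the proof, whereas the paper relegates that monotonicity argument to the remark following the proposition.
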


\begin{remark}
Let $g(\rho) := \rho + \sum_{i \in [p] \setminus r} \min (y_i + \rho, \, 0) + y_r$. Note that there exists a unique solution to the piecewise linear equation $g(\rho) =0$. On one hand, $g(0) <0$, and $g(a) >0$ for any $a > \max_i |y_i|$. Therefore, there exits at least one solution to $g(\rho) =0$ since $g(\rho)$ is continuous. On the other hand, the solution is unique because $g(\rho)$ is a monotone function that is strictly increasing.
\end{remark}

Let $\bm y_{\setminus r} \in \mathbb{R}^{p-1}$ denote the vector that contains all entries of $\bm y$ except the $r$-th one. To solve the equation $g(\rho) =0$, we follow the procedures outlined in \citet{palomar2005practical}, which consists of three steps:
\begin{enumerate}
\item Sort $\bm y_{\setminus r}$ and obtain the sorted version $\tilde{\bm y}_{\setminus r}$, where $[\tilde{\bm y}_{\setminus r}]_1 \leq [\tilde{\bm y}_{\setminus r}]_2 \leq \cdots \leq [\tilde{\bm y}_{\setminus r}]_{p-1}$.

\item Find $M := \underset{1 \leq m \leq p-1}{\max} \Big\lbrace  m \big |  \frac{ y_r + \sum_{i=1}^{m} [\tilde{\bm y}_{\setminus r}]_i}{m + 1} > [\tilde{\bm y}_{\setminus r}]_m \Big\rbrace$.

\item Compute $\rho = \frac{- y_r - \sum_{m=1}^{M} [\tilde{\bm y}_{\setminus r}]_m}{M + 1}$.
\end{enumerate}
We note that sorting the vector $\bm y_{\setminus r}$ is the most computationally demanding step in the above procedures, typically necessitating $\mathcal{O}(p \log p)$ operations.

\section{Analysis of Estimation Error}\label{sec-theory}
In this section, we provide theoretical analysis of the estimation error for the proposed method.


Let $\bm \Sigma^\star$ and $\bm \Theta^\star$ denote the underlying covariance and precision matrices, respectively, where $\bm \Theta^\star \in \mathcal{S}$ ( $\mathcal{S} = \mathcal{M}^p$ or $\mathcal{M}_D^p$). Define $S^\star$ as the support set of the underlying precision matrix $\bm \Theta^\star$, excluding the diagonal entries, \textit{i.e.},
\begin{equation}\label{eq:set-S}
S^\star := \big\{ ( i, j ) \in [p]^2  \, | \, \Theta^{\star}_{ij} \neq 0, \, i \neq j \big\}.
\end{equation}

We require some mild conditions for the weight-updating function $p_{\lambda}$ in Assumption \ref{assumption 1} and the underlying precision matrix $\bm \Theta^\star$ in Assumption \ref{assumption 2}.

\begin{assumption} \label{assumption 1}
The function $p_{\lambda} : \mathbb{R}_+ \to \mathbb{R}_+ $ satisfies the following conditions:

\begin{enumerate}
\itemsep0em 
\item $p_{\lambda} (x)$ is monotonically nonincreasing, and $p_{\lambda} (0)= \lambda $;
\item There exists a $\gamma > 0$ such that $p_{\lambda} (x)= 0$ for $x \geq \gamma \lambda$, and $p_{\lambda} (c_0 \lambda) \geq \frac{\sqrt{2}}{2}\lambda$, where $c_0 = 12 \lambda^2_{\max}( \bm \Theta^\star)$ is a constant.
\end{enumerate}
\end{assumption}

\begin{assumption} \label{assumption 2}
The underlying precision matrix $\bm \Theta^\star$ has at most $s$ nonzero entries, and the off-diagonal nonzero entries satisfy $\min_{ (i, j) \in S^\star} | \Theta_{ij}^{\star} | \geq  (c_0 +\gamma) \lambda$, with $c_0$ and $\gamma$ defined in Assumption \ref{assumption 1}. There exists a constant $\delta$ such that $0 < \delta \leq \lambda_{\min} (\bm \Theta^\star) \leq \lambda_{\max} (\bm \Theta^\star) \leq 1/\delta < \infty$.
\end{assumption}

\begin{remark}
Assumption~\ref{assumption 1} fundamentally covers folded concave penalties, such as smooth clipped absolute deviation (SCAD) \citep{fan2001variable} and minimax concave penalty (MCP) \citep{zhang2010nearly}. In Assumption~\ref{assumption 2}, the conditions on the underlying precision matrix are relatively mild, as the regularization parameter $\lambda$ in our theorems takes the order of $\sqrt{ \log p /n}$, which could be small when the sample size $n$ increases. The assumption regarding minimal magnitude of signals is often utilized in nonconvex optimization analysis \citep{wang2016precision,ying2020does}.
\end{remark}

The following theorem, proven in Appendix~\ref{proof-sec}, provides a non-asymptotic guarantee for estimation error.
\begin{theorem} \label{theorem 2}
Under Assumptions \ref{assumption 1} and \ref{assumption 2}, take the regularization parameter $\lambda = \sqrt{ c_1 \tau \log p /n}$ with $\tau >2$. If the sample size satisfies $n \geq 8 c_0 c_1 \tau   s \log p$,  then the sequence $\widehat{\bm \Theta}^{(k)}$ generated by the proposed method satisfies
\begin{equation*}
\big \| \widehat{\bm \Theta}^{(k)}  -  \bm \Theta^{\star} \big \|_{\mathrm{F}}  \leq  \underbrace{ 4 c_0 \big\| \big( \bm \Sigma^{\star}  -  \widehat{\bm \Sigma}  \big)_{S^\star \cup I_p} \big\| }_{\mathrm{Statistical \ error}}  +   \underbrace{ \rho^{k} \big\| \widehat{\bm \Theta}^{(0)}   -    \bm \Theta^{\star} \big\|_{\mathrm{F}} }_{\mathrm{Optimization \ error}},
\end{equation*}
with probability at least $1-4/p^{\tau - 2}$, where $\rho = \frac{2+\sqrt{2}}{4}$, $c_0 = 12 \lambda^2_{\max}( \bm \Theta^\star)$, and $c_1 = \big( 80 \max_i \Sigma_{ii}^\star \big)^2$.
\end{theorem}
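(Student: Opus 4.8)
The plan is to reduce the multi-stage guarantee to a single deterministic ``per-stage contraction'' inequality that is then iterated, with all randomness confined to one high-probability event. First I would isolate that event: using the standard sub-Gaussian concentration of the Gaussian sample covariance together with a union bound over the $p^2$ entries, the choice $\lambda=\sqrt{c_1\tau\log p/n}$ with $c_1=(80\max_i\Sigma_{ii}^\star)^2$ forces $\norm{\widehat{\bm\Sigma}-\bm\Sigma^\star}_{\max}\le\lambda/2$ with probability at least $1-4/p^{\tau-2}$. Every subsequent step is deterministic on this event, and I would recall that $\widehat{\bm\Theta}^{(k)}$ is the exact minimizer of \eqref{eq:problem-S} (guaranteed by Theorem~\ref{converge-pgd}), so the ``optimization error'' being tracked is the progress of the outer stages, not inner solver slack.

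The core is a deterministic bound at a single stage. Writing $\bm\Delta^{(k)}=\widehat{\bm\Theta}^{(k)}-\bm\Theta^\star$ and exploiting that both $\widehat{\bm\Theta}^{(k)}$ and $\bm\Theta^\star$ lie in $\mathcal{S}$ (so every off-diagonal entry is nonpositive and the weighted penalty in \eqref{sec-stage} reduces to the linear form $-\sum_{i\ne j}\lambda_{ij}^{(k)}\Theta_{ij}$), I would combine the optimality of $\widehat{\bm\Theta}^{(k)}$ with the feasibility of $\bm\Theta^\star$. Expanding $-\log\det$ around $\bm\Theta^\star$ through its integral (Bregman) remainder yields a linear part producing $\trace\!\big((\widehat{\bm\Sigma}-\bm\Sigma^\star)\bm\Delta^{(k)}\big)$ and a quadratic remainder bounded below by $\tfrac{1}{4\lambda_{\max}^2(\bm\Theta^\star)}\norm{\bm\Delta^{(k)}}_{\mathrm{F}}^2=\tfrac{3}{c_0}\norm{\bm\Delta^{(k)}}_{\mathrm{F}}^2$, valid as long as the segment $\bm\Theta^\star+t\bm\Delta^{(k)}$ keeps a controlled spectrum (ensured by the spectral bounds in Assumption~\ref{assumption 2} plus a radius that I maintain inductively). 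The linear term is split over the diagonal index set $I_p$, the support $S^\star$, and its off-diagonal complement, and controlled via $\norm{\widehat{\bm\Sigma}-\bm\Sigma^\star}_{\max}\le\lambda/2$.

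The crux, and the step I expect to be the main obstacle, is turning the data-dependent weights $\lambda_{ij}^{(k)}=p_\lambda(|\widehat\Theta^{(k-1)}_{ij}|)$ into a clean contraction in $\norm{\bm\Delta^{(k-1)}}_{\mathrm{F}}$. I would use Assumptions~\ref{assumption 1} and~\ref{assumption 2} jointly: for $(i,j)\in S^\star$ the magnitude exceeds $(c_0+\gamma)\lambda$, so whenever the previous entrywise error is below $c_0\lambda$ one gets $|\widehat\Theta^{(k-1)}_{ij}|\ge\gamma\lambda$ and hence $\lambda_{ij}^{(k)}=0$, removing bias on the support; off the support $\Theta^\star_{ij}=0$ gives $\lambda_{ij}^{(k)}=p_\lambda(|\Delta^{(k-1)}_{ij}|)\le\lambda$, supplying enough regularization to absorb the off-support part of $\bm\Delta^{(k)}$. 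The calibration $p_\lambda(c_0\lambda)\ge\tfrac{\sqrt2}{2}\lambda$ is precisely what converts the residual weighted terms into the factor $\rho=\tfrac{2+\sqrt2}{4}$ multiplying $\norm{\bm\Delta^{(k-1)}}_{\mathrm{F}}$. The delicacy is that this support/off-support dichotomy holds only inside a radius where the previous error is small enough for the support weights to vanish, so I would carry the stage-$(k-1)$ bound as an induction hypothesis and verify it keeps the iterate within that radius, invoking $n\ge 8c_0c_1\tau s\log p$ (equivalently $c_0 s\lambda^2\le 1/8$) to keep the $\ell_1$/support cross-terms dominated by the quadratic lower bound.

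Finally, assembling the smooth lower bound, the linear statistical term, and the weight control gives the per-stage inequality $\norm{\bm\Delta^{(k)}}_{\mathrm{F}}\le 4c_0(1-\rho)\norm{(\widehat{\bm\Sigma}-\bm\Sigma^\star)_{S^\star\cup I_p}}+\rho\norm{\bm\Delta^{(k-1)}}_{\mathrm{F}}$. Unrolling this recursion (a geometric series with ratio $\rho<1$) produces exactly the stated decomposition into the statistical error $4c_0\norm{(\widehat{\bm\Sigma}-\bm\Sigma^\star)_{S^\star\cup I_p}}$ and the geometrically decaying optimization error $\rho^k\norm{\widehat{\bm\Theta}^{(0)}-\bm\Theta^\star}_{\mathrm{F}}$, all on the event from the first step, completing the bound with probability at least $1-4/p^{\tau-2}$.
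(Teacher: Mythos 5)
Your overall architecture matches the paper's: a single high-probability event $\big\{\|\bm\Sigma^\star-\widehat{\bm\Sigma}\|_{\max}\le\tfrac{\sqrt2}{2}\lambda\big\}$ (the paper's threshold is $\tfrac{\sqrt2}{2}\lambda$, not $\lambda/2$; this matters because the probability $1-4/p^{\tau-2}$ with $c_1=(80\max_i\Sigma^\star_{ii})^2$ is calibrated to that threshold), a deterministic per-stage inequality of the form $d_k\le a_0+\rho\, d_{k-1}$, restricted strong convexity of $-\log\det$ at scale $\tfrac{1}{4}\lambda_{\max}^{-2}(\bm\Theta^\star)=\tfrac{3}{c_0}$, and unrolling the recursion. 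However, your key mechanism for producing the contraction is flawed. You claim that on the support the weights vanish: ``whenever the previous entrywise error is below $c_0\lambda$ one gets $|\widehat\Theta^{(k-1)}_{ij}|\ge\gamma\lambda$ and hence $\lambda^{(k)}_{ij}=0$.'' The only induction hypothesis available is a Frobenius bound, and its floor is the statistical error, which is of order $c_0\sqrt{s}\,\lambda$ (under the event, $4c_0\|(\bm\Sigma^\star-\widehat{\bm\Sigma})_{S^\star\cup I_p}\|\le 2\sqrt2\,c_0\sqrt{s}\,\lambda$). A Frobenius bound at scale $c_0\sqrt{s}\,\lambda$ does not give entrywise error below $c_0\lambda$ once $s>1$, and Assumption~\ref{assumption 2} only guarantees signals of size $(c_0+\gamma)\lambda$, not $(c_0\sqrt{s}+\gamma)\lambda$. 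So individual support entries of $\widehat{\bm\Theta}^{(k-1)}$ can fall below $\gamma\lambda$ and carry weights as large as $\lambda$; your all-or-nothing dichotomy cannot be established, and with it the claimed ``removal of bias on the support'' collapses.

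The paper's Lemma~\ref{lem3} avoids exactly this trap with a graded, per-entry bound: for $(i,j)\in S^\star$, either $|\widehat\Theta^{(k-1)}_{ij}-\Theta^\star_{ij}|\le\alpha$ (then the weight is exactly $0$ by the signal condition), or $|\widehat\Theta^{(k-1)}_{ij}-\Theta^\star_{ij}|\ge\alpha$ (then the weight is at most $\lambda\le\tfrac{\lambda}{\alpha}|\widehat\Theta^{(k-1)}_{ij}-\Theta^\star_{ij}|$). Summing squares gives $\|p_\lambda(|\widehat{\bm\Theta}^{(k-1)}_{S^\star}|)\|\le\tfrac{1}{c_0}\|\widehat{\bm\Theta}^{(k-1)}-\bm\Theta^\star\|_{\mathrm{F}}$ with no entrywise hypothesis at all; this supplies the $\tfrac12$ part of $\rho$. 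The second part, $\tfrac{\sqrt2}{4}$, comes from a source your sketch does not address: the noise $\bm\Sigma^\star-\widehat{\bm\Sigma}$ restricted to $L_\alpha(\widehat{\bm\Theta}^{(k-1)})\setminus S^\star$, the entries that are large in the previous iterate but zero in truth (where the weights are too small to kill the noise). Controlling this set requires the cardinality induction $|T_\alpha(\widehat{\bm\Theta}^{(k)})|\le 2s-p$ maintained across stages, together with $|L_\alpha^{k-1}\setminus S^\star|\le\|\widehat{\bm\Theta}^{(k-1)}-\bm\Theta^\star\|_{\mathrm{F}}^2/\alpha^2$; the calibration $p_\lambda(c_0\lambda)\ge\tfrac{\sqrt2}{2}\lambda$ is used elsewhere, namely to show that on the complement $\bar T_\alpha^c$ the penalty dominates the noise so that (via the sign constraint $\widehat\Theta_{ij}\le0$) those terms are nonpositive and drop out. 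To repair your proof you would need to replace the vanishing-weights dichotomy with the graded bound and add both the $L_\alpha\setminus S^\star$ accounting and the sparsity-level induction.
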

Theorem \ref{theorem 2} establishes that the estimation error between the estimated precision matrix and the underlying precision matrix can be upper bounded by two terms: optimization error and statistical error. The optimization error decreases at a linear rate, indicating the progressive refinement of the estimate across iterative stages. In contrast, the statistical error remains independent of $k$ and does not decrease during iterations, capturing the intrinsic statistical rate. Consequently, the estimation error is primarily dominated by the statistical error.

We note that the statements in Theorem \ref{theorem 2} are applicable to estimate both \textit{M}-matrices and diagonally dominant \textit{M}-matrices, \textit{i.e.}, $\mathcal{S} = \mathcal{M}^p$ or $\mathcal{M}_D^p$ in Algorithm \ref{algo-1}. The parameter $\tau$ is user-defined; a larger $\tau$ increases the probability of the claims being true but imposes a stricter requirement on the sample size.

\begin{corollary} \label{corollary-result}
Under the same assumptions and conditions as stated in Theorem \ref{theorem 2}, the sequence $\widehat{\bm \Theta}^{(k)}$ generated by the proposed method satisfies
\begin{equation*}
\big \| \widehat{\bm \Theta}^{(k)} - \bm \Theta^{\star} \big \|_{\mathrm{F}}  \leq \underbrace{ c \sqrt{s \log p/n} }_{\mathrm{Statistical \ error}}  +\underbrace{ \rho^{k} \big\| \widehat{\bm \Theta}^{(0)} - \bm \Theta^{\star} \big\|_{\mathrm{F}} }_{\mathrm{Optimization \ error}},
\end{equation*}
with probability at least $1-4/p^{\tau - 2}$, where $c = 2 c_0 \sqrt{2c_1 \tau}$.
\end{corollary}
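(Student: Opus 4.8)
The plan is to treat Corollary~\ref{corollary-result} as a direct quantitative simplification of Theorem~\ref{theorem 2}. The optimization-error term $\rho^{k}\|\widehat{\bm\Theta}^{(0)}-\bm\Theta^\star\|_{\mathrm F}$ already appears verbatim in the target conclusion, so the whole task reduces to upper bounding the statistical-error term $4c_0\|(\bm\Sigma^\star-\widehat{\bm\Sigma})_{S^\star\cup I_p}\|$ by $c\sqrt{s\log p/n}$ with $c=2c_0\sqrt{2c_1\tau}$. Because the probability $1-4/p^{\tau-2}$ is identical in the two statements, I would not rerun any probabilistic argument; instead I would work deterministically on the very concentration event that already underlies the proof of Theorem~\ref{theorem 2} and chain two elementary inequalities on it.

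First I would pass from the Euclidean norm of the restricted vector to an entrywise maximum by a cardinality bound. The vector $(\bm\Sigma^\star-\widehat{\bm\Sigma})_{S^\star\cup I_p}$ has exactly $|S^\star|+p$ coordinates, the off-diagonal index set $S^\star$ and the diagonal index set $I_p$ being disjoint. Since $\bm\Theta^\star\succ\bm 0$ forces every diagonal entry to be nonzero, the total number of nonzero entries of $\bm\Theta^\star$ equals $|S^\star|+p$, which Assumption~\ref{assumption 2} caps at $s$; hence $|S^\star\cup I_p|\le s$ and $\|(\bm\Sigma^\star-\widehat{\bm\Sigma})_{S^\star\cup I_p}\|\le\sqrt{s}\,\max_{i,j}|\Sigma^\star_{ij}-\widehat{\Sigma}_{ij}|$.

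Second I would control $\max_{i,j}|\Sigma^\star_{ij}-\widehat{\Sigma}_{ij}|$ through the standard Gaussian concentration bound for sample covariances, namely an entrywise sub-Gaussian tail of the form $\mathbb{P}(|\widehat{\Sigma}_{ij}-\Sigma^\star_{ij}|>t)\le 4\exp(-2nt^2/c_1)$ followed by a union bound over the at most $p^2$ entries. Choosing $t=\lambda/\sqrt2=\tfrac{1}{\sqrt2}\sqrt{c_1\tau\log p/n}$ makes the exponent equal $-\tau\log p$, so the union bound gives $4p^2\exp(-\tau\log p)=4/p^{\tau-2}$, which recovers exactly the failure probability of Theorem~\ref{theorem 2} and confirms that the two events coincide. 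On this event, combining with the cardinality bound yields $4c_0\|(\bm\Sigma^\star-\widehat{\bm\Sigma})_{S^\star\cup I_p}\|\le\tfrac{4c_0}{\sqrt2}\sqrt{c_1\tau}\,\sqrt{s\log p/n}=2c_0\sqrt{2c_1\tau}\,\sqrt{s\log p/n}=c\sqrt{s\log p/n}$, and substituting into Theorem~\ref{theorem 2} finishes the proof. The step I expect to need the most care is verifying that the chosen level $t=\lambda/\sqrt2$ lies in the range where the sub-Gaussian (rather than the heavier sub-exponential) tail for sample covariances is valid; this is precisely where the sample-size hypothesis $n\ge 8c_0c_1\tau s\log p$ enters, forcing $\sqrt{\tau\log p/n}$ to be small. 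Everything else is bookkeeping of constants to land on $c=2c_0\sqrt{2c_1\tau}$.
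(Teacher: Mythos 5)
Your proposal is correct and matches the paper's own argument essentially step for step: the paper likewise works on the event $\big\{\|\bm\Sigma^\star-\widehat{\bm\Sigma}\|_{\max}\le \tfrac{\sqrt{2}}{2}\lambda\big\}$, bounds $\big\|\big(\bm\Sigma^\star-\widehat{\bm\Sigma}\big)_{S^\star\cup I_p}\big\|\le \tfrac{\sqrt{2}}{2}\lambda\sqrt{s}$ via $|S^\star\cup I_p|\le s$, substitutes $\lambda=\sqrt{c_1\tau\log p/n}$ to get $c=2c_0\sqrt{2c_1\tau}$, and obtains the probability $1-4/p^{\tau-2}$ from the Gaussian entrywise tail bound of \citet[Lemma~1]{ravikumar2011high} (whose constant is exactly your $4\exp(-2nt^2/c_1)$) plus a union bound over $p^2$ entries. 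Your remark about the validity range of the sub-Gaussian tail is a reasonable point of care, and the small-$\lambda$ regime guaranteed by the sample-size condition handles it.
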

Corollary~\ref{corollary-result} reveals that the statistical error follows the order of $\sqrt{s\log p /n}$, aligning with the minimax rate achieved in unconstrained graphical models \citep{ravikumar2011high,loh2015regularized}. However, the impact of the \textit{M}-matrix constraint on the minimax rate for estimating sparse precision matrices remains uncertain. We highlight a recent study \citep{soloff2020covariance} that investigated estimating precision matrices under the nonpositivity constraint without sparsity regularization. The minimax optimal rate under symmetrized Stein loss is established to be $\sqrt{\log p / n}$, while removing the nonpositivity constraint results in a reduced rate of $\sqrt{p /n}$. This finding suggests that the nonpositivity constraint affects the minimax rate, particularly in non-sparse situations.

\vspace{-0.2cm}

\section{Experimental Results}\label{sec-experiment} 

We conduct experiments on synthetic and financial time-series data. The code of our method is publicly available at {\url{https://github.com/jxying/ddmtp2}}.

State-of-the-art methods for comparison include \textsf{GLASSO} \citep{friedman2008sparse}, \textsf{CLIME} \citep{cai2011constrained}, \textsf{GSCAD} \citep{loh2015regularized}, \textsf{GGL} \citep{egilmez2017graph}, \textsf{DDGL} \citep{egilmez2017graph}, \textsf {SLTP} \citep{wang2020learning}, and GOLAZO \citep{lauritzen2022locally}. \textsf{GLASSO}, \textsf{CLIME} and \textsf{GSCAD} focus on estimating precision matrices for general graphical models. The first two employ the $\ell_1$-penalty, while the latter utilizes the SCAD penalty. In contrast, \textsf{GGL}, \textsf{DDGL}, and \textsf{SLTP} are specifically designed for MTP\textsubscript{2} graphical models. Both \textsf{GGL} and \textsf{DDGL} use the $\ell_1$-norm regularization, aiming for estimating \textit{M}-matrices and diagonally dominant \textit{M}-matrices, respectively. Meanwhile, \textsf{SLTP} concentrates on learning graph structures rather than estimating precision matrices. Lastly, \textsf{GOLAZO} is tailored for estimating positive graphical models.

\begin{figure}[h]
    \captionsetup[subfigure]{justification=centering}
    \centering
    \subfloat[]{
        \centering
        \includegraphics[scale=.13]{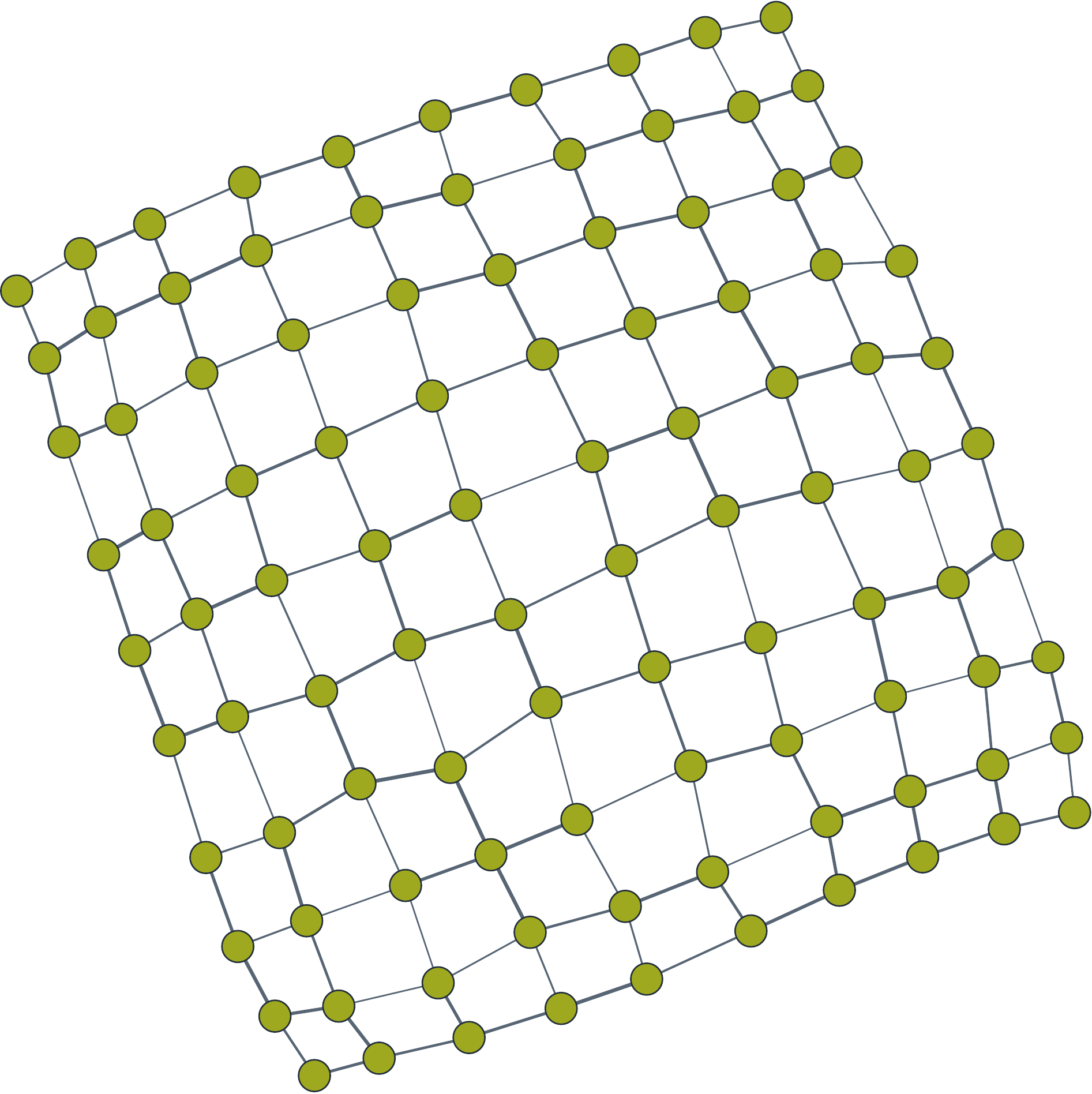}} 
\qquad \quad \
    \subfloat[]{
        \centering
        \includegraphics[scale=.13]{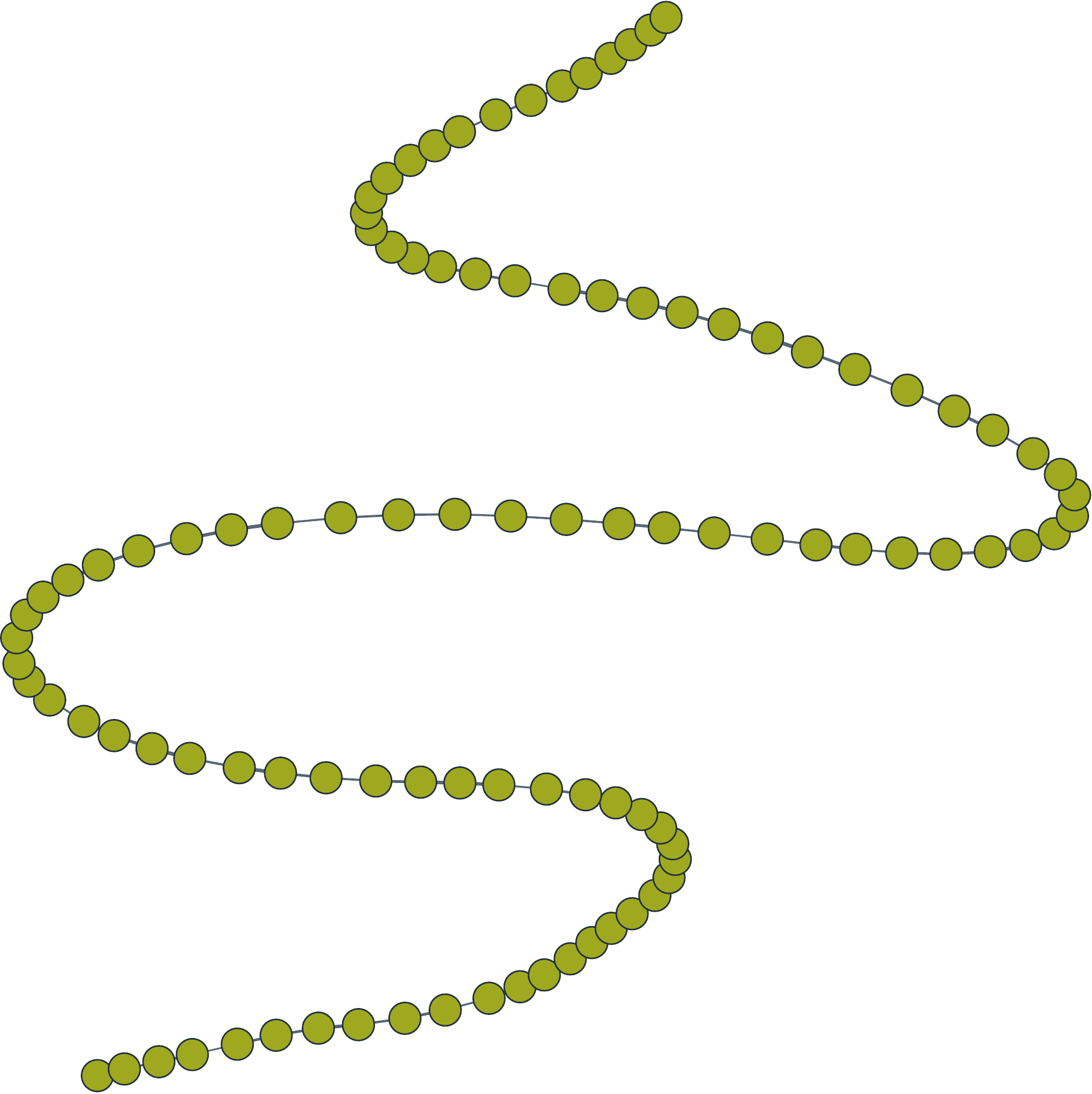}}
\qquad \quad \     
        \subfloat[]{
        \centering
        \includegraphics[scale=.13]{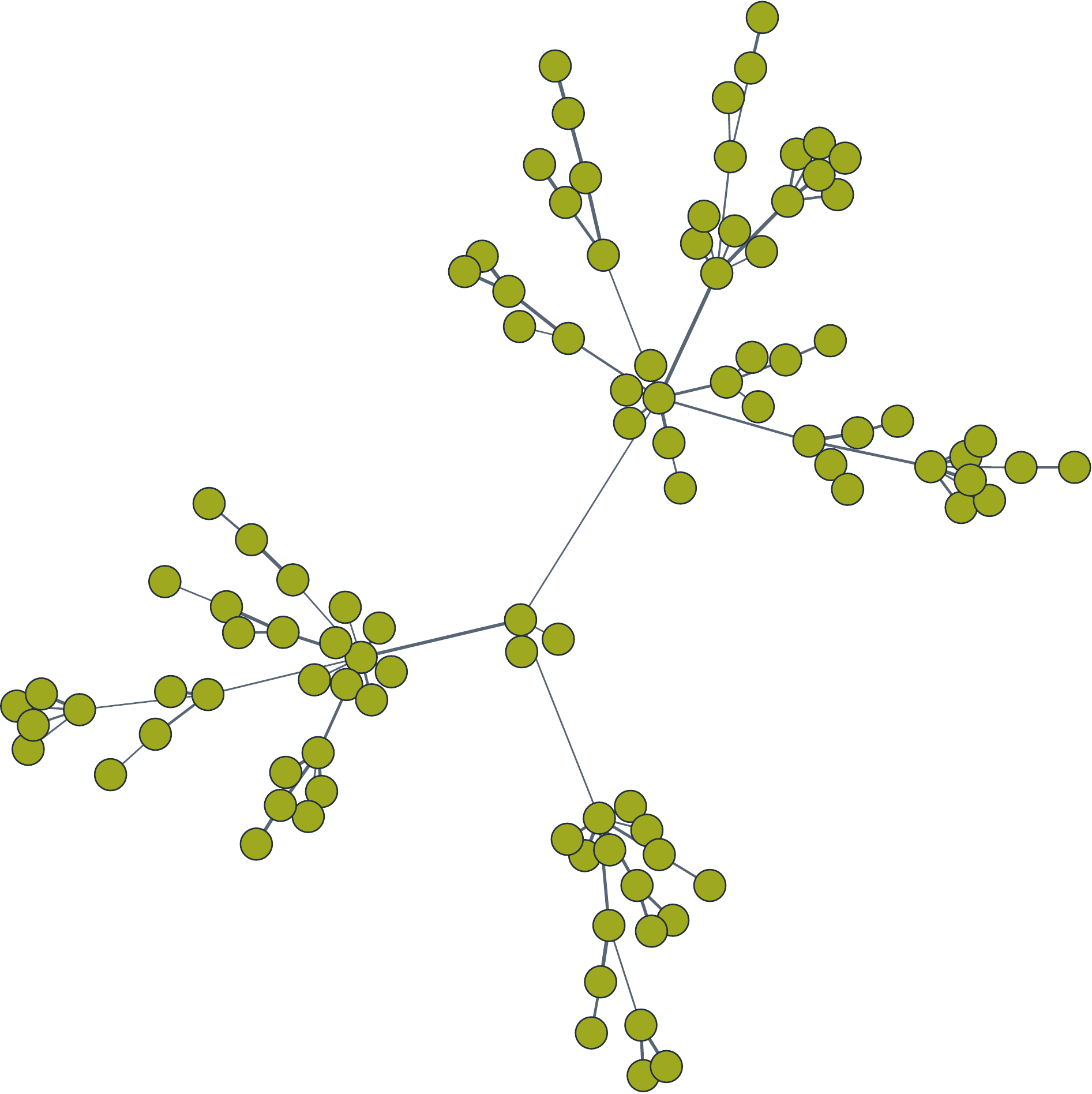}} 
\qquad \quad \
    \subfloat[]{
        \centering
        \includegraphics[scale=.13]{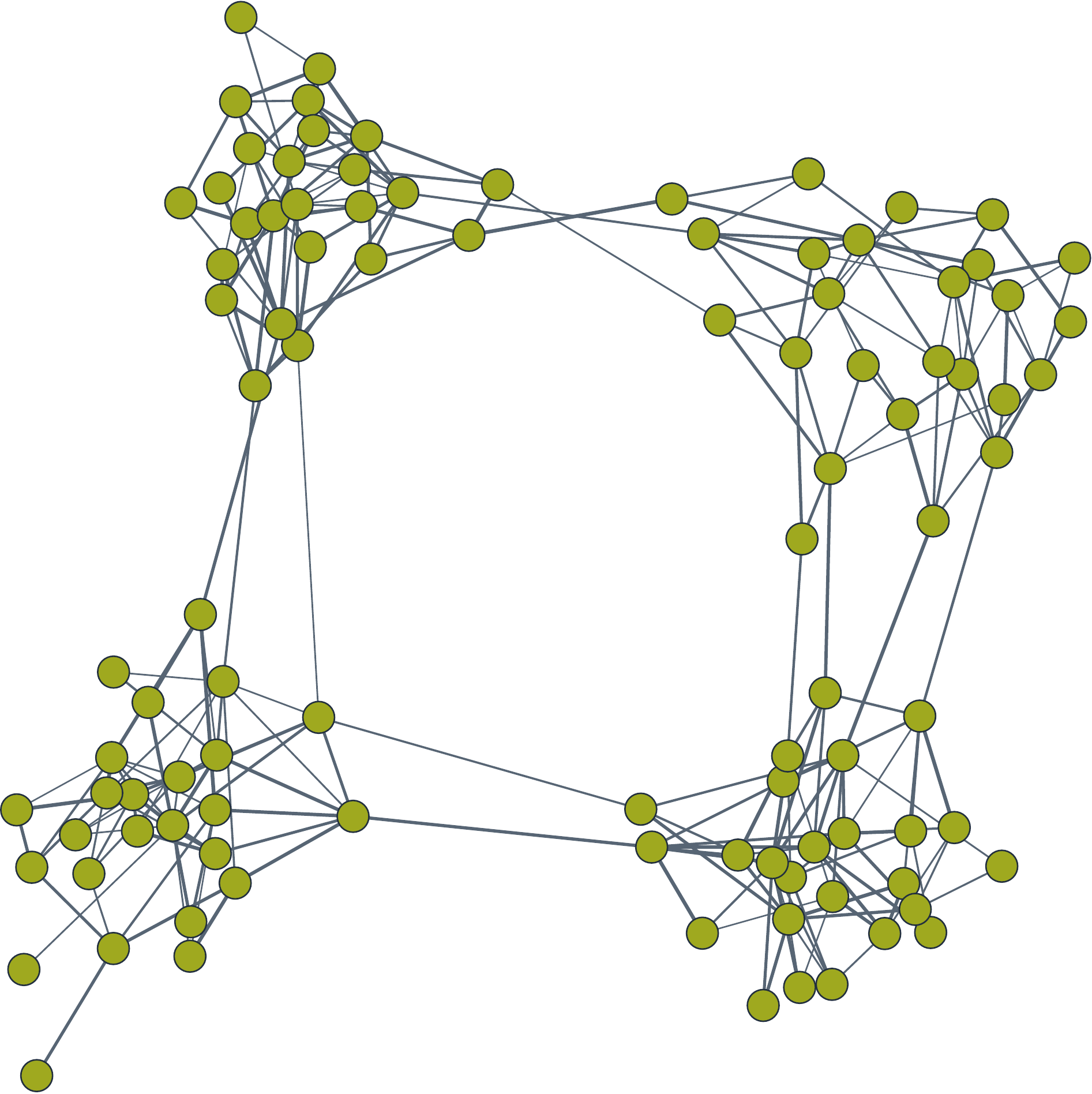}}
    \caption{Illustration of graph structures: (a) grid, (b) line, (c) Barabasi-Albert model, and (d) stochastic block model.}
    \label{fig:graph structures}
    \vspace{-0.2cm}
\end{figure} 

\vspace{-0.2cm}

\subsection{Synthetic Data}\label{sec:synthetic}

We examine four prevalent graph structures in synthetic experiments: grid, line, Barabasi-Albert model \citep{barabasi1999emergence}, and stochastic block model \citep{holland1983stochastic}, as depicted in Figure \ref{fig:graph structures}. Estimating these graph structures is essential in various applications, including social network analysis, image segmentation, and community detection. The graph structures are generated randomly, with each edge assigned a random weight. To assess the performance of precision matrix estimation, we employ estimation error as a metric. For evaluating graph edge selection accuracy, we utilize true/false positive rate and $\mathrm{F}$-$\mathrm{score}$. Further details about experimental settings, including performance measure definitions and synthetic data generation, can be found in Appendix~\ref{appendix-exp}.

 \begin{figure}[t]
    \centering
    \subfloat{
        \centering
        \includegraphics[scale=.245]{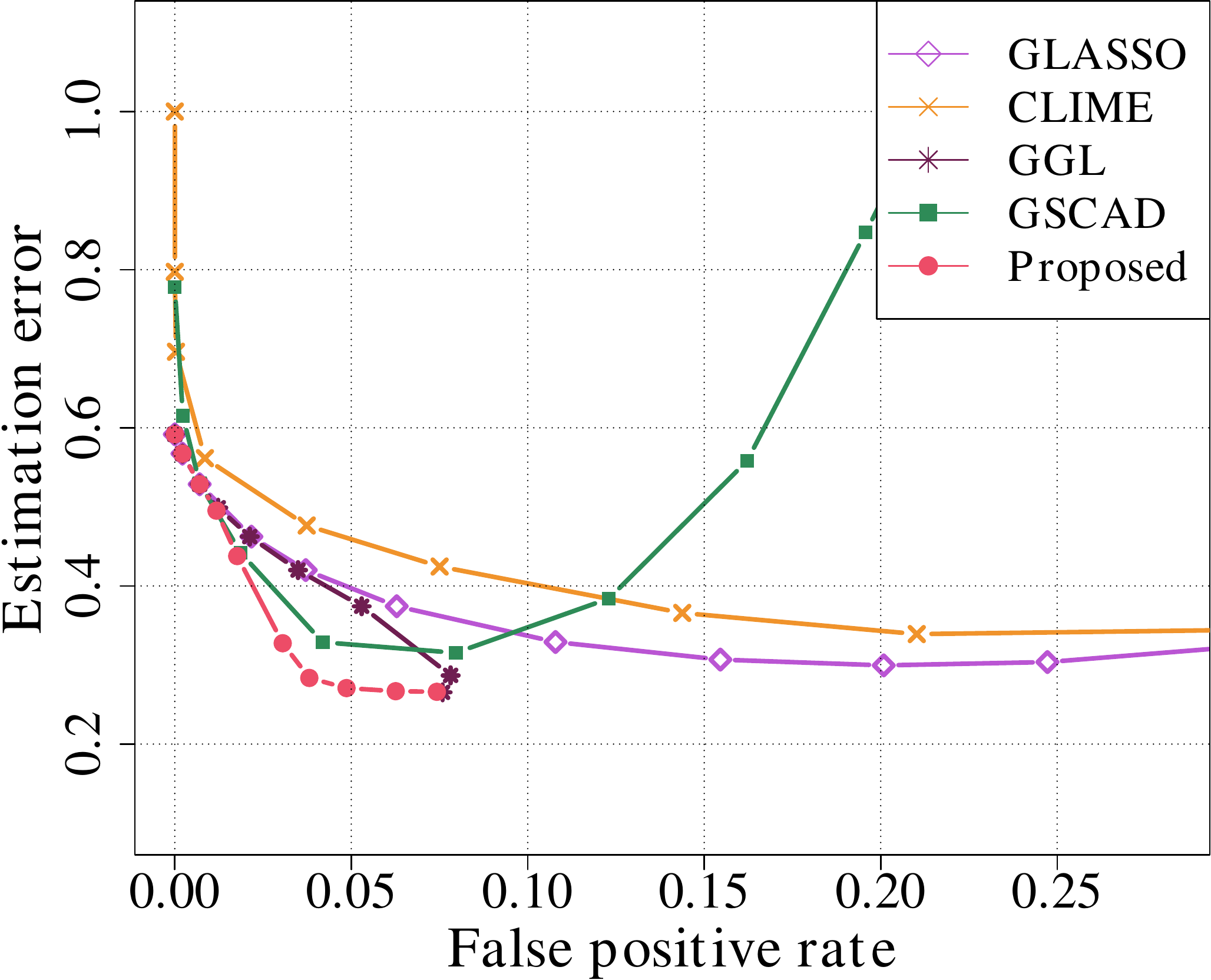}}
\qquad \qquad   
    \subfloat{
        \centering
        \includegraphics[scale=.245]{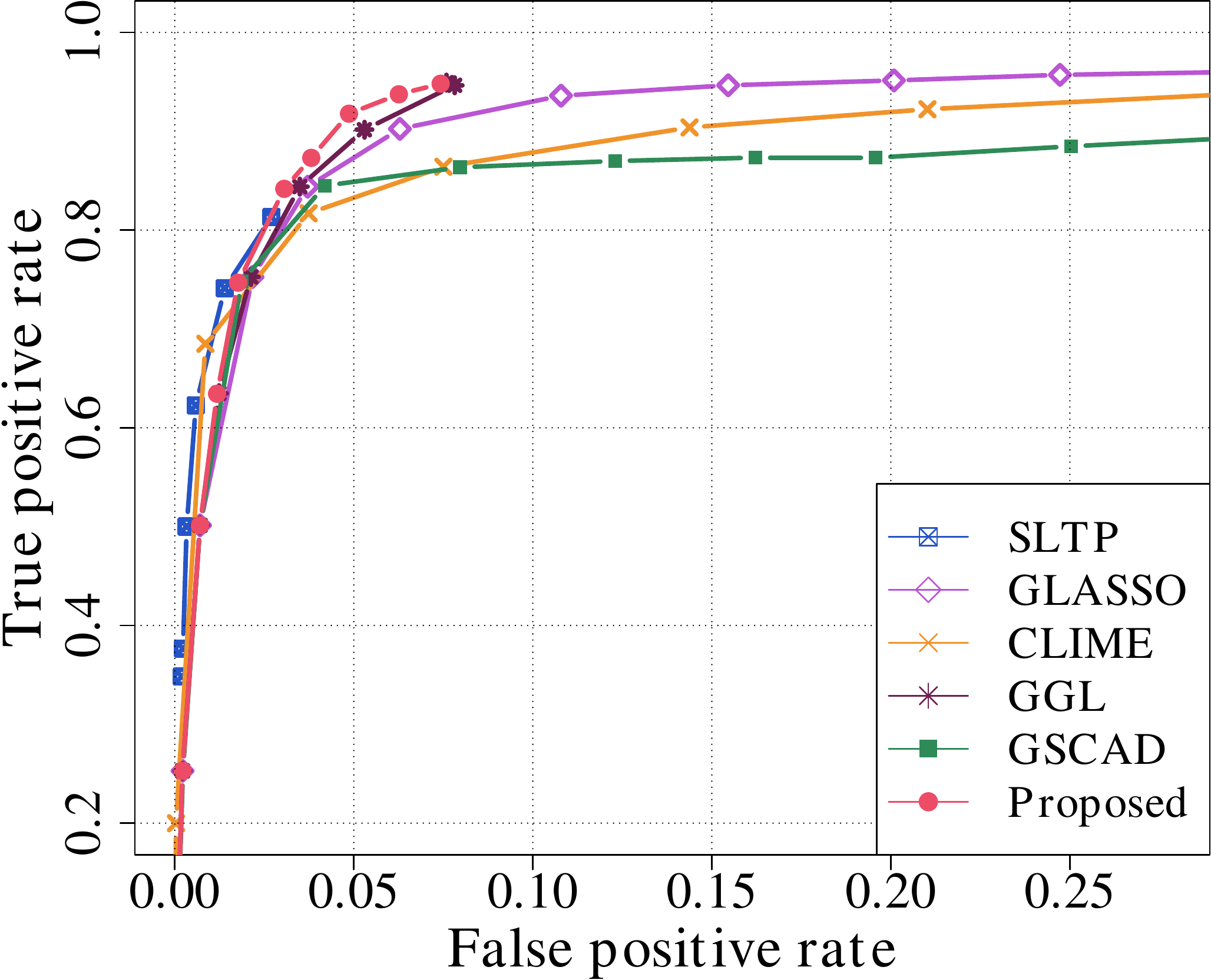}}%
    \caption{Estimation errors and true positive rates versus false positive rates on the grid graph with $\bm \Theta^\star \in \mathcal{M}^p$. The number of nodes $p=100$, and the sample size $n=100$. }
    \label{fig:er-roc-grid}
\end{figure}

 \begin{figure}[t]
    \centering
    \subfloat{
        \centering
        \includegraphics[scale=.245]{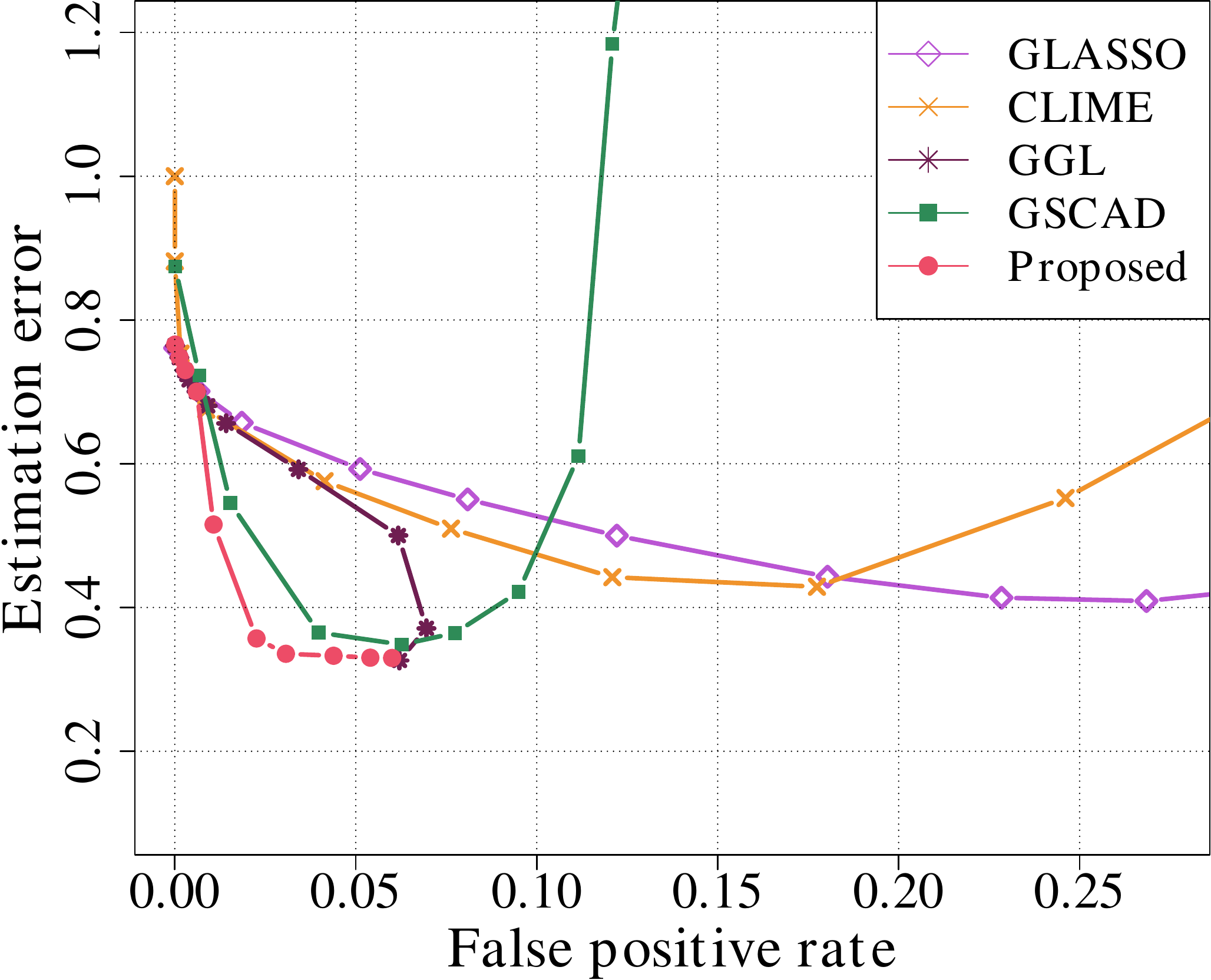}}
 \qquad \qquad 
    \subfloat{
        \centering
        \includegraphics[scale=.245]{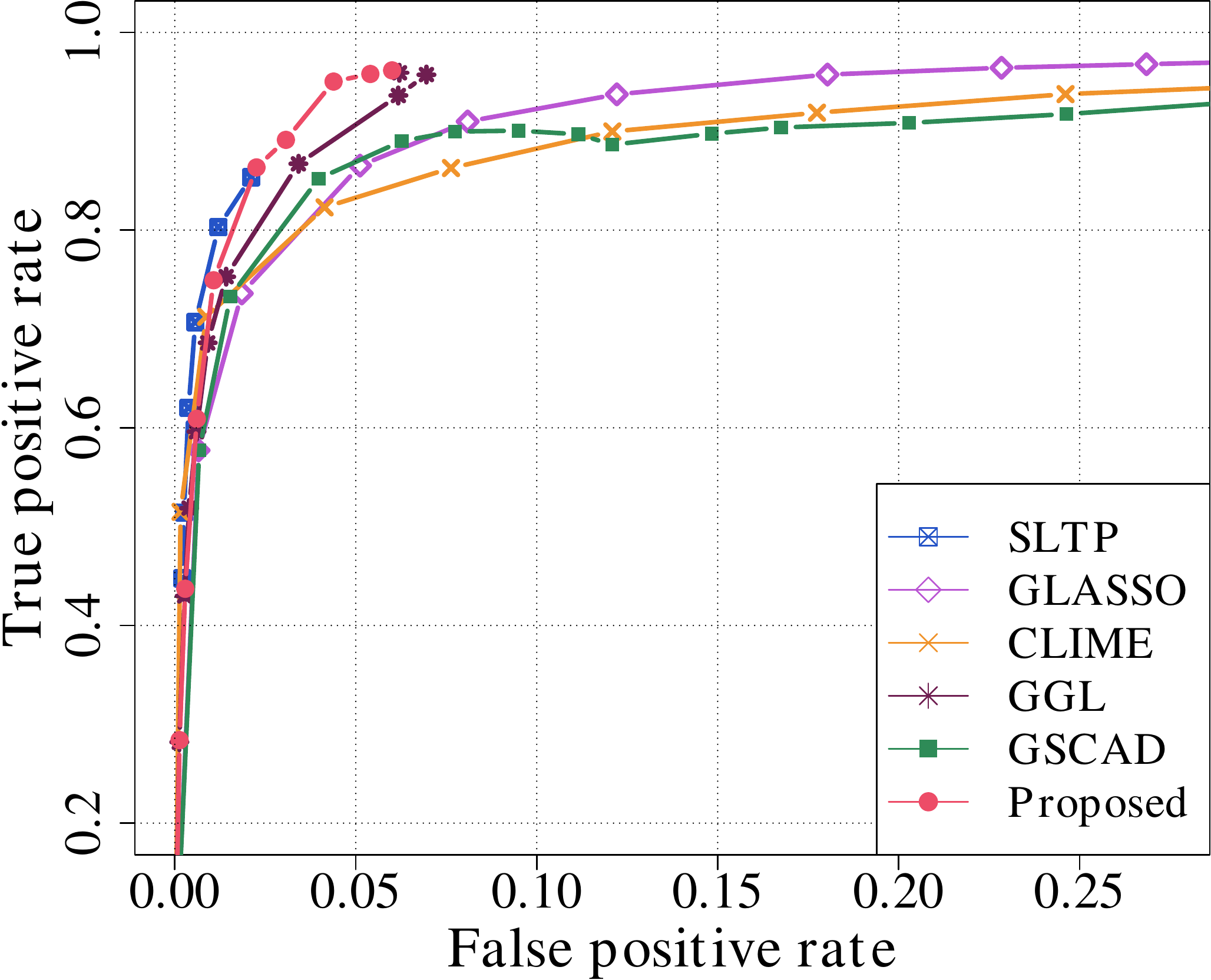}}%
    \caption{Estimation errors and true positive rates versus false positive rates on the line graph with $\bm \Theta^\star \in \mathcal{M}^p$. The number of nodes $p=100$, and the sample size $n=40$.}
    \label{fig:er-roc-line}
\end{figure}

Figure \ref{fig:er-roc-grid} presents estimation errors and true positive rates of various methods as a function of false positive rates on a grid graph, given the number of nodes ($p=100$), sample size ($n=100$), and underlying precision matrix $\bm \Theta^\star \in \mathcal{M}^p$. The curve depicting true positive rates against false positive rates in Figure \ref{fig:er-roc-grid} is known as Receiver Operating Characteristic (ROC) curve, which is generated by setting a range of regularization parameter values from $10^{-3}$ to $1$ for each method, except for \textsf{SLTP}. Instead, \textsf{SLTP} uses a parameter $\gamma$ to control the sparsity of the learned graph, with values ranging from 0.75 to 0.98. The curves in both Figures~\ref{fig:er-roc-grid} and \ref{fig:er-roc-line} are averaged over $50$ realizations.

Figure \ref{fig:er-roc-grid} shows that the proposed method attains a low estimation error and a high true positive rate simultaneously while maintaining a small false positive rate. In comparison, \textsf{GLASSO} and \textsf{CLIME} exhibit the best performance in estimation error and true positive rate within a region of high false positive rates. A higher true positive rate coupled with a lower false positive rate signifies better performance in identifying underlying graph edges. Our method surpasses \textsf{GSCAD} in achieving a smaller estimation error and a higher true positive rate, as we incorporate more prior knowledge of the constraint set $\mathcal{M}^p$. Although \textsf{GGL} also employs $\mathcal{M}^p$, our method achieves a better performance because it refines the \textsf{GGL} estimate in subsequent stages. The comparison with \textsf{SLTP} regarding estimation errors is excluded. It is noteworthy that \textsf{GGL}, \textsf{SLTP}, and our method all possess an upper bound on false positive rates.

 \begin{figure}[t]
    \centering
    \subfloat{
        \centering
        \includegraphics[scale=.24]{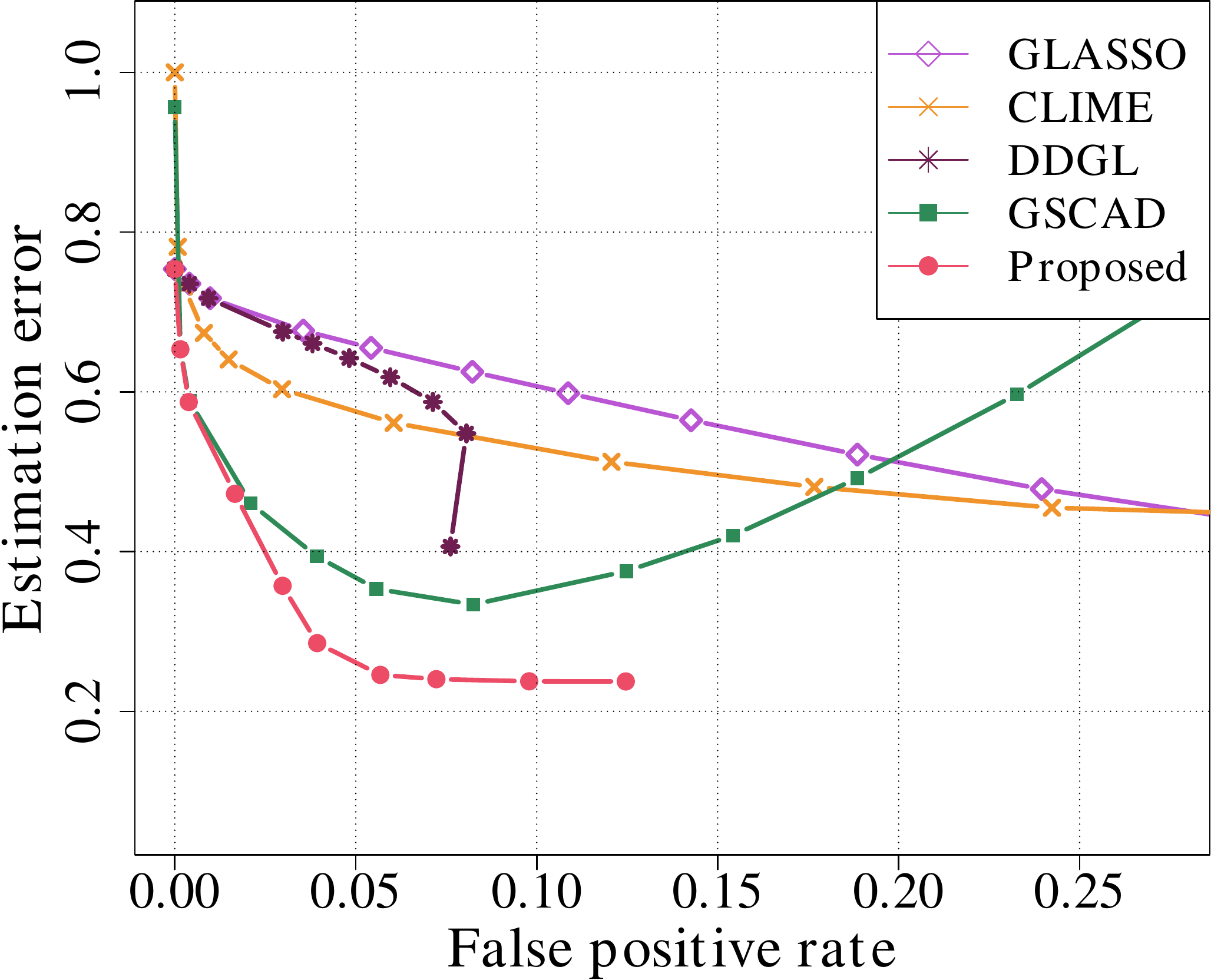}}
   \qquad \qquad 
     \subfloat{
        \centering
        \includegraphics[scale=.24]{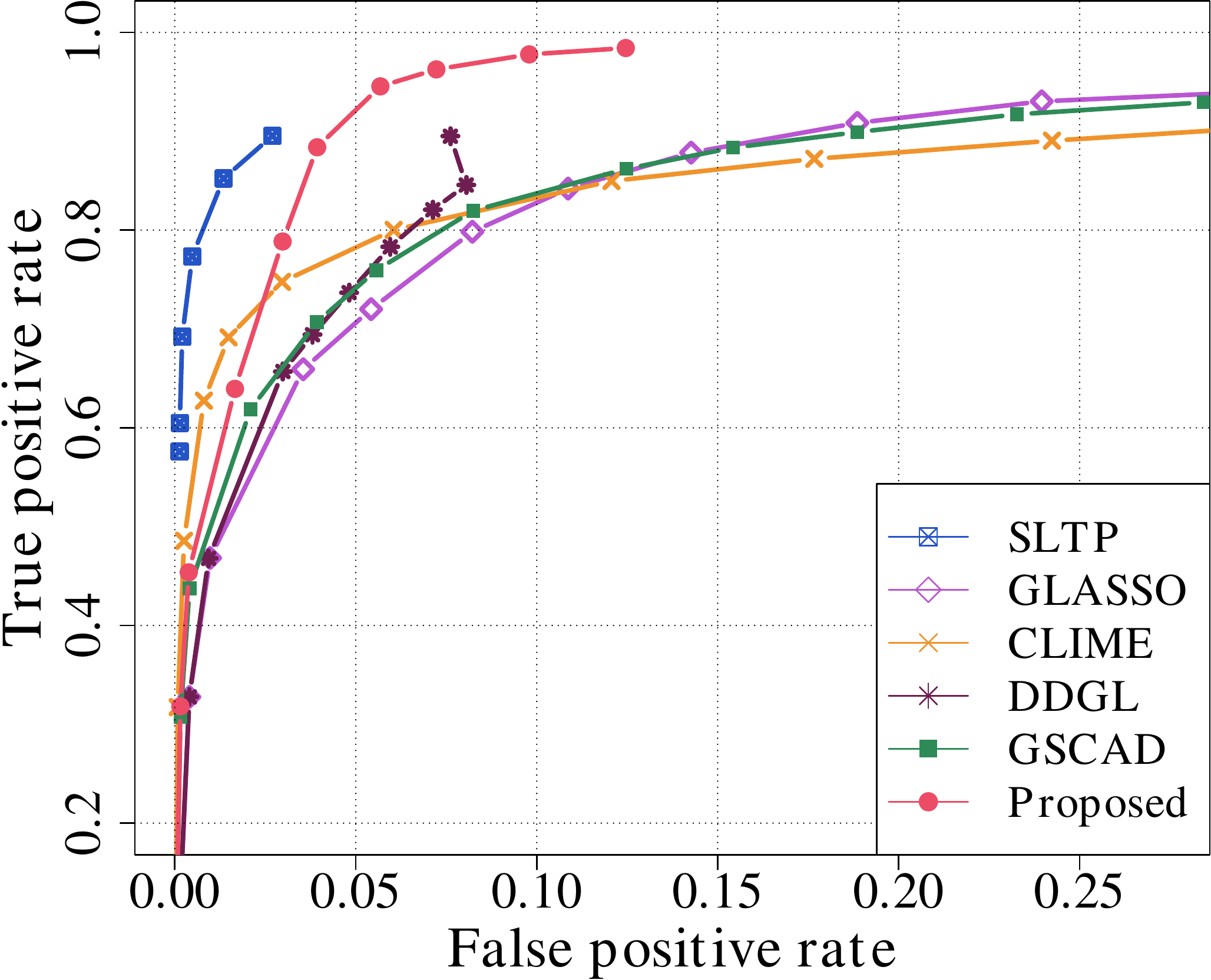}}%
    \caption{Estimation errors and true positive rates versus false positive rates for various methods on Barabasi-Albert model with $\bm \Theta^\star \in \mathcal{M}_D^p$, where $p=100$ and $n=100$.}
    \label{fig:er-roc-tree}
     \vspace{-0.1cm}
\end{figure}

\begin{figure}[t]
    \centering
    \subfloat{
        \centering
        \includegraphics[scale=.24]{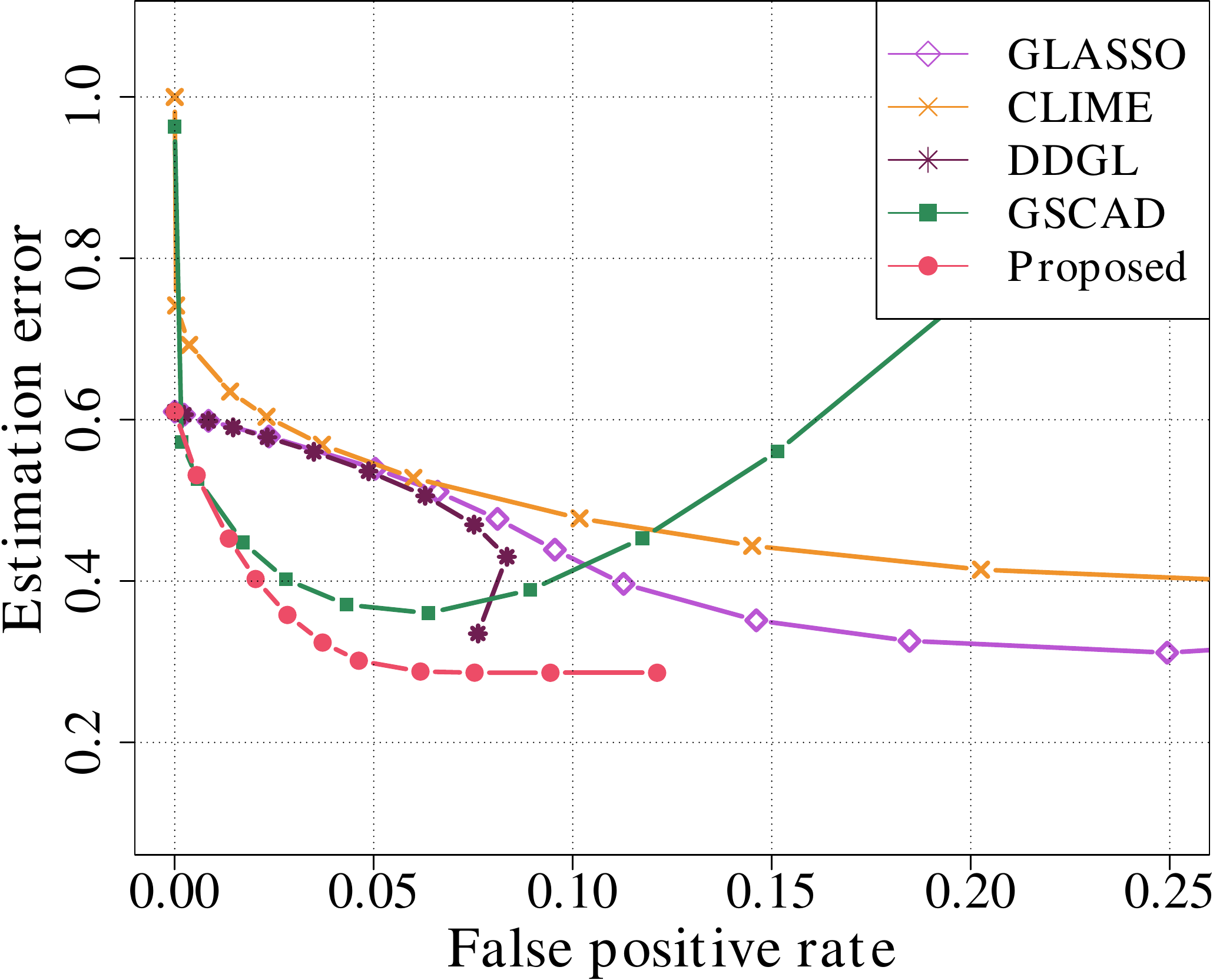}}
     \qquad \qquad 
     \subfloat{
        \centering
        \includegraphics[scale=.24]{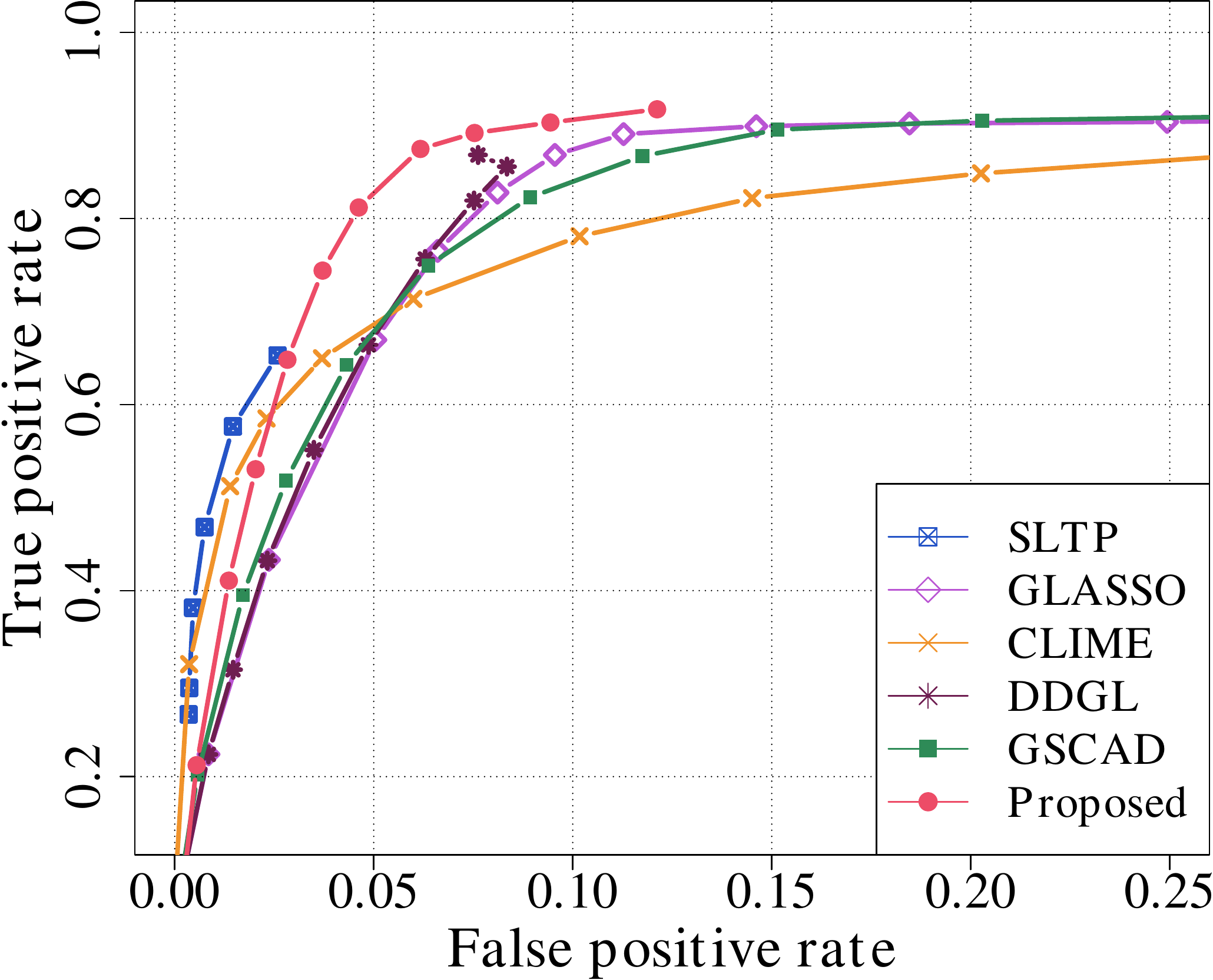}}%
    \caption{Estimation errors and true positive rates versus false positive rates for different methods on the stochastic block model with $\bm \Theta^\star \in \mathcal{M}_D^p$, where $p=100$ and $n=100$.}
    \label{fig:er-roc-modular}
     \vspace{-0.1cm}
\end{figure}

 \begin{figure}[!h]
    \captionsetup[subfigure]{justification=centering}
    \centering
    \subfloat{
        \centering
        \includegraphics[scale=.29]{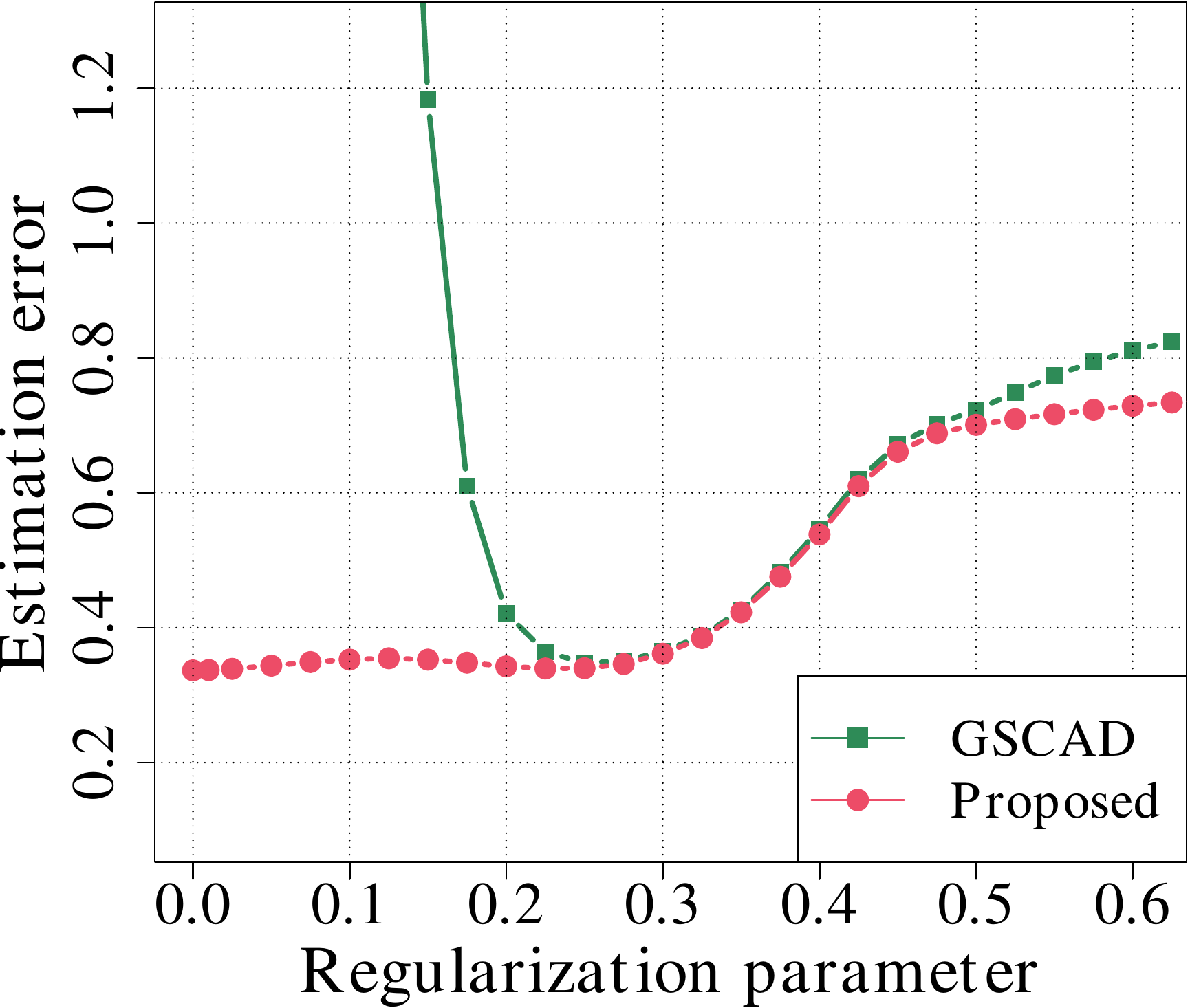}}
\,    
  \qquad \quad \
    \subfloat{
        \centering
        \includegraphics[scale=.29]{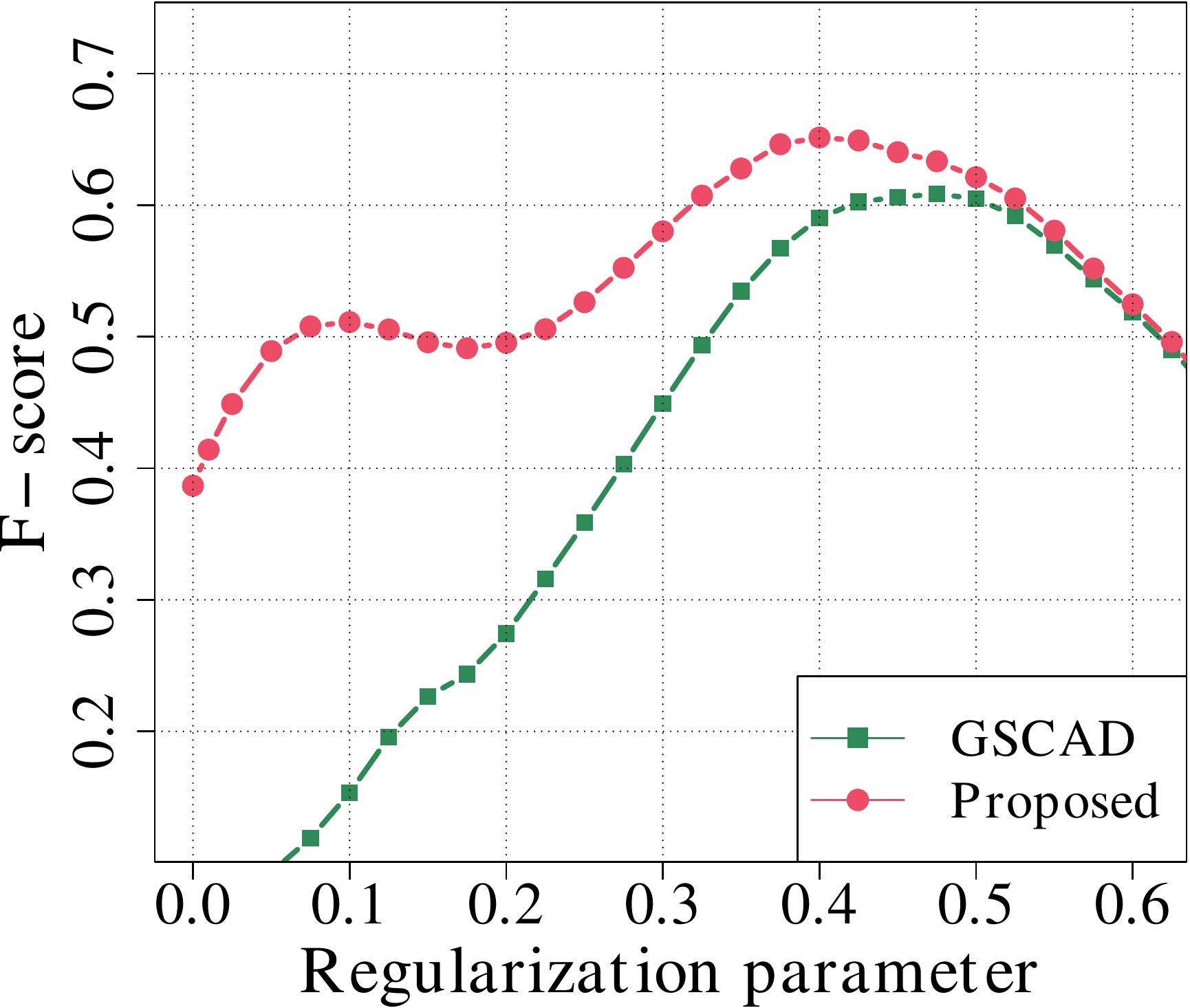}}%
    \caption{Estimation errors and $\mathrm{F}\text{-}\mathrm{scores}$ as a function of regularization parameter values on a line graph.}
    \label{fig:sensitivity-line}
    \vspace{-0.2cm}
\end{figure}

Figure \ref{fig:er-roc-line} displays the estimation errors and true positive rates of different methods as a function of false positive rates on a line graph. Our proposed method consistently achieves smaller estimation errors and higher true positive rates compared to state-of-the-art methods in the region of low false positive rates. This region is of primary interest, as an ideal estimate should exhibit low estimation error while maintaining high true positive rate and low false positive rate. Consequently, our method is capable of simultaneously achieving a reduced precision matrix estimation error and enhanced graph edge selection accuracy.

Figure \ref{fig:er-roc-tree} presents comparisons on the Barabasi-Albert model with the underlying precision matrix $\bm \Theta^\star \in \mathcal{M}_D^p$. Our method consistently achieves lower estimation errors and higher true positive rates compared to state-of-the-art methods, highlighting its superior performance in both precision matrix estimation and graph edge identification. \textsf{SLTP} attains a higher true positive rate than all other methods when the false positive rate is below 0.05; however, its highest true positive rate is still lower than that of our method. Similar observations can be made for the stochastic block model, as displayed in Figure \ref{fig:er-roc-modular}. The curves in Figures~\ref{fig:er-roc-tree} and \ref{fig:er-roc-modular} are averaged over $50$ realizations.

Figure~\ref{fig:sensitivity-line} illustrates the robustness of \textsf{GSCAD} and our method regarding regularization parameter choices in terms of estimation error and $\mathrm{F}\text{-}\mathrm{score}$. Experiments are conducted on a line graph with $p=100$ and $n=40$. Figure~\ref{fig:sensitivity-line} reveals that both estimation errors and $\mathrm{F}\text{-}\mathrm{scores}$ of our method are more stable across varying regularization parameter values than those of \textsf{GSCAD}. Notably, the estimation error of \textsf{GSCAD} grows rapidly as the regularization parameter decreases, while that of our method remains steady. Thus, our method exhibits less sensitivity to the regularization parameter selection than \textsf{GSCAD}.

The key difference between our method and \textsf{GSCAD} is the former imposes an additional \textit{M}-matrix constraint, while the latter does not. Thus, our method searches for a solution within a smaller space than \textsf{GSCAD}, excluding some stationary points that yield large estimation errors. This is why our method is more robust to regularization parameter selection. In fact, an extremely small regularization parameter can render the optimization problem for \textsf{GSCAD} ill-defined, meaning the minimum is not finite.

\begin{table}[h!]
\centering
\setlength{\tabcolsep}{13pt}
\begin{tabular}{@{}lccccc@{}}
\toprule
$n/p$ & \textsf{GLASSO} & \textsf{CLIME} & \textsf{GSCAD} & \textsf{GGL}   & \textsf{Proposed} \\ \midrule
1                 & 0.421  & 0.472 & 0.328 & 0.403 & \textbf{0.267}    \\
2                 & 0.366  & 0.423 & 0.212 & 0.344 & \textbf{0.171}    \\
5                 & 0.358  & 0.242 & 0.110 & 0.297 & \textbf{0.100}    \\ \bottomrule
\end{tabular}
\caption{Estimation errors for various methods across varying sample size ratios, with a consistently low false positive rate (FPR < 0.05).}
\label{tab:sample_size_performance}
\vspace{-0.1cm}
\end{table}

Table~\ref{tab:sample_size_performance} displays the estimation errors for various methods under varying sample size ratios $n/p$, maintaining a low false positive rate (FPR < 0.05). Our method consistently achieves lower estimation errors compared to state-of-the-art methods across a range of sample size ratios. Notably, the superior performance of our method becomes more evident in cases with smaller sample size ratios, indicative of high-dimensional scenarios.

\subsection{Financial Time-series Data}
In this subsection, we present comparisons of various methods on the financial time-series data, where the MTP\textsubscript{2} assumption has been well-justified due to the market factor causing positive dependence among stocks \citep{Agrawal20,cardoso2020algorithms}. We collect data from 204 stocks composing the S\&P 500 index for the period between January 5th, 2004, and December 30th, 2006, resulting in 503 observations per stock (\textit{i.e.}, $p=204$ and $n=503$). We construct the log-returns data matrix $\bm X \in \mathbb{R}^{n \times p}$ as follows:
\begin{equation}
X_{i,j} = \log P_{i,j} - \log P_{i-1, j},
\end{equation}
where $P_{i,j}$ represents the closing price of the $j$-th stock on the $i$-th day. We then use the sample correlation matrix as input for each method. The 204 stocks are categorized into five sectors according to the Global Industry Classification Standard (GICS) system: Consumer Staples, Consumer Discretionary, Industrials, Energy, and Information Technology. We compare the proposed method with \textsf{GLASSO}, \textsf{GSCAD}, \textsf{GGL}, and \textsf{GOLAZO} on this financial time-series data.

In contrast to synthetic data, we cannot evaluate estimation error for each method, as the underlying precision matrix is unavailable. However, we expect stocks within the same sector to be somewhat interconnected. Consequently, we use the \textit{modularity} \citep{newman2006modularity} to assess the estimation performance. Define a graph $\mathcal{G}=\left( \mathcal{V}, \mathcal{E}, \bm A\right)$, where $\mathcal{V}$ is the vertex set, $\mathcal{E}$ is the edge set, and $\bm A$ is the adjacency matrix. The \textit{modularity} of graph $\mathcal{G}$ is given by:
\begin{equation*}
    Q :=  \frac{1}{2 |\mathcal{E}|} \sum_{i, j \in |\mathcal{V}|}\Big(A_{ij} - \frac{d_id_j}{2|\mathcal{E}|}\Big)\delta(c_i, c_j),
\end{equation*}
where $d_i$ represents the degree of the $i$-th node, $c_i $ denotes the type of the $i$-th node, and $\delta(\cdot, \cdot)$ is the Kronecker delta function with $\delta(a, b) =1$ if $a=b$ and $0$ otherwise. A stock graph with high \textit{modularity} exhibits dense connections between stocks within the same sector and sparse connections between stocks from different sectors. Therefore, a higher \textit{modularity} value indicates a better representation of the actual stock network.

 \begin{figure*}[htb]
    \captionsetup[subfigure]{justification=centering}
    \centering
    \begin{subfigure}[t]{0.2\textwidth}
        \centering
        \includegraphics[scale=.305]{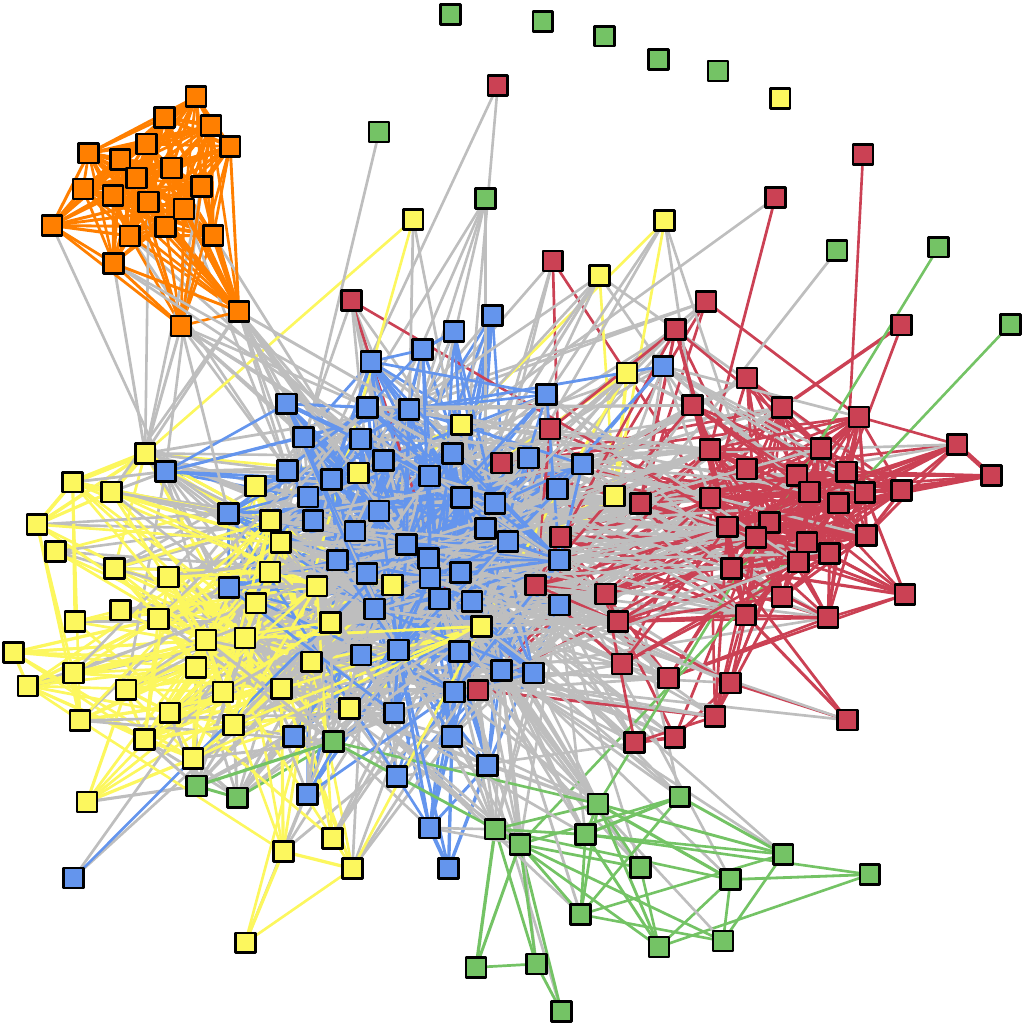}
        \caption{\textsf{GLASSO}}
    \end{subfigure}%
        \begin{subfigure}[t]{0.2\textwidth}
        \centering
        \includegraphics[scale=.305]{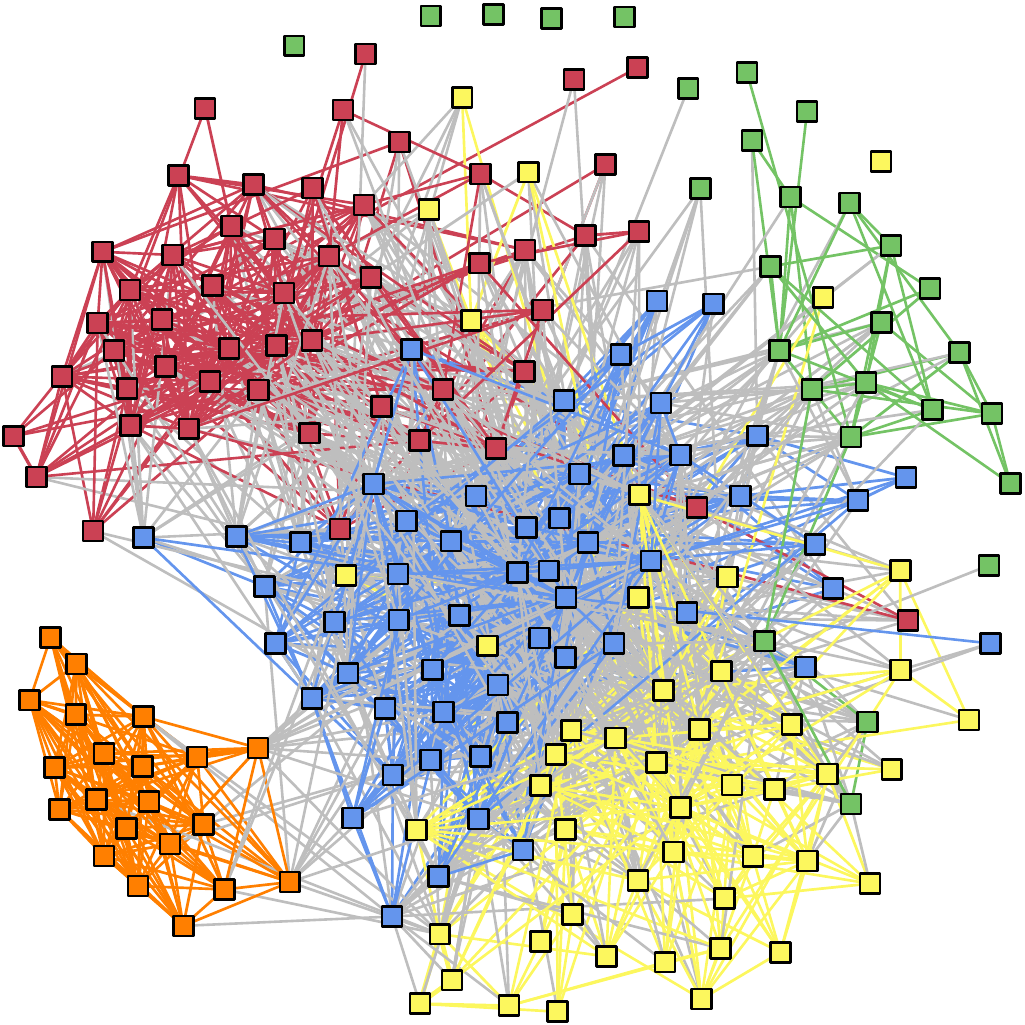}
        \caption{\textsf{GSCAD}}
    \end{subfigure}%
    \begin{subfigure}[t]{0.2\textwidth}
        \centering
        \includegraphics[scale=.305]{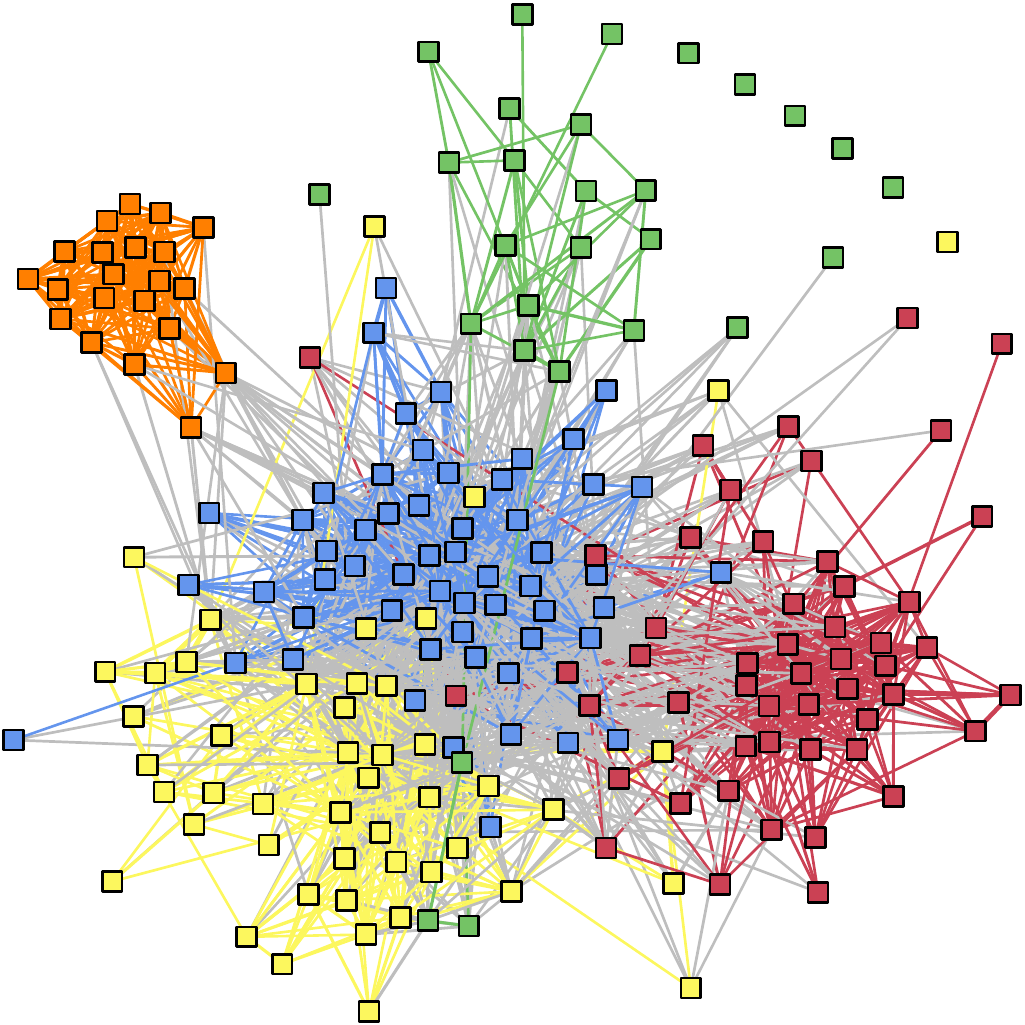}
        \caption{\textsf{GGL}}
    \end{subfigure}%
        \begin{subfigure}[t]{0.2\textwidth}
        \centering
        \includegraphics[scale=.305]{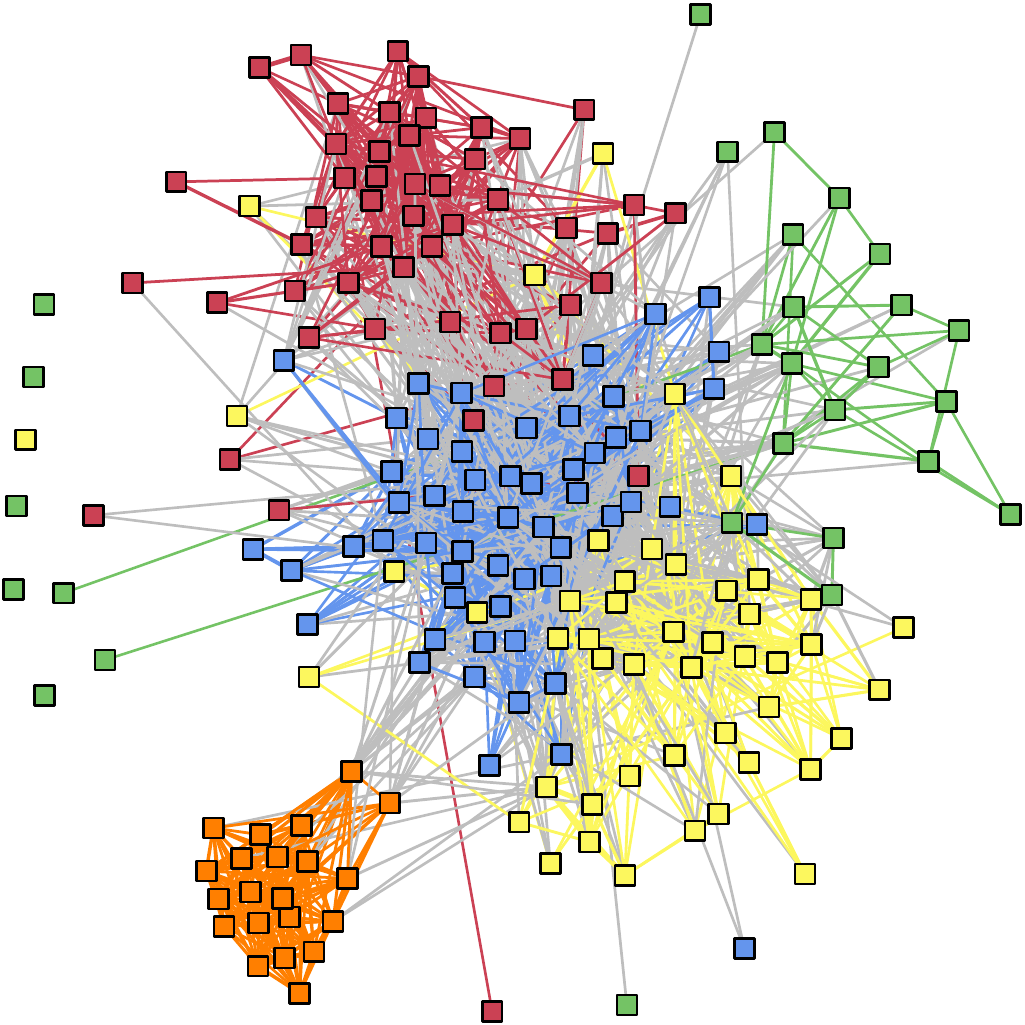}
        \caption{\textsf{GOLAZO}}
    \end{subfigure}%
    \begin{subfigure}[t]{0.2\textwidth}
        \centering
        \includegraphics[scale=.305]{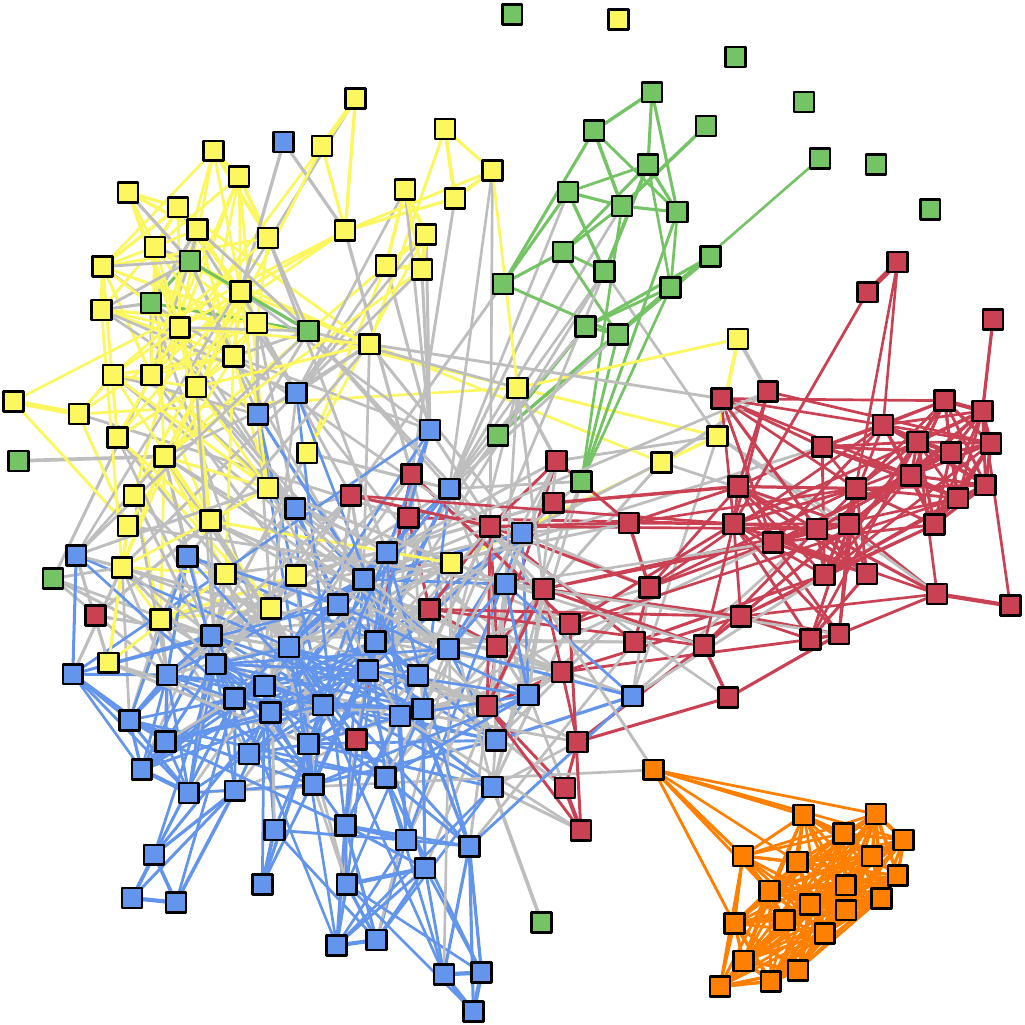}
        \caption{\textsf{Proposed}}
    \end{subfigure}%
    \caption{Stock graphs characterized by the precision matrices estimated via (a) \textsf{GLASSO}, (b) \textsf{GSCAD}, (c) \textsf{GGL}, (d) \textsf{GOLAZO}, and (e) the proposed method. The \textit{modularity} values for \textsf{GLASSO}, \textsf{GSCAD}, \textsf{GGL}, \textsf{GOLAZO}, and our method are 0.3534, 0.3433, 0.3530, 0.3578, and 0.4978, respectively. We fine-tune the regularization parameter for each method based on the \textit{modularity} value while allowing only a few isolated nodes.}
    \label{fig:stock}
\end{figure*}

\begin{table*}[ht]
\centering
\caption{\textit{Modularity} values of S\&P 500 stock graphs learned via various methods using a rolling-window approach, covering the period from January 3rd, 2014, to October 31st, 2017.}
\vspace{-0.1cm}
\setlength{\tabcolsep}{5.05pt}
\begin{tabular}{@{}lccccccccc@{}}
\toprule
Methods & Jan. 2016 & Apr. 2016 & Jul. 2016 & Oct. 2016 & Jan. 2017 & Apr. 2017 & Jul. 2017 & Oct. 2017 \\ \midrule
\textsf{GLASSO}     & 0.4038    & 0.4072    & 0.4115    & 0.3956    & 0.3982    & 0.3910    & 0.3911    & 0.4184    \\
\textsf{GSCAD}      & 0.3880    & 0.3850    & 0.3935    & 0.3803    & 0.3848    & 0.3771    & 0.3822    & 0.3860    \\
\textsf{GGL}        & 0.4088    & 0.4118    & 0.4193    & 0.3991    & 0.3927    & 0.3896    & 0.3845    & 0.4156    \\
\textsf{GOLAZO}     & 0.4068    & 0.4098    & 0.4173    & 0.4007    & 0.4029    & 0.3945    & 0.3953    & 0.4199    \\
\textsf{Proposed}   & \textbf{0.5441}    & \textbf{0.5289}    & \textbf{0.5298}    & \textbf{0.5139}    & \textbf{0.5138}    & \textbf{0.5118}    & \textbf{0.5219}    & \textbf{0.5019}    \\ \bottomrule
\end{tabular}
\label{tab:algo_performance}
\end{table*}

Figure \ref{fig:stock} shows stock graphs based on precision matrices estimated using various methods. It is observed that the performance of the proposed method stands out, as most connections are between nodes within the same sector, and only a few connections (gray-colored edges) are between nodes from distinct sectors, which are often spurious from a practical perspective. We further compare the \textit{modularity} value for each method. The \textit{modularity} values for \textsf{GLASSO}, \textsf{GSCAD}, \textsf{GGL}, \textsf{GOLAZO}, and our method are 0.3534, 0.3433, 0.3530, 0.3578, and 0.4978, respectively, indicating that our method outperforms state-of-the-art methods in representing the stock network.

In the subsequent experiment, we collect data from 82 stocks in the S\&P 500 index, representing three sectors: Communication Services, Utilities, and Real Estate. The data spans a timeframe from January 3rd, 2014, to October 31st, 2017, resulting in 945 observations.

Table~\ref{tab:algo_performance} presents \textit{modularity} values of graphs learned using a rolling-window approach. We select a window length of 504 days (roughly two years of stock market days) and shift it by 63 days (approximately three months of stock market days). Higher \textit{modularity} signifies a more accurate representation of the actual stock network. Table~\ref{tab:algo_performance} demonstrates that the proposed method consistently attains the highest \textit{modularity} values throughout the entire period.

\section{Conclusions}\label{conclusions}

In this paper, we have proposed an adaptive multiple-stage method to estimate both \textit{M}-matrices and diagonally dominant \textit{M}-matrices as precision matrices in MTP\textsubscript{2} Gaussian graphical models. The proposed method starts with an initial estimate obtained via an $\ell_1$-regularized maximum likelihood estimation and refines it in subsequent stages by solving a sequence of adaptive $\ell_1$-regularized problems. We have provided theoretical analysis for estimation error. Experiments on both synthetic and real-world data have shown that our method outperforms state-of-the-art methods in estimating precision matrices and identifying graph edges. We have also demonstrated that incorporating the \textit{M}-matrix constraint improves robustness regarding regularization parameter selection.

\section*{Acknowledgements}

This work was supported by the Hong Kong Research Grants Council GRF 16207820, 16310620, and 16306821, the Hong Kong Innovation and Technology Fund (ITF) MHP/009/20, and the Project of Hetao Shenzhen-Hong Kong Science and Technology Innovation Cooperation Zone under Grant HZQB-KCZYB-2020083. We would also like to thank the anonymous reviewers for their valuable feedback on the manuscript.

\bibliographystyle{icml2023}

\newpage
\appendix

\section{Experimental Settings and Additional Results}\label{appendix-exp}

We first provide a more in-depth description of experimental settings in Appendix~\ref{sec:exp-set}. Next, we present additional experimental results in Appendix~\ref{sec:add-exp}.

\subsection{Experimental Settings}\label{sec:exp-set}

To generate synthetic data, we first construct a graph structure randomly, then associate each edge with a weight that is uniformly sampled from $U(2,5)$. Then we obtain a weighted adjacency matrix $\bm A$, where $A_{ij}$ denotes the graph weight between node $i$ and node $j$. 

To generate an underlying precision matrix $\bm \Theta^\star \in \mathcal{M}^p$, we adopt the procedures used in \citet{slawski2015estimation,wang2020learning}. We set 
\begin{equation}\label{data_generate}
\bm \Theta = \delta \bm I - \bm A,  \quad \mathrm{with} \quad \delta = 1.05\lambda_{\max} \big(\bm A \big),
\end{equation}
where $\lambda_{\max} \big(\bm A \big)$ denotes the largest eigenvalue of $\bm A$. We set $\bm \Theta^\star = \bm E \bm \Theta \bm E$, where $\bm E$ is a diagonal matrix chosen such that the covariance matrix $(\bm \Theta^\star)^{-1}$ has unit diagonal elements. The matrix $\bm \Theta^\star$ as generated above must be an \textit{M}-matrix, but usually not diagonally dominant. Given a precision matrix $\bm \Theta^\star$, we generate $n$ independent and identically distributed observations $\bm x^{(1)}, \ldots, \bm x^{(n)} \sim \mathcal{N} (\bm 0, (\bm \Theta^\star)^{-1} )$. The sample covariance matrix is constructed by $\widehat{\bm \Sigma} = \frac{1}{n} \sum_{i=1}^n \bm x^{(i)} \bm (\bm x^{(i)})^\top$.

We also conduct experiments for estimating diagonally dominant \textit{M}-matrices. To generate an underlying precision matrix $\bm \Theta^\star \in \mathcal{M}_D^p$, we construct $\bm \Theta^\star = \bm D - \bm A + \bm V$, where $\bm D$ and $\bm V$ are diagonal matrices with each $D_{ii} = \sum_{j = 1}^p A_{ij}$ and each $V_{ii}$ uniformly sampled from $U(0,1)$.

To evaluate the estimation performance across different methods, we compute the estimation error as $\big \| \widehat{\bm \Theta}-\bm \Theta^{\star} \big \|_{\mathrm{F}}/\big \| \bm \Theta^{\star}\big \|_{\mathrm{F}}$, where $\widehat{\bm \Theta}$ and $\bm \Theta^{\star} $ denote the estimated and underlying precision matrices, respectively. The true positive rate (TPR) and false positive rate (FPR) are defined as
\begin{equation*}
\mathrm{TPR} = \frac{\mathrm{TP}}{\mathrm{TP} + \mathrm{FN}},  \quad \mathrm{and} \quad  \mathrm{FPR} = \frac{\mathrm{FP}}{\mathrm{FP} + \mathrm{TN}},
\end{equation*}
where $\mathrm{TP}$, $\mathrm{FP}$, $\mathrm{TN}$ and $\mathrm{FN}$ denote the number of true positives, false positives, true negatives, and false negatives, respectively. The true positive rate denotes the proportion of correctly identified edges in the graph, and the false positive rate denotes the proportion of incorrectly identified edges in the graph. An estimate with a higher true positive rate and a lower false positive rate indicates a better performance of identifying the underlying graph edges. The $\mathrm{F}$-$\mathrm{score}$ of a graph is defined as
\begin{equation}\label{measure}
 \mathrm{F}\text{-}\mathrm{score} := \frac{2\mathrm{TP}}{2\mathrm{TP}+\mathrm{FP}+\mathrm{FN}}.
\end{equation} 
The $\mathrm{F}\text{-}\mathrm{score}$ takes values in $[0, 1]$, and $1$ indicates perfect structure recovery. Our method uses the weight-updating function corresponding to the SCAD penalty.

\subsection{Additional Experimental Results}\label{sec:add-exp}

We perform experiments to demonstrate the empirical convergence of the proposed gradient projection algorithm, as illustrated in Figure \ref{fig:Convergence_illustration}. We observe that the number of iterations required for convergence descends rapidly with increasing stage $k$, and empirically the algorithm enjoys a linear convergence rate.

\begin{figure}[htbp]
\centerline{\includegraphics[scale=.45]{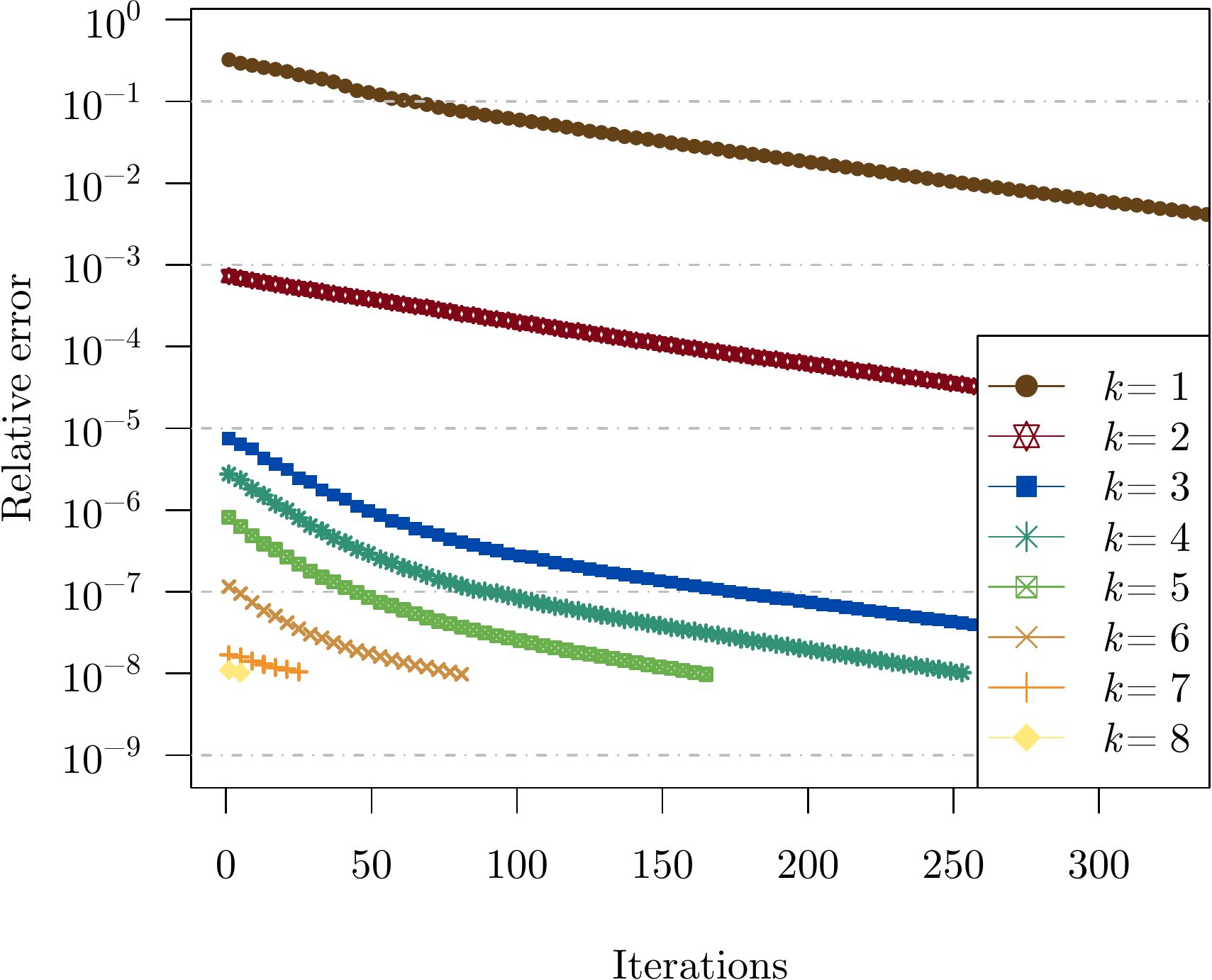}}
\caption{An empirical convergence result of the gradient projection algorithm at different stages. At the $k$-th stage with $k = 1, 2, \ldots, 8$, we plot the relative error $\big \| \bm \Theta_t - \widehat{\bm \Theta}^{(k)} \big \|_{\mathrm{F}}/\big \| \widehat{\bm \Theta}^{(k)}\big \|_{\mathrm{F}}$ as a function of the iteration $t$, where $\left\{\bm \Theta_t\right\}_{t \geq 1}$ is established by the gradient projection algorithm, and $\widehat{\bm \Theta}^{(k)}$ is the optimal solution at the $k$-th stage.}
\label{fig:Convergence_illustration}
\vspace{-0.2cm}
\end{figure}

When estimating diagonally dominant \textit{M}-matrices, our proposed algorithm involves double loops of iterations to solve the subproblem at each stage, potentially leading to computational inefficiency due to the use of Dykstra's algorithm for handling the diagonally dominant \textit{M}-matrix (DDM-matrix) constraint. Table~\ref{tab:methods_comparison} displays the running time of each algorithm for the case of estimating DDM-matrices. We observe that our algorithm is less computationally efficient in this scenario, primarily due to managing multiple constraints to ensure a DDM-matrix solution. To enhance our algorithm's efficiency, we could compute an approximate solution when addressing subproblems in earlier stages, a technique employed in previous studies \citep{xiao2013proximal,fan2018lamm}. Investigating the development of numerical algorithms for handling diagonally dominant $M$-matrix constraints more efficiently would be a valuable future research direction.

\begin{table}[h!]
\centering
\caption{Comparison of running times for methods in the case of estimating diagonally dominant \textit{M}-matrices.}
\vspace{-0.1cm}
\setlength{\tabcolsep}{12pt}
\begin{tabular}{@{}lcccccc@{}}
\toprule
Methods & \textsf{SLTP} & \textsf{GLASSO} & \textsf{CLIME} & \textsf{GSCAD} & \textsf{DDGL} & \textsf{Proposed} \\ \midrule
Running time (s) & 9.48 & 0.05 & 4.06 & 0.14 & 1.17 & 7.85 \\ \bottomrule
\end{tabular}
\label{tab:methods_comparison}
\end{table}

It is essential to emphasize that our proposed algorithm is the sole method effectively addressing the Gaussian maximum likelihood estimation problem under the DDM-matrix constraints. In contrast, \textsf{GLASSO}, \textsf{CLIME}, and \textsf{GSCAD} tackle unconstrained problems and cannot guarantee a DDM-matrix solution. Such a solution provides desirable spectral properties for performing the graph Fourier transform and is crucial for analyzing positive dependence in multivariate Pareto distributions. Furthermore, the popular Block Coordinate Descent (BCD) algorithm cannot ensure convergence to the global minimizer, as shown in Table~\ref{tab:algorithm_comparison}.

We compare the optimality gap for solutions obtained by the BCD algorithm and our proposed algorithm. The optimality gap is evaluated using two approaches: 
\begin{equation*}
\Delta_{\Theta} = \Vert \widehat{\bm \Theta} - \bm \Theta^* \Vert_{\mathrm{F}} / \Vert \bm \Theta^* \Vert_{\mathrm{F}}, \qquad \text{and} \qquad \Delta_f = \hat{f} - f^*,
\end{equation*}
where $\widehat{\bm \Theta}$ is the output from BCD or our algorithm, $\bm \Theta^*$ is the global minimizer obtained through CVX, and $\hat{f}$ and $f^*$ represent the objective function values at $\widehat{\bm \Theta}$ and $\bm \Theta^*$, respectively. Experiments are conducted on the Barabasi-Albert model with 50 nodes ($p = 50$) and the number of samples $n$ ranging from 50 to 5000. For both algorithms, the stopping criterion is met when successive iterations satisfy the condition $\Vert \bm \Theta_{t+1} - \bm \Theta_{t} \Vert_{\mathrm{F}} / \Vert \bm \Theta_t \Vert_{\mathrm{F}} < 10^{-12}$.

Table~\ref{tab:algorithm_comparison} reveals that the popular BCD algorithm exhibits a substantial optimality gap, indicating its failure to converge to the global minimizer. In contrast, our proposed algorithm effectively converges to the minimizer. We note that the BCD algorithm can perform well under the \textit{M}-matrix constraint without diagonal dominance.

\begin{table}[h!]
\centering
\caption{Optimality gap comparison for algorithms in the case of estimating diagonally dominant \textit{M}-matrices.}
\vspace{-0.1cm}
\setlength{\tabcolsep}{22pt}
\begin{tabular}{@{}lcc@{}}
\toprule
Algorithms & BCD & Proposed \\ \midrule
$n=50$      & $\Delta_{\Theta} = 0.5295, \quad \Delta_f = 0.0738 $ & $\Delta_{\Theta} = 2.81 \times 10^{-6}, \quad \Delta_f = 1.49 \times 10^{-12}$ \\
$n=500$     & $ \Delta_{\Theta} = 0.0765, \quad \Delta_f = 0.0022$ & $\Delta_{\Theta} = 1.09 \times 10^{-5}, \quad \Delta_f = 2.68 \times 10^{-12}$ \\
$n=5000$    & $\Delta_{\Theta} = 0.0166, \quad \Delta_f = 0.0004$ & $ \Delta_{\Theta} = 1.39 \times 10^{-5}, \quad \Delta_f = 2.77 \times 10^{-12}$ \\ \bottomrule
\end{tabular}
\label{tab:algorithm_comparison}
\end{table}

\section{Proofs}\label{app-proof}

We first present two lemmas in Appendix~\ref{lemmas}. Then we provide the proofs of Theorem\ref{converge-pgd} in Appendix~\ref{sec:proof-pgd}, Proposition~\ref{solution-PB} in Appendix~\ref{appendix-pb}, and Theorem~\ref{theorem 2} and Corollary~\ref{corollary-result} in Appendix~\ref{proof-sec}.

We begin by introducing the notations used in proofs. Define
\begin{equation}
L_\alpha (\bm \Theta) := \big\lbrace (i, j) \, \big| \, | \Theta_{ij} | \geq  \alpha, \, i \neq j \big\rbrace,
\end{equation}
where $\alpha = c_0 \lambda$ with $c_0$ defined in Assumption \ref{assumption 1}. Then we define the set $T_\alpha (\bm \Theta)$ as follows:
\begin{equation}\label{T-set}
T_\alpha (\bm \Theta) := L_\alpha (\bm \Theta) \cup S^\star,
\end{equation}
where $S^\star$ is defined in \eqref{eq:set-S}. Define
\begin{equation*}
G_{\lambda, \widehat{\bm \Sigma}} (\widetilde{\bm \Theta}) := \underset{\bm \Theta \in \mathcal{M}^p}{\mathsf{arg \ min}} - \log \det(\bm \Theta) + \trace \big(\bm \Theta \widehat{\bm \Sigma} \big) + \sum_{i \neq j} p_\lambda ( | \widetilde{\Theta}_{ij} |)   \left| \Theta_{ij} \right|.
\end{equation*}
For ease of presentation, by a slight abuse of notation, we write $p_\lambda \big( | \bm \Theta_{S^\star}|\big) $ for $\big(  p_\lambda ( |\Theta_{ij}| ) \big)_{(i,j) \in {S^\star}}$.

\subsection{Lemmas}\label{lemmas}

\begin{lemma}\label{lem2}
Let $g(\bm \Theta) = -\log \det (\bm \Theta)$. Define a local region of $\bm \Theta^{\star}$ by
\begin{equation*}
\mathcal{B} \left(\bm \Theta^\star, r \right) = \left\lbrace \bm \Theta \in \mathcal{M}^p \big| \norm{\bm \Theta - \bm \Theta^\star}_{\mathrm{F}} \leq  r \right\rbrace.
\end{equation*}
Then for any $\bm \Theta_1, \bm \Theta_2 \in \mathcal{B} \left(\bm \Theta^{\star}; \lambda_{\max} \left( \bm \Theta^{\star} \right) \right)$, we have
\begin{equation*}
\left\langle \nabla g \left(\bm \Theta_1 \right) - \nabla g \left(\bm \Theta_2 \right), \bm \Theta_1 - \bm \Theta_2 \right\rangle \geq  \frac{1}{4}\lambda_{\max}^{-2} \left( \bm \Theta^{\star} \right) \norm{\bm \Theta_1 - \bm \Theta_2}_{\mathrm{F}}^2.
\end{equation*}
\end{lemma}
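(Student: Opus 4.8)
The plan is to derive the inequality from the strong convexity of $g(\bm\Theta)=-\log\det(\bm\Theta)$ on the region $\mathcal{B}(\bm\Theta^\star,\lambda_{\max}(\bm\Theta^\star))$, quantified through the Hessian of the log-determinant. Recall that $\nabla g(\bm\Theta)=-\bm\Theta^{-1}$ and that the Hessian acts on a symmetric direction $\bm H$ as the quadratic form $\trace\big(\bm\Theta^{-1}\bm H\bm\Theta^{-1}\bm H\big)=\norm{\bm\Theta^{-1/2}\bm H\bm\Theta^{-1/2}}_{\mathrm{F}}^2\ge 0$, so $g$ is convex on $\mathbb{S}_{++}^p$. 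Writing $\bm H:=\bm\Theta_1-\bm\Theta_2$ and $\bm\Theta_t:=\bm\Theta_2+t\bm H$ for $t\in[0,1]$, differentiating $-\bm\Theta_t^{-1}$ along the segment and applying the fundamental theorem of calculus gives
\begin{equation*}
\big\langle \nabla g(\bm\Theta_1)-\nabla g(\bm\Theta_2),\,\bm H\big\rangle
=\int_0^1 \trace\big(\bm\Theta_t^{-1}\bm H\bm\Theta_t^{-1}\bm H\big)\,dt .
\end{equation*}
It therefore suffices to lower-bound the integrand uniformly in $t$.

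For the integrand, diagonalizing $\bm\Theta_t$ and expressing the trace in its eigenbasis yields
\begin{equation*}
\trace\big(\bm\Theta_t^{-1}\bm H\bm\Theta_t^{-1}\bm H\big)\ge \lambda_{\min}\big(\bm\Theta_t^{-1}\big)^2\,\norm{\bm H}_{\mathrm{F}}^2=\lambda_{\max}(\bm\Theta_t)^{-2}\,\norm{\bm H}_{\mathrm{F}}^2 .
\end{equation*}
The remaining task is to control $\lambda_{\max}(\bm\Theta_t)$. Since $\mathcal{M}^p$ and the Frobenius ball are both convex, the entire segment $\{\bm\Theta_t\}_{t\in[0,1]}$ lies in $\mathcal{B}(\bm\Theta^\star,\lambda_{\max}(\bm\Theta^\star))$; in particular each $\bm\Theta_t\succ\bm 0$, so the integrand is well defined along the whole path. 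Using $\norm{\cdot}_2\le\norm{\cdot}_{\mathrm{F}}$ together with the triangle inequality for the spectral norm,
\begin{equation*}
\lambda_{\max}(\bm\Theta_t)\le\lambda_{\max}(\bm\Theta^\star)+\norm{\bm\Theta_t-\bm\Theta^\star}_2\le\lambda_{\max}(\bm\Theta^\star)+\norm{\bm\Theta_t-\bm\Theta^\star}_{\mathrm{F}}\le 2\lambda_{\max}(\bm\Theta^\star),
\end{equation*}
whence $\lambda_{\max}(\bm\Theta_t)^{-2}\ge\tfrac14\lambda_{\max}^{-2}(\bm\Theta^\star)$. Substituting this uniform bound into the integral and integrating over $[0,1]$ delivers the claim.

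The computations are routine; the only points requiring care are (i) the eigenbasis lower bound $\trace(\bm\Theta_t^{-1}\bm H\bm\Theta_t^{-1}\bm H)\ge\lambda_{\max}(\bm\Theta_t)^{-2}\norm{\bm H}_{\mathrm{F}}^2$, which follows by writing the trace as $\sum_{i,j}\mu_i^{-1}\mu_j^{-1}\widetilde H_{ij}^2$ with $\mu_i$ the eigenvalues of $\bm\Theta_t$ and $\widetilde{\bm H}$ the rotation of $\bm H$ into that eigenbasis, and (ii) the radius bookkeeping that converts $\norm{\bm\Theta_t-\bm\Theta^\star}_{\mathrm{F}}\le\lambda_{\max}(\bm\Theta^\star)$ into the factor of $2$ — precisely why the radius of $\mathcal{B}$ is chosen to be $\lambda_{\max}(\bm\Theta^\star)$. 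The main conceptual step, rather than any hard calculation, is recognizing that convexity of the feasible set keeps $\bm\Theta_t$ positive definite throughout the segment, which is what legitimizes the integral representation in the first place.
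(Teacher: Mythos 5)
Your proof is correct and follows essentially the same route as the paper's: both arguments reduce the claim to the uniform Hessian lower bound $\lambda_{\max}^{-2}(\bm \Theta_t) \geq \tfrac{1}{4}\lambda_{\max}^{-2}(\bm \Theta^\star)$ along the segment, obtained from Weyl's (triangle) inequality together with the fact that the convex ball of radius $\lambda_{\max}(\bm \Theta^\star)$ contains the whole segment. The only cosmetic difference is that you use the integral representation of the gradient difference, whereas the paper expands $g$ to second order around each endpoint via the mean value theorem and adds the two resulting inequalities.
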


\begin{proof}
For any $\bm \Theta_1$, $\bm \Theta_2 \in \mathcal{B} \left(\bm \Theta^{\star}; \lambda_{\max} \left( \bm \Theta^{\star} \right) \right)$, by Mean Value Theorem, one obtains
\begin{equation*}
g \left(\bm \Theta_2 \right) = g \left(\bm \Theta_1 \right) + \left\langle\nabla g \left(\bm \Theta_1 \right), \, \bm \Theta_2 - \bm \Theta_1 \right\rangle  + \frac{1}{2} \vect{ \bm \Theta_2 - \bm \Theta_1}^\top \nabla^2 g \left(\bm \Theta_t \right)  \vect{\bm \Theta_2 -\bm \Theta_1}, 
\end{equation*}
where $\bm \Theta_t = t\bm \Theta_2 + (1-t)\bm \Theta_1$ with $t \in [0, 1]$. Note that $\bm \Theta_t \in \mathcal{B} \left(\bm \Theta^{\star};r \right) $. One has
\begin{equation*}
\lambda_{\min} \left( \nabla^2 g \left(\bm \Theta_t \right) \right) = \lambda_{\min} \big(  \left( \bm \Theta_t \otimes \bm \Theta_t \right)^{-1} \big) = \lambda_{\max}^{-2} \left( \bm \Theta_t  \right), 
\end{equation*}
where the second equality follows from the property that the eigenvalues of $\bm A \otimes \bm B$ are $\lambda_i \mu_j$, where $\lambda_1, \ldots, \lambda_p$ and $\mu_1, \ldots, \mu_p$ are the eigenvalues of $\bm A \in \mathbb{R}^{p \times p}$ and $\bm B \in \mathbb{R}^{p \times p}$, respectively. Following from the Weyl's inequality, one obtains
\begin{equation*}
\lambda_{\max} \left( \bm \Theta_t  \right) \leq  \lambda_{\max} \left( \bm \Theta_t - \bm \Theta^\star \right) + \lambda_{\max} \left(\bm \Theta^\star \right)  \leq  r + \lambda_{\max} \left( \bm \Theta^\star \right).
\end{equation*} 
Therefore, We have
\begin{equation}\label{nq24}
g \left(\bm \Theta_2 \right) \geq g \left(\bm \Theta_1 \right) + \left\langle\nabla g \left(\bm \Theta_1 \right), \bm \Theta_2 - \bm \Theta_1 \right\rangle  + \frac{1}{2} \left( r + \lambda_{\max} \left( \bm \Theta^\star \right) \right)^{-2}\norm{ \bm \Theta_2 - \bm \Theta_1}_{\mathrm{F}}^2,
\end{equation}
and
\begin{equation}\label{nq25}
g \left(\bm \Theta_1 \right) \geq g \left(\bm \Theta_2 \right) + \left\langle\nabla g \left(\bm \Theta_2 \right), \bm \Theta_1 - \bm \Theta_2\right\rangle + \frac{1}{2} \left( r + \lambda_{\max} \left( \bm \Theta^\star \right) \right)^{-2}\norm{ \bm \Theta_2 - \bm \Theta_1}_{\mathrm{F}}^2.
\end{equation}
Setting $r=\lambda_{\max} \left( \bm \Theta^\star \right)$ and combining \eqref{nq24} and \eqref{nq25}, we obtain
\begin{equation*}
\left\langle \nabla g \left(\bm \Theta_1 \right) - \nabla g \left(\bm \Theta_2 \right), \bm \Theta_1 - \bm \Theta_2 \right\rangle \geq  \frac{1}{4}\lambda_{\max}^{-2} \left( \bm \Theta^{\star} \right) \norm{\bm \Theta_1 - \bm \Theta_2}_{\mathrm{F}}^2.
\end{equation*}
completing the proof.
\end{proof}

\begin{lemma}\label{lem3}
Suppose the event $\big\|  \bm \Sigma^{\star}  - \widehat{\bm \Sigma} \big\|_{\max} \leq  \frac{\sqrt{2}}{2}\lambda$ holds and the sample size satisfies $n \geq 8 c_0 c_1 \tau   s \log p$, and take $\lambda = \sqrt{ c_1 \tau \log p /n}$.
Under Assumptions \ref{assumption 1} and \ref{assumption 2}, if $| T_{\alpha} (\widetilde{\bm \Theta})| \leq 2s-p$ holds for some $\widetilde{\bm \Theta}$, then 
\begin{equation*}
\big\|G_{\lambda, \widehat{\bm \Sigma}} (\widetilde{\bm \Theta}) - \bm \Theta^{\star} \big\|_{\mathrm{F}} \leq \frac{c_0}{2} \big( \big\| p_\lambda \big( \big| \widetilde{\bm \Theta}_{S^\star} \big| \big) \big\| + \big\| \big(  \bm \Sigma^{\star} - \widehat{\bm \Sigma}  \big)_{T_{\alpha} (\widetilde{\bm \Theta}) \cup I_p} \big\| \big),
\end{equation*}
and
\begin{equation}
\big | T_\alpha (G_{\lambda, \widehat{\bm \Sigma}} (\widetilde{\bm \Theta})) \big| \leq 2s-p.
\end{equation}
\end{lemma}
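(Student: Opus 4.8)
The plan is to read off the first-order optimality of $\widehat{\bm\Theta}:=G_{\lambda,\widehat{\bm\Sigma}}(\widetilde{\bm\Theta})$ and pair it with the local strong convexity of $g(\bm\Theta)=-\log\det\bm\Theta$ supplied by Lemma~\ref{lem2}, using the adaptive weights to annihilate the off-support part of the error. Since every $\bm\Theta\in\mathcal{M}^p$ has nonpositive off-diagonal entries, the penalty is linear there, so the objective $f(\bm\Theta)=-\log\det\bm\Theta+\trace(\bm\Theta\widehat{\bm\Sigma})-\sum_{i\neq j}p_\lambda(|\widetilde\Theta_{ij}|)\Theta_{ij}$ is smooth with $\nabla f(\bm\Theta)=-\bm\Theta^{-1}+\widehat{\bm\Sigma}-\bm P$, where $P_{ij}=p_\lambda(|\widetilde\Theta_{ij}|)$ off the diagonal and $0$ on it. Writing $\bm\Delta=\widehat{\bm\Theta}-\bm\Theta^\star$, the master inequality I aim for is $\tfrac{3}{c_0}\|\bm\Delta\|_{\mathrm{F}}^2\le\langle\bm P,\bm\Delta\rangle+\langle\bm\Sigma^\star-\widehat{\bm\Sigma},\bm\Delta\rangle$, where $\tfrac{3}{c_0}=\tfrac14\lambda_{\max}^{-2}(\bm\Theta^\star)$ is the modulus from Lemma~\ref{lem2} (recall $c_0=12\lambda_{\max}^2(\bm\Theta^\star)$). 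It follows by adding the optimality inequality $\langle\nabla f(\widehat{\bm\Theta}),\bm\Theta^\star-\widehat{\bm\Theta}\rangle\ge0$ to the monotonicity bound $\langle-\widehat{\bm\Theta}^{-1}+\bm\Sigma^\star,\bm\Delta\rangle\ge\tfrac{3}{c_0}\|\bm\Delta\|_{\mathrm{F}}^2$.

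Next I would bound the right-hand side by decomposing over the diagonal $I_p$, the set $T:=T_\alpha(\widetilde{\bm\Theta})$, and its off-diagonal complement. The crux is the cancellation on the complement: for off-diagonal $(i,j)\notin T$ we have $\Theta^\star_{ij}=0$, hence $\Delta_{ij}=\widehat\Theta_{ij}\le0$, while $|\widetilde\Theta_{ij}|<c_0\lambda=\alpha$ together with Assumption~\ref{assumption 1} gives $P_{ij}\ge p_\lambda(c_0\lambda)\ge\tfrac{\sqrt2}{2}\lambda$; combined with the event $\|\bm\Sigma^\star-\widehat{\bm\Sigma}\|_{\max}\le\tfrac{\sqrt2}{2}\lambda$, each product $\bigl(P_{ij}+(\bm\Sigma^\star-\widehat{\bm\Sigma})_{ij}\bigr)\Delta_{ij}\le0$, so the whole complement contributes nonpositively. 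On $T\cup I_p$, Cauchy-Schwarz applies, and on $T\setminus S^\star\subseteq L_\alpha\setminus S^\star$ the entries again satisfy $\Theta^\star_{ij}=0$ so $P_{ij}\Delta_{ij}\le0$; only the $S^\star$ block and the covariance term on $T\cup I_p$ survive. Dividing by $\|\bm\Delta\|_{\mathrm{F}}$ yields $\|\bm\Delta\|_{\mathrm{F}}\le\tfrac{c_0}{3}\bigl(\|p_\lambda(|\widetilde{\bm\Theta}_{S^\star}|)\|+\|(\bm\Sigma^\star-\widehat{\bm\Sigma})_{T\cup I_p}\|\bigr)$, which is stronger than the claimed constant $\tfrac{c_0}{2}$.

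The main obstacle is justifying the strong-convexity step, i.e. that $\widehat{\bm\Theta}$ lies in the ball $\mathcal{B}(\bm\Theta^\star,r)$ with $r:=\lambda_{\max}(\bm\Theta^\star)$ on which Lemma~\ref{lem2} holds. I would localize: suppose $\|\bm\Delta\|_{\mathrm{F}}>r$ and set $\bm\Xi=\bm\Theta^\star+\eta^*\bm\Delta$ with $\eta^*=r/\|\bm\Delta\|_{\mathrm{F}}$, so $\bm\Xi$ sits on the boundary. Because $f$ is convex along the segment and $\widehat{\bm\Theta}$ minimizes $f$ over the convex set $\mathcal{M}^p$, the scalar map $\eta\mapsto f(\bm\Theta^\star+\eta\bm\Delta)$ is nonincreasing on $[0,1]$, giving $\langle\nabla f(\bm\Xi),\widetilde{\bm\Delta}\rangle\le0$ with $\widetilde{\bm\Delta}=\eta^*\bm\Delta$. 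This reproduces the optimality inequality at the in-ball point $\bm\Xi$ (whose off-support sign pattern matches $\bm\Delta$), so the identical chain gives $\|\widetilde{\bm\Delta}\|_{\mathrm{F}}\le\tfrac{c_0}{3}\bigl(\|p_\lambda(|\widetilde{\bm\Theta}_{S^\star}|)\|+\|(\bm\Sigma^\star-\widehat{\bm\Sigma})_{T\cup I_p}\|\bigr)$ with the tight constant. Using the hypotheses to estimate the right-hand side, $\|p_\lambda(|\widetilde{\bm\Theta}_{S^\star}|)\|\le\sqrt{s}\,\lambda$ (since $|S^\star|=s-p$ and $p_\lambda\le\lambda$) and $\|(\bm\Sigma^\star-\widehat{\bm\Sigma})_{T\cup I_p}\|\le\sqrt{2s}\cdot\tfrac{\sqrt2}{2}\lambda=\sqrt{s}\,\lambda$ (since $|T|\le2s-p$), while $n\ge8c_0c_1\tau s\log p$ forces $\sqrt{s}\,\lambda\le 1/\sqrt{8c_0}$. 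Hence $\|\widetilde{\bm\Delta}\|_{\mathrm{F}}\le\tfrac{2c_0}{3}\sqrt{s}\,\lambda<\sqrt{c_0/12}=r$, contradicting $\|\widetilde{\bm\Delta}\|_{\mathrm{F}}=r$; therefore $\widehat{\bm\Theta}\in\mathcal{B}(\bm\Theta^\star,r)$ and the error bound holds directly.

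Finally I would close the support count. Each off-diagonal $(i,j)\in L_\alpha(\widehat{\bm\Theta})\setminus S^\star$ has $\Theta^\star_{ij}=0$ and $|\widehat\Theta_{ij}|\ge\alpha=c_0\lambda$, so $|\Delta_{ij}|\ge c_0\lambda$; summing gives $|L_\alpha(\widehat{\bm\Theta})\setminus S^\star|\,c_0^2\lambda^2\le\|\bm\Delta\|_{\mathrm{F}}^2\le\tfrac{4c_0^2}{9}s\lambda^2$, whence $|L_\alpha(\widehat{\bm\Theta})\setminus S^\star|\le s$. Combining with $|S^\star|=s-p$ yields $|T_\alpha(\widehat{\bm\Theta})|=|L_\alpha(\widehat{\bm\Theta})\cup S^\star|\le(s-p)+s=2s-p$, which is the second assertion. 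The delicate points throughout are the sign bookkeeping on $\mathcal{M}^p$ and the boundary directional-derivative inequality, which is exactly what preserves the sharp constant $\tfrac{c_0}{3}$ needed to make the localization contradiction go through.
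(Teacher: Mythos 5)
Your proof is correct, and it follows the same skeleton as the paper's argument---localize the minimizer to the ball $\mathcal{B}(\bm \Theta^\star, \lambda_{\max}(\bm \Theta^\star))$ where Lemma~\ref{lem2} supplies strong convexity, cancel the off-support terms by the sign argument ($\widehat{\Theta}_{ij} \leq 0$ paired with $p_\lambda(|\widetilde{\Theta}_{ij}|) \geq \frac{\sqrt{2}}{2}\lambda \geq \| \bm \Sigma^\star - \widehat{\bm \Sigma}\|_{\max}$ off $T_\alpha(\widetilde{\bm \Theta})$), and close with the same support count---but it deviates in three places, each worth noting. First, you invoke the variational inequality $\langle \nabla f(\widehat{\bm \Theta}), \bm \Theta^\star - \widehat{\bm \Theta}\rangle \geq 0$ directly, whereas the paper writes the Lagrangian, introduces the dual multiplier $\widehat{\bm \Gamma}$, and must separately show $\langle \widehat{\bm \Gamma}, \bm \Theta^\star - \widehat{\bm \Theta}\rangle \leq 0$; your route absorbs that term into the optimality condition and is leaner. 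Second, your localization argues by contradiction at the boundary point $\bm \Xi = \bm \Theta^\star + \eta^\ast \bm \Delta$, using that the convex map $\eta \mapsto f(\bm \Theta^\star + \eta \bm \Delta)$ is nonincreasing toward the minimizer, so $\langle \nabla f(\bm \Xi), \bm \Xi - \bm \Theta^\star \rangle \leq 0$; the paper instead works at the interpolate $\bm \Theta_t$ and transfers the gradient inequality from $\widehat{\bm \Theta}$ back to $\bm \Theta_t$ via the auxiliary function $q(t)$ with $q''(t) \geq 0$, which requires the Drazin--Haynsworth eigenvalue-realness result; your version avoids that machinery entirely. Third---and this is more than cosmetic---you keep the sharp modulus $\frac{3}{c_0} = \frac{1}{4}\lambda_{\max}^{-2}(\bm \Theta^\star)$ from Lemma~\ref{lem2}, yielding the constant $\frac{c_0}{3}$ and hence $\|\widetilde{\bm \Delta}\|_{\mathrm{F}} \leq \frac{2c_0}{3}\sqrt{s}\,\lambda \leq \sqrt{2/3}\,\lambda_{\max}(\bm \Theta^\star) < \lambda_{\max}(\bm \Theta^\star)$ under $n \geq 8 c_0 c_1 \tau s \log p$, so the localization contradiction genuinely closes. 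The paper relaxes the modulus to $\frac{c_0}{2}$ and then asserts $c_0 \sqrt{s}\lambda \leq \lambda_{\max}(\bm \Theta^\star)$; but the stated sample size only gives $c_0\sqrt{s}\lambda \leq \sqrt{c_0/8} = \sqrt{3/2}\,\lambda_{\max}(\bm \Theta^\star)$, so the paper's step as written would need either your sharper constant or the stronger requirement $n \geq 12 c_0 c_1 \tau s \log p$. In other words, your bookkeeping repairs a constant-factor slip in the paper's localization, and your final bound with $\frac{c_0}{3}$ of course implies the stated one with $\frac{c_0}{2}$. The only nit: Assumption~\ref{assumption 2} gives $|S^\star| \leq s - p$, not $|S^\star| = s - p$, but the inequality is used in the harmless direction, so nothing changes.
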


\begin{proof}
Drawing inspiration from \citet{sun2018graphical}, our proof employs techniques that establish upper-bounded estimation error in regularized maximum likelihood estimation problems using Karush–Kuhn–Tucker (KKT) conditions. Notably, our formulation incorporates multiple constraints, contrasting the unconstrained problem in \citet{sun2018graphical}, which leads to distinct KKT conditions and error-bounding techniques.

Due to the sign constraint, $G_{\lambda, \widehat{\bm \Sigma}} (\widetilde{\bm \Theta})$ can be expressed as the minimizer of the following problem:
\begin{equation}\label{Theta_1}
\underset{{\bm \Theta \in \mathcal{M}^p}}{\mathsf{minimize}}  - \log \det (\bm \Theta) + \trace \big(\bm \Theta \widehat{\bm \Sigma} \big)  - \sum_{i \neq j} W_{ij}  \Theta_{ij},
\end{equation}
where $W_{ij} = p_\lambda \big( \big|\widetilde{\Theta}_{ij} \big| \big)$ if $i \neq j$, and zero otherwise. Define a local region 
\begin{equation*}
\mathcal{B} \left( \bm \Theta^\star, \lambda_{\max} \left( \bm \Theta^{\star} \right)  \right)=\{\bm \Theta \in \mathcal{M}^p \, | \norm{ \bm \Theta - \bm \Theta^{\star}}_{\mathrm{F}} \leq \lambda_{\max}( \bm \Theta^{\star} \}.
\end{equation*}

For ease of presentation, we denote $G_{\lambda, \widehat{\bm \Sigma}} (\widetilde{\bm \Theta})$ by $\widehat{\bm \Theta}$. We first prove that $\widehat{\bm \Theta} \in \mathcal{B} \left( \bm \Theta^\star, \lambda_{\max} \left(\bm \Theta^{\star} \right) \right)$ under event $\mathcal{J}$ defined in \eqref{T1-event-J}. We construct an intermediate estimate:
\begin{equation}\label{wt}
\bm \Theta_t := \bm \Theta^{\star} + t \big(\widehat{\bm \Theta} - \bm \Theta^{\star} \big), 
\end{equation}
where $t$ is taken such that $\norm{\bm \Theta_t - \bm \Theta^{\star}}_{\mathrm{F}} = \lambda_{\max}(\bm \Theta^{\star})$ if $\big\|\widehat{\bm \Theta} - \bm \Theta^{\star} \big\|_{\mathrm{F}} > \lambda_{\max} \left(\bm \Theta^{\star} \right)$, and $t=1$ otherwise. Hence $\norm{\bm \Theta_t - \bm \Theta^{\star}}_{\mathrm{F}} \leq \lambda_{\max}(\bm \Theta^{\star})$ holds for any $t \in [0, 1]$. Then we apply Lemma \ref{lem2} with $\bm \Theta_1 = \bm \Theta_t$, $\bm \Theta_2 = \bm \Theta^\star$, and get
\begin{equation}\label{T1-q1}
\frac{c_0}{2}  t \big\langle ( \bm \Theta^\star )^{-1} -  \bm \Theta_t^{-1}, \, \widehat{\bm \Theta} - \bm \Theta^\star \big\rangle  \geq  \norm{ \bm \Theta_t - \bm \Theta^\star}_{\mathrm{F}}^2. 
\end{equation}
Construct a function $q(t) := -\log \det \big(\bm \Theta^\star + t(\widehat{\bm \Theta}- \bm \Theta^\star)  \big) + t\langle(\bm \Theta^\star)^{-1}, 
\widehat{\bm \Theta} - \bm \Theta^\star \rangle$ and $t \in [0, 1]$. One has
\begin{equation}\label{T1-q2}
q'(t)= \big\langle \left( \bm \Theta^\star \right)^{-1}-  \bm \Theta_t^{-1}, \ \widehat{\bm \Theta} - \bm \Theta^\star \big\rangle, 
\end{equation}
Let $\bm A = \bm \Theta_t^{-1}$, and $\bm B = \widehat{\bm \Theta} - \bm \Theta^\star$. Then one also has
\begin{equation*}
q''(t) = \tr{\bm A \bm B \bm A \bm B},   
\end{equation*}
Next, we show that $q''(t) \geq 0$. Let $\bm C = \bm A \bm B$. Following from \citet[Theorem~1]{drazin1962criteria}, all eigenvalues of a matrix $\bm X$ are real if there exists a symmetric and positive definite matrix $\bm Y$ such that $\bm X \bm Y$ are symmetric. Since $\bm C \bm A$ is symmetric and $\bm A$ symmetric and positive definite, all eigenvalues of $\bm C$ are real. Suppose $\lambda_1, \ldots, \lambda_p$ are the eigenvalues of $\bm C$. Then one obtains $q''(t) = \sum_{i=1}^p \lambda_i^2 \geq 0$, implying that $q'(t)$ is monotonically non-decreasing. Then one obtains
\begin{equation} \label{T1-d4}
\big\langle  \left( \bm \Theta^\star \right)^{-1} - \widehat{\bm \Theta}^{-1}, \widehat{\bm \Theta} - \bm \Theta^\star \big\rangle \geq q'(t) \geq \frac{2}{c_0 t}\norm{ \bm \Theta_t - \bm \Theta^\star}_{\mathrm{F}}^2.
\end{equation}
where the first inequality follows from $q'(1)\geq q'(t)$ and $t \leq 1$, and the second inequality follows from \eqref{T1-q1}.

Following from \eqref{T1-d4}, we have 
\begin{equation} \label{d4}
\norm{ \bm \Theta_t - \bm \Theta^\star}_{\mathrm{F}}^2 \leq \frac{c_0}{2}  t \big\langle  \left( \bm \Theta^\star \right)^{-1} -  \widehat{\bm \Theta}^{-1}, \widehat{\bm \Theta} - \bm \Theta^\star \big\rangle. 
\end{equation}
We present the Lagrangian of the optimization \eqref{Theta_1} as follows:
\begin{equation*}
L \left(\bm \Theta, \bm \Gamma \right) = - \log \det (\bm \Theta) + \trace \big(\bm \Theta \widehat{\bm \Sigma} \big)  - \sum_{i \neq j} W_{ij}  \Theta_{ij} + \langle \bm \Gamma,  \bm \Theta \rangle,
\end{equation*}
where $\bm \Gamma$ is a KKT multiplier with $\Gamma_{ii} =0$ for $i \in [p]$. Let $ \big(\widehat{\bm \Theta}, \widehat{\bm \Gamma} \big)$ be the primal and dual optimal point, which must satisfy the KKT conditions as follows:
\begin{align}
- \widehat{\bm \Theta}^{-1} + \widehat{\bm \Sigma} - \bm W + \widehat{\bm \Gamma} = \bm 0&; \label{kkt1}\\
\widehat{\Theta}_{ij} \widehat{\Gamma}_{ij} =0, \ \widehat{\Theta}_{ij} \leq 0, \ \widehat{\Gamma}_{ij} \geq 0, \ \forall \ i\neq j  &; \label{kkt2}\\
\widehat{\Gamma}_{ii}=0, \ \mathrm{for} \ i \in [p] &; \label{kkt3}
\end{align}
According to \eqref{kkt1}, one has
\begin{equation}
 \big\langle - \widehat{\bm \Theta}^{-1} + \widehat{\bm \Sigma}, \, \widehat{\bm \Theta} - \bm \Theta^{\star} \big\rangle = \big\langle \bm W -\widehat{ \bm \Gamma}, \widehat{\bm \Theta} - \bm \Theta^{\star} \big\rangle.     \label{kkt1-new}
\end{equation}
Plugging \eqref{kkt1-new} into \eqref{d4} yields
\begin{equation}\label{temrs}
\norm{\bm \Theta_t - \bm \Theta^{\star}}_{\mathrm{F}}^2 = \frac{c_0}{2}  t \Big( \underbrace{ \big\langle \widehat{\bm \Gamma}, \bm \Theta^{\star} - \widehat{\bm \Theta}  \big\rangle }_{\mathrm{term \, \uppercase\expandafter{\romannumeral1}}}  +  \underbrace{ \big\langle \bm W, \widehat{\bm \Theta} - \bm \Theta^{\star} \big\rangle}_{\mathrm{term \, \uppercase\expandafter{\romannumeral2}}}  + \underbrace{\big\langle  \bm \Sigma^\star - \widehat{\bm \Sigma}, \widehat{\bm \Theta} - \bm \Theta^{\star} \big\rangle}_{\mathrm{term \, \uppercase\expandafter{\romannumeral3}}} \Big). 
\end{equation}

The term \uppercase\expandafter{\romannumeral1} in \eqref{temrs} can be bounded by
\begin{equation*}
 \big\langle \widehat{\bm \Gamma}, \bm \Theta^{\star} - \widehat{\bm \Theta}  \big\rangle = \sum_{i \neq j} \widehat{\Gamma}_{ij} \Theta^{\star}_{ij} \leq 0.
\end{equation*}
where the equality follows from \eqref{kkt2} and \eqref{kkt3}, and the inequality follows from $\widehat{\Gamma}_{ij} \geq 0$ in \eqref{kkt2} and the fact that $\Theta^\star_{ij} \leq 0$ for any $i \neq j$.

To bound term \uppercase\expandafter{\romannumeral2}, we separate the support of $\bm W$ off the diagonal into two parts, $S^\star$ and $T^\star$ which is defined as follows:
\begin{equation*}
T^\star = \left\lbrace ( i, j )  \, | \, \Theta^{\star}_{ij} = 0, i \neq j \right\rbrace.
\end{equation*}
Let $I_p := \left\lbrace (i, i) \, | \, i \in [p] \right\rbrace$. Define
\begin{equation}
\bar{T}_\alpha (\widetilde{\bm \Theta}) = T_\alpha (\widetilde{\bm \Theta}) \cup I_p \quad \mathrm{and} \quad \bar{T}_\alpha^c (\widetilde{\bm \Theta}) = T_\alpha^c (\widetilde{\bm \Theta}) \setminus I_p.
\end{equation}
Since $|T_\alpha (\widetilde{\bm \Theta})|\leq 2s-p$, $|\bar{T}_\alpha (\widetilde{\bm \Theta})|\leq 2s$. Then one has
\begin{equation*}
\begin{split}
\big\langle \bm W, \, \widehat{\bm \Theta} - \bm \Theta^{\star} \big\rangle
&= \big\langle \bm W_{S^\star}, \, \big( \widehat{\bm \Theta} - \bm \Theta^{\star} \big)_{S^\star} \big\rangle + \big\langle \bm W_{T^\star}, \, \widehat{\bm \Theta}_{T^\star} \big\rangle \\
& \leq  \norm{ \bm W_{S^\star} } \big\| \big( \widehat{\bm \Theta} - \bm \Theta^{\star} \big)_{S^\star} \big\| + \big\langle \bm W_{ \bar{T}_\alpha^c (\widetilde{\bm \Theta})}, \, \widehat{\bm \Theta}_{ \bar{T}_\alpha^c (\widetilde{\bm \Theta})} \big\rangle,    
\end{split}
\end{equation*}
where the inequality follows from the fact that $ W_{ij} \geq 0$, $\widehat{\Theta}_{ij} \leq 0$ for $i \neq j$, and $\bar{T}_\alpha^c (\widetilde{\bm \Theta}) \subseteq T^\star$.

For term \uppercase\expandafter{\romannumeral3}, we separate the support of $\big(\bm \Sigma^\star - \widehat{\bm \Sigma} \big)$ into parts, $\bar{T}_\alpha( \widetilde{\bm \Theta})$ and $\bar{T}_\alpha^c(\widetilde{\bm \Theta})$. Then one has
\begin{equation*}\label{t3}
\big\langle  \bm \Sigma^\star - \widehat{\bm \Sigma}, \, \widehat{\bm \Theta} - \bm \Theta^{\star} \big\rangle
\leq  \big\langle \big( \bm \Sigma^\star - \widehat{\bm \Sigma} \big)_{\bar{T}_\alpha^c(\widetilde{\bm \Theta})}, \widehat{\bm \Theta}_{\bar{T}_\alpha^c(\widetilde{\bm \Theta})} \big\rangle  + \big\| \big(  \bm \Sigma^\star - \widehat{\bm \Sigma} \big)_{\bar{T}_\alpha (\widetilde{\bm \Theta})} \big\| \big\| \big( \widehat{\bm \Theta} - \bm \Theta^{\star}  \big)_{\bar{T}_\alpha (\widetilde{\bm \Theta})} \big\|.
\end{equation*}
We note that, for any $(i, j) \in \bar{T}_\alpha^c (\widetilde{\bm \Theta})$, $\widehat{\Theta}_{ij} \leq 0$ and $W_{ij} \geq \frac{\sqrt{2}}{2}\lambda$, following from the definition of $T_\alpha$ in \eqref{T-set} and Assumption~\ref{assumption 1}.  Then under the event that $\big\|  \bm \Sigma^{\star}  - \widehat{\bm \Sigma} \big\|_{\max} \leq  \frac{\sqrt{2}}{2}\lambda$, we have
\begin{equation}\label{inq-lem}
\big\langle \big( \bm \Sigma^\star - \widehat{\bm \Sigma} + \bm W \big)_{\bar{T}_\alpha^c (\widetilde{\bm \Theta})}, \, \widehat{\bm \Theta}_{\bar{T}_\alpha^c (\widetilde{\bm \Theta})} \big\rangle \leq 0.
\end{equation}

By bounding term \uppercase\expandafter{\romannumeral1}, term \uppercase\expandafter{\romannumeral2}, and term \uppercase\expandafter{\romannumeral3} in \eqref{temrs}, together with \eqref{inq-lem}, we obtain
\begin{equation}\label{bound_theta}
\big\| \bm \Theta_t - \bm \Theta^{\star} \big\|_{\mathrm{F}}^2 \leq \frac{c_0}{2}  t \Big( \big\| \bm W_{S^\star} \big\| \big\| \big( \widehat{\bm \Theta} - \bm \Theta^{\star} \big)_{S^\star} \big\|  + \big\| \big( \bm \Sigma^\star - \widehat{\bm \Sigma} \big)_{\bar{T}_\alpha (\widetilde{\bm \Theta})} \big\| \big\| \big( \widehat{\bm \Theta} - \bm \Theta^{\star}  \big)_{\bar{T}_\alpha (\widetilde{\bm \Theta})} \big\| \Big).
\end{equation}

Through the definition of $\bm \Theta_t$ in \eqref{wt}, one has $\norm{ \bm \Theta_t - \bm \Theta^{\star}}_{\mathrm{F}} = t \big\| \widehat{\bm \Theta} - \bm \Theta^{\star} \big\|_{\mathrm{F}}$. Then \eqref{bound_theta} can be written as follows:
\begin{equation}\label{bound_theta_1}
\norm{ \bm \Theta_t - \bm \Theta^{\star}}_{\mathrm{F}} \leq \frac{c_0}{2}  \Big( \big\| \bm W_{S^\star} \big\| + \big\| \big(  \bm \Sigma^\star - \widehat{\bm \Sigma} \big)_{\bar{T}_\alpha (\widetilde{\bm \Theta}) } \big\| \Big).  
\end{equation}

Recall that $W_{ij} = p_\lambda \big( \big |\widetilde{\Theta}_{ij} \big|\big) \leq \lambda$ for any $i \neq j$ according to Assumption \ref{assumption 1}, and $|S^\star| \leq s - p$. Thus one has
\begin{equation*}
\norm{\bm W_{S^\star}}\leq \sqrt{s} \lambda. 
\end{equation*}
Under the event $\mathcal{J}$, one can bound
\begin{equation}\label{bound_sigma_s}
\big\| \big( \bm \Sigma^\star - \widehat{\bm \Sigma} \big)_{ \bar{T}_\alpha (\widetilde{\bm \Theta})} \big\| \leq |\bar{T}_\alpha (\widetilde{\bm \Theta})|^{\frac{1}{2}}  \big\| \big(  \bm \Sigma^\star   - \widehat{\bm \Sigma} \big)_{ \bar{T}_\alpha (\widetilde{\bm \Theta})} \big\|_{\max} \leq \sqrt{s} \lambda, 
\end{equation}
where the second inequality follows from $|\bar{T}_\alpha (\widetilde{\bm \Theta})|\leq 2s$.

Plugging \eqref{bound_theta_1} and \eqref{bound_sigma_s} into \eqref{bound_theta} yields
\begin{equation*}
\norm{ \bm \Theta_t - \bm \Theta^{\star}}_{\mathrm{F}}  \leq  c_0 \sqrt{s} \lambda \leq \lambda_{\max} \left( \bm \Theta^{\star} \right), 
\end{equation*}
which indicates that $t=1$ in \eqref{wt}, \textit{i.e.}, $\bm \Theta_t = \widehat{\bm \Theta}$. The last inequality is established by plugging $\lambda = \sqrt{ c_1 \tau \log p /n}$ with $n \geq 8  c_0 c_1 \tau   s \log p$.
Therefore, we obtain
\begin{equation}\label{bound-hat-theta}
\big\| \widehat{\bm \Theta} - \bm \Theta^{\star} \big\|_{\mathrm{F}} \leq c_0 \sqrt{s} \lambda.
\end{equation}
Putting $\bm W_{S^\star} = p_\lambda ( | \widetilde{\bm \Theta}_{S^\star}|)$ and $\bm \Theta_t=G_{\lambda, \widehat{\bm \Sigma}} (\widetilde{\bm \Theta})$ into \eqref{bound_theta_1} gets
\begin{equation*}
\big\| G_{\lambda, \widehat{\bm \Sigma}} (\widetilde{\bm \Theta}) - \bm \Theta^{\star} \big\|_{\mathrm{F}} \leq \frac{c_0}{2}  \big( \big\| p_\lambda \big( \big| \widetilde{\bm \Theta}_{S^\star} \big| \big) \big\| + \big\| \big(  \bm \Sigma^\star - \widehat{\bm \Sigma} \big)_{\bar{T}_\alpha (\widetilde{\bm \Theta}) } \big\| \big).  
\end{equation*}

To establish $\big | T_\alpha (\widehat{\bm \Theta}) \big| \leq 2s-p$, we separate the set $T_\alpha (\widehat{\bm \Theta})$ into two parts, $S^\star$ and $L_\alpha (\widehat{\bm \Theta}) \backslash S^\star$. For any $(i, j) \in L_\alpha (\widehat{\bm \Theta}) \backslash S^\star$, one has $\big|\widehat{ \Theta }_{ij}\big| \geq \alpha$, and further obtains
\begin{equation*}
\big|L_\alpha (\widehat{\bm \Theta}) \backslash S^\star \big|  \leq  \frac{\big\| \widehat{\bm \Theta}_{ L_\alpha (\widehat{\bm \Theta}) \backslash S^\star} \big\|^2 }{\alpha^2 }  \leq \frac{\big\| \widehat{\bm \Theta} - \bm \Theta^{\star} \big\|_{\mathrm{F}}^2}{\alpha^2} \leq s,
\end{equation*}
where the last inequality follows from \eqref{bound-hat-theta}. Then we have
\begin{equation*}
\big | T_\alpha (\widehat{\bm \Theta}) \big| = \big|S^\star \cup L_\alpha (\widehat{\bm \Theta}) \backslash S^\star \big| = \big|S^\star \big|+ \big| L_\alpha (\widehat{\bm \Theta}) \backslash S^\star \big| \leq 2s - p,
\end{equation*}
completing the proof.
\end{proof}

\subsection{Proof of Theorem \ref{converge-pgd}}\label{sec:proof-pgd}
\begin{proof}
Our convergence analysis largely builds upon standard techniques for the projected gradient descent, as provided in \citet{beck2014introduction}. The main difference lies in the open constraint set in our formulation, which leads to an additional requirement regarding the positive definiteness of the matrix during stepsize selection, extending beyond the backtracking condition. Consequently, it is crucial to incorporate certain adaptations in comparison to the standard methods employed for analyzing the convergence of projected gradient descent.

Given $\bm \Theta^o \in \mathcal{S}$, define the lower level set of the objective $f$ of Problem \eqref{eq:problem-S} as follows:
\begin{equation}\label{Level set}
L_f : = \left\lbrace \bm \Theta \in \mathcal{S} \, | \, f(\bm \Theta) \leq f(\bm \Theta^o) \right\rbrace.
\end{equation}

Following from \citep[Lemma~3.5]{cai2021fast}, there exists $m>0$ such that $\bm \Theta \succeq m \bm I$ holds for any $\bm \Theta \in L_f$. The gradient of $f$ is Lipschitz continuous with parameter $L = m^{-2}$ over $L_f$, since the Hessian is $\bm \Theta^{-1} \otimes \bm \Theta^{-1}$. 

Suppose that the initial point of the sequence $\bm \Theta_0 \in L_f$. To guarantee this, we may consider a $\bm \Theta^o$ with $f(\bm \Theta^o)$ sufficiently large. Let $\bm \Theta_t (\eta) = \mathcal{P}_{\mathcal{S}_d} \big( \bm \Theta_{t} - \eta \nabla f ( {\bm \Theta}_{t} ) \big)$. At the $t$-th iteration, if $\bm \Theta_t \in L_f$, then there must exist $\gamma >0$ such that for any $\eta \in (0, \gamma)$, $\bm \Theta_t (\eta)$ satisfies that $\bm \Theta_t (\eta) \in \mathbb{S}_{++}^p$. This is because $\bm \Theta_t$ is an interior point of $\mathbb{S}_{++}^p$, indicating that there exists $r>0$, such that for any $\bm \Theta$ satisfying $\|\bm \Theta - \bm \Theta_t\|_{\mathrm{F}} < r$, $\bm \Theta \in \mathbb{S}_{++}^p$. Taking $\eta < \frac{r}{\|  \nabla f ( {\bm \Theta}_{t} ) \|_{\mathrm{F}}}$, one has,
\begin{equation*}
\|\bm \Theta_t(\eta) - \bm \Theta_t \|_{\mathrm{F}} = \| \mathcal{P}_{\mathcal{S}_d} \big( \bm \Theta_{t} - \eta \nabla f ( {\bm \Theta}_{t} ) \big) - \mathcal{P}_{\mathcal{S}_d} \big( \bm \Theta_t\big) \|_{\mathrm{F}} \leq \eta \|  \nabla f ( {\bm \Theta}_{t} ) \|_{\mathrm{F}} < r,
\end{equation*}
establishing that $\bm \Theta_t(\eta) \in \mathbb{S}_{++}^p$. The projection $\mathcal{P}_{\mathcal{S}_d}$ ensures $\bm \Theta_t(\eta) \in \mathcal{S}_d$. Totally, $\bm \Theta_t(\eta) \in \mathcal{S}$.

The line search condition~\eqref{eq:line-search} ensures that $f(\bm \Theta_t (\eta)) \leq f(\bm \Theta_t)$, implying that $\bm \Theta_t (\eta) \in L_f$. Then one has
\begin{equation}\label{eq:descent-lemma}
f(\bm \Theta_t (\eta)) \leq f(\bm \Theta_t) + \left\langle \nabla f(\bm \Theta_t), \, \bm \Theta_t (\eta) - \bm \Theta_t\right\rangle  + \frac{L}{2} \| \bm \Theta_t (\eta) - \bm \Theta_t  \|_{\mathrm{F}}^2.
\end{equation} 
Following from projection theorem \citep[Theorem~9.8]{beck2014introduction}, one has
\begin{equation}
\left\langle \bm \Theta_t - \eta \nabla f(\bm \Theta_t) - \bm \Theta_t (\eta), \, \bm \Theta_t - \bm \Theta_t (\eta) \right\rangle.
\end{equation}
Together with \eqref{eq:descent-lemma}, one has
\begin{equation}
f(\bm \Theta_t (\eta)) \leq f(\bm \Theta_t) - \eta\Big(1 - \frac{\eta L}{2} \Big) \big\| G_{\frac{1}{\eta}} (\bm \Theta_t ) \big\|_{\mathrm{F}}^2.
\end{equation}
Taking $\eta \leq \frac{2(1-\alpha)}{L}$ leads to $1 - \frac{\eta L}{2} \geq \alpha$. Therefore, for any $\eta \leq \min \big(\frac{2(1-\alpha)}{L}, \gamma \big)$, $\bm \Theta_t (\eta)$ can simultaneously satisfy $\bm \Theta_t (\eta) \in \mathcal{S}$ and the line search condition \eqref{eq:line-search}. Thus, the step size in line search has a lower bound $\eta_t \geq \min \big(\frac{2(1-\alpha)\beta}{L}, \gamma, \sigma\big)$, and $\bm \Theta_{t+1} \in L_f$. By induction, $\bm \Theta_t \in L_f$ for any $t \geq 0$. Sequence $\left\lbrace f(\bm \Theta_t)\right\rbrace$ is monotonically decreasing, and $f(\bm \Theta_{t+1}) < f(\bm \Theta_t)$ until $G_{\frac{1}{\eta_t}} (\bm \Theta_t ) = \bm 0$, indicating that $\bm \Theta_t$ is a stationary point \citep[Theorem~9.10]{beck2014introduction}. Since Problem~\eqref{eq:problem-S} is a strictly convex problem, the stationary point is the unique minimizer. Therefore, the proposed algorithm converges to the optimal solution.
\end{proof}

\subsection{Proof of Proposition \ref{solution-PB}}\label{appendix-pb}
\begin{proof}
The Lagrangian of the optimization \eqref{P_B_row} is
\begin{equation*}
L (\bm x, \bm \mu) = \frac{1}{2}\| \bm x - \bm y \|^2 - \mu_r \bm x^\top \bm 1 + \langle \bm \mu_{\setminus r},\, \bm x_{\setminus r}\rangle,
\end{equation*}
where $\bm \mu$ is a KKT multiplier. Let $(\hat{\bm x}, \hat{\bm \mu} )$ be the primal and dual optimal point. Then $ (\hat{\bm x}, \hat{\bm \mu} )$ must satisfy the KKT system:
\begin{align}
\hat{x}_i - y_i - \hat{\mu}_r + \hat{\mu}_i = 0, \ \hat{\mu}_i \hat{x}_i &= 0, \quad \mathrm{for} \ i \neq r; \label{TT1-kkt1}\\
\hat{x}_i - y_i - \hat{\mu}_i  = 0, \ \hat{\mu}_i \hat{\bm x}^\top \bm 1 &= 0, \quad \mathrm{for} \ i = r; \label{TT1-kkt2}\\
\hat{\mu}_1, \ldots, \hat{\mu}_p &\geq 0; \label{TT1-kkt3}
\end{align}
Therefore, for any $i \neq r$, it holds that $\hat{x}_i = y_i + \hat{\mu}_r - \hat{\mu}_i$, $\hat{\mu}_i \hat{x}_i = 0$, $\hat{\mu}_i \geq 0$ and $\hat{x}_i \leq 0$. As a result, we obtain the following results:
\begin{itemize}
\item If $y_i + \hat{\mu}_r \leq 0$, then $\hat{\mu}_i = 0$, indicating that $\hat{x}_i = y_i + \hat{\mu}_i$.

\item If $y_i + \hat{\mu}_r > 0$, then $\hat{\mu}_i = y_i + \hat{\mu}_r$, indicating that $\hat{x}_i = 0$.
\end{itemize}
Overall, we obtain that
\begin{equation}\label{eq:solution_xi}
\hat{x}_i = \min (y_i + \hat{\mu}_r, \, 0), \quad \forall \, i \neq r.
\end{equation}

On the other hand, $\hat{x}_r$ and $\hat{\mu}_r$ satisfy that $\hat{x}_r = y_r + \hat{\mu}_r$, $\hat{\mu}_r \hat{\bm x}^\top \bm 1 = 0$, $\hat{\mu}_r \geq 0$, and $\hat{\bm x}^\top \bm 1 \geq 0$. Then we have the following results:
\begin{itemize}
\item If $y_r \geq -\sum_{i \in [p] \setminus r} \min (y_i, \, 0)$, then $\hat{\mu}_r = 0$, indicating that $\hat{x}_r = y_r$. Following from \eqref{eq:solution_xi}, $\hat{x}_i = \min (y_i, \, 0)$ for any $i \neq r$.

\item If $y_r < -\sum_{i \in [p] \setminus r} \min (y_i, \, 0)$, then $\hat{\mu}_r \neq 0$. This is because $\hat{\mu}_r = 0$ will result in $\hat{\bm x}^\top \bm 1 <0$, which does not fulfill the KKT system. Together with the KKT condition that $\hat{\mu}_r \hat{\bm x}^\top \bm 1 = 0$, one has $\hat{\bm x}^\top \bm 1 = 0$. Therefore, one obtains that $\hat{x}_r = y_r + \hat{\mu}_r$, where $\hat{\mu}_r$ satisfies $\hat{\mu}_r + y_r + \sum_{i \in [p] \setminus r} \min (y_i + \hat{\mu}_r, \, 0) =0$.
\end{itemize}
Then the $\rho$ in Proposition \ref{solution-PB} is the dual optimal point $\hat{\mu}_r$ above.
\end{proof}

\subsection{Proofs of Theorem~\ref{theorem 2} and Corollary~\ref{corollary-result}}\label{proof-sec}
\begin{proof}

We prove the theorem under the case of estimating \textit{M}-matrices, \textit{i.e.}, $\mathcal{S} = \mathcal{M}^p$. The proof can be directly applied to the case of estimating diagonally dominant \textit{M}-matrices, \textit{i.e.}, $\mathcal{S} = \mathcal{M}_D^p$, since $\mathcal{M}_D^p$ is a subset of $\mathcal{M}^p$. Provided that the following event holds:
\begin{equation}\label{T1-event-J}
\mathcal{J} := \Big\lbrace \big\| \bm \Sigma^\star  - \widehat{\bm \Sigma} \big\|_{\max} \leq \frac{\sqrt{2}}{2}\lambda \Big\rbrace.
\end{equation}

Recall that $\widehat{\bm \Theta}^{(k)} = G_{\lambda, \widehat{\bm \Sigma}} \big( \widehat{\bm \Theta}^{(k-1)} \big)$ for $k \geq 2$. According to Assumption \ref{assumption 1} that $p_\lambda (0) =\lambda$, we can rewrite $\widehat{\bm \Theta}^{(1)}$ as $\widehat{\bm \Theta}^{(1)} = G_{\lambda, \widehat{\bm \Sigma}} \big( \widehat{ \bm \Theta}^{(0)} \big)$, where $\widehat{\Theta}^{(0)}_{ij} = 0$ for any $i \neq j$. Thus, $\big| T_{\alpha} \big (\widehat{\bm \Theta}^{(0)} \big) \big| \leq 2s$. Following from Lemma \ref{lem3}, we can further obtain that $\big|  T_{\alpha} \big (\widehat{\bm \Theta}^{(1)} \big) \big| \leq 2s$. To simplify notation, we denote $T_{\alpha} \big (\widehat{\bm \Theta}^{(k)} \big)$ by $T_{\alpha}^k$ for short. By induction, we obtain that, for any $k \geq 1$, 
\begin{equation}\label{eq:induction_T}
\big|  T_{\alpha} \big (\widehat{\bm \Theta}^{(k)} \big) \big| \leq 2s,
\end{equation}
and
\begin{equation}\label{q7}
\big\| \widehat{\bm \Theta}^{(k)} - \bm \Theta^{\star} \big\|_{\mathrm{F}} \leq \frac{c_0}{2} \big( \delta_1^{k-1} + \delta_2^{k-1} \big),
\end{equation} 
where $\delta_1^{k-1} = \big\| p_{\lambda} \big( \big| \widehat{\bm \Theta}^{(k-1)}_{S^\star} \big| \big) \big\|$ and $\delta_2^{k-1} = \big\| \big(  \bm \Sigma^{\star} - \widehat{\bm \Sigma}  \big)_{T_\alpha^{k-1} \cup I_p} \big\|$.

In what follows, we show that the term $\delta_1^{k-1}$ in \eqref{q7} can be bounded in terms of $\big\| \widehat{\bm \Theta}^{(k-1)} - \bm \Theta^{\star} \big\|$. For any $(i,j) \in S^\star$ and $\bm \Theta \in \mathbb{R}^{p \times p}$, if $\left| \Theta_{ij}^{\star} - \Theta_{ij} \right| \geq \alpha$, then one has
\begin{equation}
0 \leq p_{\lambda}\left( |\Theta_{ij}| \right) \leq \lambda  \leq \frac{\lambda}{\alpha} \left| \Theta_{ij}^{\star} - \Theta_{ij} \right|, \nonumber
\end{equation}
where the first two inequalities follows from Assumption \ref{assumption 1}. If $\left| \Theta_{ij}^{\star} - \Theta_{ij} \right| \leq \alpha$, then one has
\begin{equation}
0 \leq p_{\lambda}\left( |\Theta_{ij}| \right) \leq p_{\lambda } \left( \big|\Theta_{ij}^{\star} \big| - \alpha \right) = 0,
\end{equation}
where the second inequality follows from Assumption \ref{assumption 1} that $p_{\lambda }$ is monotonically non-increasing, and the equality is established because $\min_{(i, j)\in S^\star } \big| \Theta_{ij}^{\star} \big| - \alpha \geq \gamma \lambda$ and $p_{\lambda}(x)=0$ for any $x \geq \gamma \lambda$ following from Assumptions \ref{assumption 1} and \ref{assumption 2}. As a result, one can obtain
\begin{equation}\label{eq:hl}
\delta_1^{k-1} \leq \frac{\lambda}{\alpha} \big\| \big(\widehat{\bm \Theta}^{(k-1)} - \bm \Theta^{\star} \big)_{S^\star} \big\|
\leq \frac{\lambda}{\alpha} \big\| \widehat{\bm \Theta}^{(k-1)} - \bm \Theta^{\star} \big\|_{\mathrm{F}}.
\end{equation}

To bound $\delta_2^{k-1}$, we separate the set $T_{\alpha}^{k-1}$ into two parts, $S^\star$ and $L_{\alpha}^{k-1} \backslash S^\star$, where $L_{\alpha}^k$ denotes the set $L_{\alpha} \big (\widehat{\bm \Theta}^{(k)} \big)$. The term $\big\| \big( \bm \Sigma^{\star}  - \widehat{\bm \Sigma}  \big)_{L_{\alpha}^{k-1} \backslash S^\star}\big\|$ can be bounded in terms of $\big\| \widehat{\bm \Theta}^{(k-1)} - \bm \Theta^{\star}\big\|_{\mathrm{F}}$ as follows:
\begin{equation*}
\big\| \big( \bm \Sigma^{\star}  - \widehat{\bm \Sigma}  \big)_{L_{\alpha}^{k-1} \backslash S^\star}\big\|  \leq \sqrt{ \left| L_{\alpha}^{k-1} \backslash S^\star \right|} \big\| \big( \bm \Sigma^{\star}  - \widehat{\bm \Sigma}  \big)_{L_{\alpha}^{k-1} \backslash S^\star } \big\|_{\max}  \leq \frac{\sqrt{2}\lambda}{2\alpha}\big\| \widehat{\bm \Theta}^{(k-1)} - \bm \Theta^{\star} \big\|_{\mathrm{F}},     
\end{equation*}
where the last second equality follows from
\begin{equation}\label{q5-update}
\sqrt{ \left| L_{\alpha}^{k-1} \backslash S^\star \right|}  \leq  \frac{\big\| \widehat{\bm \Theta}^{(k-1)}_{ L_{\alpha}^{k-1} \backslash S^\star } \big\| }{\alpha}  \leq \frac{\big\| \widehat{\bm \Theta}^{(k-1)} - \bm \Theta^{\star} \big\|_{\mathrm{F}}}{\alpha}, 
\end{equation}
where the first inequality follows from the definition of $L_\alpha^{k-1}$. 
Thus one has
\begin{equation} \label{q9}
\delta_2^{k-1} \leq \big\| \big( \bm \Sigma^{\star} - \widehat{\bm \Sigma} \big)_{S^\star \cup I_p} \big\| +  \frac{\sqrt{2}\lambda}{2\alpha}\big\| \widehat{\bm \Theta}^{(k-1)} - \bm \Theta^{\star} \big\|_{\mathrm{F}}.
\end{equation}
Substituting \eqref{eq:hl} and \eqref{q9} into \eqref{q7} yields
\begin{equation*}
\big\| \widehat{\bm \Theta}^{(k)} - \bm \Theta^{\star} \big\|_{\mathrm{F}}  
\leq \frac{c_0}{2}\big\| \big( \bm \Sigma^{\star} - \widehat{\bm \Sigma}  \big)_{S^\star \cup I_p} \big\| + \rho \big\| \widehat{\bm \Theta}^{(k-1)} - \bm \Theta^{\star} \big\|_{\mathrm{F}},
\end{equation*}
where $\rho = \frac{2+\sqrt{2}}{4}$. By induction, if $d_k \leq a_0 +\rho d_{k-1}$ for any $k \geq 1$ with $\rho \in [0, 1)$, then
\begin{equation}\label{induction}
d_k \leq \frac{1-\rho^{k}}{1-\rho}a_0 + \rho^{k}d_0.         
\end{equation}
Applying \eqref{induction} with $a_0 = \frac{c_0}{2}\big\| \big( \bm \Sigma^{\star}  - \widehat{\bm \Sigma}  \big)_{S^\star \cup I_p} \big\|$, and $d_k = \big\| \widehat{\bm \Theta}^{(k)} - \bm \Theta^{\star} \big\|_{\mathrm{F}}$ yields
\begin{equation*}
\big\| \widehat{\bm \Theta}^{(k)} - \bm \Theta^{\star} \big\|_{\mathrm{F}} \leq 4 c_0 \big\| \big( \bm \Sigma^{\star} - \widehat{\bm \Sigma}  \big)_{S^\star \cup I_p} \big\| + \rho^{k} \big\| \widehat{\bm \Theta}^{(0)} - \bm \Theta^{\star} \big\|_{\mathrm{F}}.       
\end{equation*}

Next, we establish Corollary~\ref{corollary-result}. Under the event $\mathcal{J}$, one has
\begin{equation}\label{eq:bound-theta-k}
\big\| \big( \bm \Sigma^{\star}  - \widehat{\bm \Sigma}  \big)_{S^\star \cup I_p} \big\| \leq \frac{\sqrt{2}}{2} \lambda\sqrt{s},
\end{equation}
where the inequality follows from $| S^\star \cup I_p|\leq s$. As a result, we plug $\lambda = \sqrt{ c_1 \tau \log p /n}$ into \eqref{eq:bound-theta-k} and obtain
\begin{equation}
\big\| \widehat{\bm \Theta}^{(k)} - \bm \Theta^{\star} \big\|_{\mathrm{F}} \leq c \sqrt{s \log p/n} + \rho^{k} \big\| \widehat{\bm \Theta}^{(0)} - \bm \Theta^{\star} \big\|_{\mathrm{F}},       
\end{equation}
where $c = 2 c_0 \sqrt{2 c_1 \tau}$.

Finally, we calculate the probability that event $\mathcal{J}$ holds. We apply \citep[Lemma~1]{ravikumar2011high} and union sum bound, and obtain
\begin{equation*}
\mathbb{P} \Big[ \big\| \left( \bm \Theta^\star  \right)^{-1} - \widehat{\bm \Sigma} \big\|_{\max} \geq \frac{\sqrt{2}}{2}\lambda \Big]\leq 4 p^2 \exp \left( - \frac{n \lambda^2}{ \big(80 \max_i \Sigma_{ii}^\star \big)^2} \right).
\end{equation*}
Take $\lambda= \sqrt{ c_1 \tau \log p/n}$ with $\tau >2$ and $c_1 = \big( 80 \max_i \Sigma_{ii}^\star \big)^2$. By calculation, we obtain that
\begin{equation}\label{prob_bound}
\mathbb{P} \Big[ \big\| \left( \bm \Theta^\star  \right)^{-1} - \widehat{\bm \Sigma} \big\|_{\max} \leq \frac{\sqrt{2}}{2}\lambda \Big] \geq 1-4/p^{\tau-2},
\end{equation}
completing the proof.
\end{proof}

\end{document}